\documentclass{article}
\def\arxivcover{}

\usepackage[final]{neurips_2026}
\AtBeginDocument{\newgeometry{left=1.2in,right=1.2in,top=0.85in,bottom=0.85in,
  headheight=14pt,headsep=14pt,footskip=24pt}}
\pdfoutput=1
\usepackage{etoolbox,verbatim}
\newtoggle{iclr}
\toggletrue{iclr}
\newtoggle{arxiv}
\togglefalse{arxiv}
\newtoggle{neurips}
\togglefalse{neurips}
\newtoggle{mehul}
\togglefalse{mehul}
\newtoggle{shashwat}
\togglefalse{shashwat}
\newtoggle{customthms}
\togglefalse{customthms}
\newtoggle{colt}
\togglefalse{colt}

\usepackage{xcolor}
\usepackage{colortbl} 
\usepackage{tcolorbox}
\tcbuselibrary{skins,breakable}

\tcbset{
  aibox/.style={
      width=\linewidth,
      top=6pt,
      bottom=0pt,
      left=1.5pt,
      right=1.5pt,
      colback=blue!60!gray!5,
      colframe=black,
      colbacktitle=black,
      enhanced,
      center,
      attach boxed title to top left={yshift=-0.1in,xshift=0.15in},
      boxed title style={boxrule=0pt,colframe=white,},
    }
}
\newtcolorbox{AIbox}[2][]{aibox,title=#2,#1}

\definecolor{primalcolor}{HTML}{A60000}
\definecolor{contrarycolor}{HTML}{00A6A6}
\definecolor{darkcontrarycolor}{HTML}{004C4C}
\definecolor{lightblue}{HTML}{2970CC}
\definecolor{lightpurple}{HTML}{673147}
\definecolor{ForestGreen}{HTML}{FF5733}
\definecolor{myred}{HTML}{AA4A44}
\definecolor{hyppurple}{HTML}{800080}

\newcommand{\linkcolor}{darkcontrarycolor}
\newcommand{\urlcolor}{darkcontrarycolor}
\newcommand{\citecolor}{darkcontrarycolor}

\newcommand{\thmcolordark}{red!30!black}
\newcommand{\goalblue}{blue!60!black}
\newcommand{\gccolor}[1]{{\color{\goalblue} #1}}

\newcommand{\termcolordark}{red!50!black}
\newcommand{\termbold}[1]{{\color{\termcolordark} {\textbf{#1}}}}
\newcommand{\moddcolor}{red!50!black}
\newcommand{\modcolor}[1]{{\color{\moddcolor} #1}}

\definecolor{sparsecolorr}{HTML}{2A6E4B}

\newcommand{\sparsecolor}[1]{{\color{sparsecolorr} #1}}

\definecolor{steelmancolorr}{HTML}{7E57C2}

\newcommand{\steelmancolor}[1]{{\color{steelmancolorr} #1}}
\newcommand{\steelmanx}[1]{\steelmancolor{\textmd{Steelman #1}}}

\usepackage[T1]{fontenc}
\usepackage{courier}
\usepackage{capt-of}
\usepackage[font=small,labelfont=bf]{caption}

\usepackage{natbib}
\usepackage{scalefnt}
\usepackage{amssymb,upgreek,bm}
\usepackage{mathtools}
\usepackage[english]{babel}  
\usepackage{multirow,tcolorbox,tikz-cd,turnstile,xspace,xparse}
\usetikzlibrary{arrows.meta,positioning}
\usepackage{relsize,breakcites,appendix}
\usepackage{longtable,tablefootnote,booktabs,float,makecell}
\usepackage[normalem]{ulem}
\usepackage{tabu}
\usepackage[shortlabels]{enumitem} 
\usepackage{footnote}
\usepackage{boxedminipage}
\usepackage{multirow,nicefrac}
\usepackage{verbatim} 
\usepackage{wrapfig}
\usepackage{graphicx}

\usepackage[colorlinks=true,linkcolor=\linkcolor,urlcolor=\urlcolor,citecolor=\citecolor,breaklinks]{hyperref}

\usepackage{prettyref}

\iftoggle{colt}{
    \DeclareRobustCommand{\qed}{
        \usepackage{thmtools}
          \ifmmode \mathqed
          \else
            \leavevmode\unskip\penalty9999 \hbox{}\nobreak\hfill
            \quad\hbox{\qedsymbol}%
          \fi
    }
}
{
    \usepackage{amsthm}
     
}
\usepackage{thm-restate}

\iftoggle{arxiv}
{
    \usepackage{mathrsfs}

    \usepackage[
  letterpaper,
  left=1in,
  right=1in,
  top=0.9in,
  bottom=0.9in,
  headheight=14pt,
  headsep=14pt,
  footskip=20pt
]{geometry}

\usepackage{fancyhdr}
\usepackage{lipsum} 

\pagestyle{fancy}
\fancyhf{} 
\fancyfoot[C]{\small \thepage}

\renewcommand{\headrulewidth}{0.4pt}
\renewcommand{\footrulewidth}{0pt}
    \usepackage[bitstream-charter]{mathdesign}
     
    \usepackage[scaled=0.92]{PTSans}

    \usepackage{subfig}

    \usepackage[noend]{algpseudocode}
    \usepackage{algorithm}
   
}
{
    \usepackage{mathrsfs}
    \usepackage{subfig}

    \usepackage[noend]{algpseudocode}
    \usepackage{algorithm}
}

\DeclareMathAlphabet{\mathbfsf}{\encodingdefault}{\sfdefault}{bx}{n}

\numberwithin{equation}{section}

\iftoggle{arxiv}
{
    \newcommand{\colorbold}[1]{
    \textbf{\textcolor{\thmcolor}{#1}}}}
{
    \newcommand{\colorbold}[1]{\textbf{#1}
}
}
\iftoggle{arxiv}
{
    \newcommand{\colorpar}[1]{
    \paragraph{\textcolor{\thmcolor}{#1}}}
}
{
    \newcommand{\colorpar}[1]{\paragraph{#1}}
}

\usepackage[capitalise,nameinlink]{cleveref}
\Crefname{Appendix}{Appendix}{Appendices}
  \newtheorem{theorem}{Theorem}
  \newtheorem{lemma}{Lemma}[section]
  \newtheorem{corollary}{Corollary}[section]
  \newtheorem{proposition}[lemma]{Proposition}
  \theoremstyle{definition}
  \newtheorem{definition}{Definition}[section]
  \newtheorem{example}{Example}[section]
  \newtheorem{remark}{Remark}[section]
  \theoremstyle{plain}

\usepackage{thmtools}

\Crefname{equation}{Eq.}{Eqs.}
\Crefname{assumption}{Assumption}{Assumptions}
\Crefname{condition}{Condition}{Conditions}
\Crefname{claim}{Claim}{Claims}
\Crefname{property}{Property}{Properties}
\Crefname{construction}{Construction}{Constructions}

\declaretheoremstyle[
    headformat=\normalfont\textcolor{\thmcolordark}{\bfseries\NAME\,\NUMBER}\NOTE,%
    notefont={\normalfont\textcolor{\thmcolordark}{\bfseries}}, 
    notebraces={}{},
    bodyfont=\normalfont\itshape,
    spaceabove = 6pt,
    spacebelow = 6pt,
    ]{coloredthmversion}

\declaretheoremstyle[
    headformat=\normalfont\textcolor{\thmcolordark}{\bfseries\NAME\,\NUMBER}\NOTE,%
    bodyfont=\normalfont\itshape,
    spaceabove = 6pt,
    spacebelow = 6pt,
    ]{coloredthm}

\declaretheoremstyle[
    headformat=\normalfont\textcolor{\thmcolordark}{\bfseries\NAME\,\NUMBER}\NOTE,%
    bodyfont=\normalfont,
    spaceabove = 6pt,
    spacebelow = 6pt,
    ]{coloreddef}

\iftoggle{customthms}
{
    \theoremstyle{coloredthmversion}
}
{}

\iftoggle{customthms}{
  \theoremstyle{coloredthm}
  \newtheorem{theorem}{Theorem}
  \newtheorem{lemma}{Lemma}[section]
  \newtheorem{corollary}{Corollary}[section]
  \newtheorem{proposition}[lemma]{Proposition}
}
{}

\newtheorem*{thminformal*}{Informal Theorem}

\iftoggle{customthms}{
    \theoremstyle{coloreddef}
    \newtheorem{definition}{Definition}[section]

    \newtheorem{property}{Property}[section]
}
{
  
}

\newtheorem{assumption}{Assumption}[section]
\newtheorem{condition}{Condition}[section]

\makeatletter
\newcommand{\neutralize}[1]{\expandafter\let\csname c@#1\endcsname\count@}
\makeatother

\iftoggle{customthms}{
    \newtheoremstyle{named}{}{}{\itshape}{}{\bfseries}{}{.5em}{\Cref{#3} {\normalfont (informal)} }{}
    \theoremstyle{named}
    
    \theoremstyle{plain}
}
{}

\newtheorem*{theorem*}{Theorem}
\newtheorem*{lemma*}{Lemma}
\newtheorem*{corollary*}{Corollary}
\newtheorem*{proposition*}{Proposition}
\newtheorem*{claim*}{Claim}
\newtheorem*{fact*}{Fact}
\newtheorem*{observation*}{Observation}
\newtheorem*{definition*}{Definition}
\newtheorem*{remark*}{Remark}
\newtheorem*{example*}{Example}

\def\by{\mathbf{y}}
\def\bz{\mathbf{z}}

\def\bq{\mathbf{q}}
\def\br{\mathbf{r}}

\def\bw{\mathbf{w}}

\def\bQ{\mathbf{Q}}

\def\bphi{\bm{\phi}}
\def\bpsi{\bm{\psi}}

\def\ddefloop#1{\ifx\ddefloop#1\else\ddef{#1}\expandafter\ddefloop\fi}
\def\ddef#1{\expandafter\def\csname bb#1\endcsname{\ensuremath{\mathbb{#1}}}}
\ddefloop ABCDEFGHIJKLMNOPQRSTUVWXYZ\ddefloop

\def\ddefloop#1{\ifx\ddefloop#1\else\ddef{#1}\expandafter\ddefloop\fi}
\def\ddef#1{\expandafter\def\csname frak#1\endcsname{\ensuremath{\mathfrak{#1}}}}
\ddefloop ABCDEFGHIJKLMNOPQRSTUVWXYZ\ddefloop

\def\ddefloop#1{\ifx\ddefloop#1\else\ddef{#1}\expandafter\ddefloop\fi}
\def\ddef#1{\expandafter\def\csname fr#1\endcsname{\ensuremath{\mathfrak{#1}}}}
\ddefloop ABCDEFGHIJKLMNOPQRSTUVWXYZ\ddefloop

\def\ddefloop#1{\ifx\ddefloop#1\else\ddef{#1}\expandafter\ddefloop\fi}
\def\ddef#1{\expandafter\def\csname eul#1\endcsname{\ensuremath{\EuScript{#1}}}}
\ddefloop ABCDEFGHIJKLMNOPQRSTUVWXYZ\ddefloop

\def\ddefloop#1{\ifx\ddefloop#1\else\ddef{#1}\expandafter\ddefloop\fi}
\def\ddef#1{\expandafter\def\csname scr#1\endcsname{\ensuremath{\mathscr{#1}}}}
\ddefloop ABCDEFGHIJKLMNOPQRSTUVWXYZ\ddefloop

\def\ddefloop#1{\ifx\ddefloop#1\else\ddef{#1}\expandafter\ddefloop\fi}
\def\ddef#1{\expandafter\def\csname b#1\endcsname{\ensuremath{\mathbf{#1}}}}
\ddefloop ABCDEFGHIJKLMNOPQRSTUVWXYZ\ddefloop

\def\ddefloop#1{\ifx\ddefloop#1\else\ddef{#1}\expandafter\ddefloop\fi}
\def\ddef#1{\expandafter\def\csname bhat#1\endcsname{\ensuremath{\hat{\mathbf{#1}}}}}
\ddefloop ABCDEFGHIJKLMNOPQRSTUVWXYZ\ddefloop

\def\ddefloop#1{\ifx\ddefloop#1\else\ddef{#1}\expandafter\ddefloop\fi}
\def\ddef#1{\expandafter\def\csname btil#1\endcsname{\ensuremath{\tilde{\mathbf{#1}}}}}
\ddefloop ABCDEFGHIJKLMNOPQRSTUVWXYZ\ddefloop

\def\ddefloop#1{\ifx\ddefloop#1\else\ddef{#1}\expandafter\ddefloop\fi}
\def\ddef#1{\expandafter\def\csname bst#1\endcsname{\ensuremath{\mathbf{#1}^\star}}}
\ddefloop ABCDEFGHIJKLMNOPQRSTUVWXYZ\ddefloop

\def\ddefloop#1{\ifx\ddefloop#1\else\ddef{#1}\expandafter\ddefloop\fi}
\def\ddef#1{\expandafter\def\csname bst#1\endcsname{\ensuremath{\mathbf{#1}^\star}}}
\ddefloop abcdeghijklmnopqrstuvwxyz\ddefloop

\def\ddefloop#1{\ifx\ddefloop#1\else\ddef{#1}\expandafter\ddefloop\fi}
\def\ddef#1{\expandafter\def\csname bhat#1\endcsname{\ensuremath{\hat{\mathbf{#1}}}}}
\ddefloop abcdefghijklmnopqrstuvwxyz\ddefloop

\def\ddefloop#1{\ifx\ddefloop#1\else\ddef{#1}\expandafter\ddefloop\fi}
\def\ddef#1{\expandafter\def\csname b#1\endcsname{\ensuremath{\mathbf{#1}}}}
\ddefloop abcdeghijklnopqrstuvwxyz\ddefloop

\def\ddefloop#1{\ifx\ddefloop#1\else\ddef{#1}\expandafter\ddefloop\fi}
\def\ddef#1{\expandafter\def\csname barb#1\endcsname{\ensuremath{\bar{\mathbf{#1}}}}}
\ddefloop abcdefghijklmnopqrstuvwxyz\ddefloop

\def\ddef#1{\expandafter\def\csname c#1\endcsname{\ensuremath{\mathcal{#1}}}}
\ddefloop ABCDEFGHIJKLMNOPQRSTUVWXYZ\ddefloop
\def\ddef#1{\expandafter\def\csname h#1\endcsname{\ensuremath{\widehat{#1}}}}
\ddefloop ABCDEFGHIJKLMNOPQRSTUVWXYZ\ddefloop
\def\ddef#1{\expandafter\def\csname hc#1\endcsname{\ensuremath{\widehat{\mathcal{#1}}}}}
\ddefloop ABCDEFGHIJKLMNOPQRSTUVWXYZ\ddefloop
\def\ddef#1{\expandafter\def\csname t#1\endcsname{\ensuremath{\widetilde{#1}}}}
\ddefloop ABCDEFGHIJKLMNOPQRSTUVWXYZ\ddefloop
\def\ddef#1{\expandafter\def\csname tc#1\endcsname{\ensuremath{\widetilde{\mathcal{#1}}}}}
\ddefloop ABCDEFGHIJKLMNOPQRSTUVWXYZ\ddefloop

\usepackage{tcolorbox}
\tcbuselibrary{skins,breakable}

\tcbset{
  aibox/.style={
      width=\linewidth,
      top=6pt,
      bottom=0pt,
      left=1.5pt,
      right=1.5pt,
colback=blue!60!gray!5,
colframe=black,
      colbacktitle=black,
      enhanced,
      center,
      attach boxed title to top left={yshift=-0.1in,xshift=0.15in},
      boxed title style={boxrule=0pt,colframe=white,},
    }
}

\newcommand{\I}{\mathbf{I}}

\let\Pr\relax
\DeclareMathOperator{\Pr}{\mathbb{P}}

\newcommand{\Exp}{\mathbb{E}}

\newcommand{\Normal}{\mathrm{N}}

\newcommand{\ballkr}[1][r]{\cB_{k}(r)}

\iftoggle{arxiv}
{
\newcommand{\confinlinestyle}{
}
}
{
\newcommand{\confinlinestyle}{\textstyle
}
}

\iftoggle{arxiv}
{
\newcommand{\confpar}[1]{\paragraph{#1}}
}
{
\newcommand{\confpar}[1]{\textbf{#1}}
}

\DeclareMathSymbol{\shortminus}{\mathbin}{AMSa}{"39}

\newcommand{\R}{\mathbb{R}}

\usepackage{pifont}

\newcommand{\statespace}{S}
\newcommand{\actionspace}{A}

\newcommand{\piema}{\pi_{\bar\theta}}

\newcommand{\algofont}[1]{{\scalefont{1.0}{\texttt{\textbf{#1}}}}}

\newcommand{\Qtarg}{Q_{\mathrm{targ}}}
    \newcommand{\Buffer}{\cB}

\newcommand{\Lcritic}{L_{\mathrm{critic}}}

\newcommand\mdpenv{M_\textsc{Env}}
\newcommand{\Jcost}{J}

\newcommand{\igpfirst}{K}

\newcommand{\igplast}{0}
\newcommand{\igpiter}{k}

\newcommand{\igpinc}{k-1}

\newcommand{\aitert}[1][t]{a_{#1,\igpiter}}
\newcommand{\ainct}[1][t]{a_{#1,\igpinc}}

\newcommand{\alastt}[1][t]{a_{#1,\igplast}}

\newcommand{\done}{\texttt{done}}

\newcommand{\rboff}{\mathcal{D}_{\text{off}}}

\newcommand{\ratio}{\omega_\theta}

\newcommand{\gnext}{g^{\mathrm{next}}}
\newcommand{\Gnext}{G^{\mathrm{next}}}
\newcommand{\gcurr}{g^{\mathrm{curr}}}
\newcommand{\Gcurr}{G^{\mathrm{curr}}}
\newcommand{\Gterm}{G^{\mathrm{term}}}

\newcommand{\pred}{\mathrm{pred}}
 
\newcommand{\Lpol}{L_{\mathrm{policy}}}

\newcommand{\subroutinefont}[1]{{\textsc{#1}}}
\newcommand{\thetatarget}{\bar{\theta}}

\newcommand{\cclip}{c_{\mathrm{clip}}}

\newcommand{\getWeight}{\subroutinefont{GetWeighting}}
\newcommand{\ytarg}{y^{\mathrm{target}}}
\newcommand{\bytarg}{\by^{\mathrm{target}}}

\newcommand{\Qagg}{Q_{\mathrm{agg}}}

\newcommand{\regu}{L_{\mathrm{reg}}}

\newcommand{\rdone}{r_{\mathrm{done}}}
\newcommand{\rsparse}{r_{\mathrm{sparse}}}
\newcommand{\rdense}{r_{\mathrm{dense}}}

\newcommand{\Vadarnc}{\textbf{\textsc{VaDar}}}
\newcommand{\Vadar}{\modcolor{\textbf{\textsc{VaDar}}}}

\newcommand{\VadarVector}{\modcolor{\textbf{\textsc{Vector-VaDar}}}}
\newcommand{\VadarShaped}{\modcolor{\textbf{\textsc{VaDar-Shaped}}}}
\newcommand{\PrefSubgoal}{\textsc{Pref-Subgoal}}
\newcommand{\LastQ}{\modcolor{\textbf{\textsc{VaDarLast-Q}}}}
\newcommand{\VadarGM}{\modcolor{\textbf{\textsc{VaDar}-\textsc{GM}}}}
\newcommand{\VadarNF}{\modcolor{\textbf{\textsc{VaDar}-\textsc{NF}}}}
\newcommand{\VadarCF}{\modcolor{\textbf{\textsc{VaDar}-\textsc{CF}}}}

\newcommand{\desc}{\mathrm{desc}}

  \newcommand{\hgcnc}{\textbf{\textsc{HGC}}}
  
  \newcommand{\hgcrlnc}{\textsc{HGC}\text{-}\textsc{RL}}

  \newcommand{\hgc}{\textbf{\gccolor{\hgcnc}}}
  \newcommand{\ihgc}{\textrm{i}\hgc}
  \newcommand{\hgcrl}{\gccolor{\hgcrlnc}}

\newcommand{\rbm}{\textsc{Robometer}}
\newcommand{\pbrl}{\textsc{Pbrl}}

\newcommand{\vectorqfirst}{Q_{\phi_1}}
\newcommand{\vectorqlast}{Q_{\phi_M}}

\newcommand{\aggSAC}{\textsc{Aggregated}-\SAC{}}
\let\algofont\undefined

\let\Vadarnc\undefined

\newcommand{\algofont}[1]{{\scalefont{1.0}{\texttt{\textbf{#1}}}}}

\definecolor{dppocol}{HTML}{777777}
\definecolor{dsrlcol}{HTML}{925BFF}
\definecolor{qccol}{HTML}{2A52BE}
\definecolor{expocol}{HTML}{50C878}
\definecolor{srcol}{HTML}{5B1166}
\definecolor{bpttcol}{HTML}{FFB366}

\newcommand{\FQLnc}{\algofont{FQL}}
\newcommand{\FQL}{{\FQLnc}}

\newcommand{\OGPOcolor}{red!70!black}
\newcommand{\OGPOtext}{OGPO}
\newcommand{\OGPOnc}{\algofont{\OGPOtext}}

\newcommand{\OGPO}{{\OGPOnc}}
\newcommand{\OGPOplus}{{\color{\OGPOcolor}\OGPOnc+}}

\newcommand{\SACnc}{\algofont{SAC}}
\newcommand{\SAC}{{\SACnc}}

\newcommand{\envfont}[1]{\textsc{#1}}

\newcommand{\Robomimic}{\envfont{Robomimic}}

\newcommand{\presoakpan}{\envfont{PreSoakPan}}
\newcommand{\loaddishwasher}{\envfont{LoadDishwasher}}
\newcommand{\arrangetea}{\envfont{ArrangeTea}}

\newcommand{\Libero}{\envfont{Libero}}
\newcommand{\OGBench}{\envfont{OGBench}}
\newcommand{\Robocasa}{\envfont{Robocasa365}}
\newcommand{\ToolHang}{\envfont{Tool-Hang}}
\newcommand{\Transport}{\envfont{Transport}}
\newcommand{\PushTObs}{\envfont{PushT-Obstacle}}

\newcommand{\MIQ}{\textsc{MIQ}}

\newcommand{\Vadarnc}{\textsc{VaDar}}
\let\Vadarnc\iclrSavedVadarnc
\renewcommand{\Vadarnc}{\textmd{\textsc{VaDar}}}
\renewcommand{\Vadar}{\modcolor{\textmd{\textsc{VaDar}}}}

\renewcommand{\VadarVector}{\modcolor{\textmd{\textsc{Vector-VaDar}}}}
\renewcommand{\VadarShaped}{\modcolor{\textmd{\textsc{VaDar-Shaped}}}}
\renewcommand{\LastQ}{\modcolor{\textmd{\textsc{VaDarLast-Q}}}}
\renewcommand{\VadarGM}{\modcolor{\textmd{\textsc{VaDar}-\textsc{GM}}}}
\renewcommand{\VadarNF}{\modcolor{\textmd{\textsc{VaDar}-\textsc{NF}}}}
\renewcommand{\VadarCF}{\modcolor{\textmd{\textsc{VaDar}-\textsc{CF}}}}
\renewcommand{\hgcnc}{\textmd{\textsc{HGC}}}

\renewcommand{\hgcrlnc}{\textmd{\textsc{HGC}\text{-}\textsc{RL}}}
\renewcommand{\hgc}{\gccolor{\hgcnc}}
\renewcommand{\ihgc}{\textmd{\textrm{i}\hgc}}
\renewcommand{\hgcrl}{\gccolor{\hgcrlnc}}

\usepackage{preamble/color-edits}

\addauthor{ms}{magenta}
\addauthor{sv}{teal}

\addauthor{mg}{orange}
\addauthor{ss}{blue}
\addauthor{sbp}{purple}

\newcommand{\ignore}[1]{}

\iftoggle{arxiv}
{
\input{preamble/arxiv_title}
}
{}

\title{\Vadar: Implicit Goal Conditioning via Value Disaggregation}

\author{
  \small
  Shashwat Saxena\footnote{\texttt{\{ssaxena2, mehulgoel, sreyasv, sarveshp, msimchow\}@andrew.cmu.edu}\label{foot:cmu}}\textsuperscript{$*$},
  Mehul Goel\textsuperscript{\ref{foot:cmu}$*$},
  Sreyas Venkataraman\textsuperscript{\ref{foot:cmu}$*$},
  Sarvesh Patil\textsuperscript{\ref{foot:cmu}},
  Max Simchowitz\textsuperscript{\ref{foot:cmu},$b$}
  \\
  \vspace{-.3em}
  \rule{.38\textwidth}{.7pt}
  \\
  \footnotesize
  $^*$Equal contribution. \\
  \footnotesize
  $^{a}$Carnegie Mellon University \\
  \footnotesize
  $^{b}$Amazon Scholar, Amazon FAR
}

\usepackage{placeins}
\usepackage{needspace}
\newcommand{\iclrendwrap}{%
  \par
  \ifnum\value{WF@wrappedlines}>1
    \vspace{\dimexpr\value{WF@wrappedlines}\baselineskip-\baselineskip\relax}%
  \fi
  \WFclear
}

\usepackage{flafter} 
\providecommand{\newtext}[1]{#1}

\begin{document}
\ifdefined\arxivcover
\thispagestyle{empty}
\begin{tcolorbox}[
  colback=blue!60!gray!5, colframe=gray!50,
  boxrule=0pt, arc=2mm,
]
{\raggedright\LARGE\scshape
  \includegraphics[height=1em]{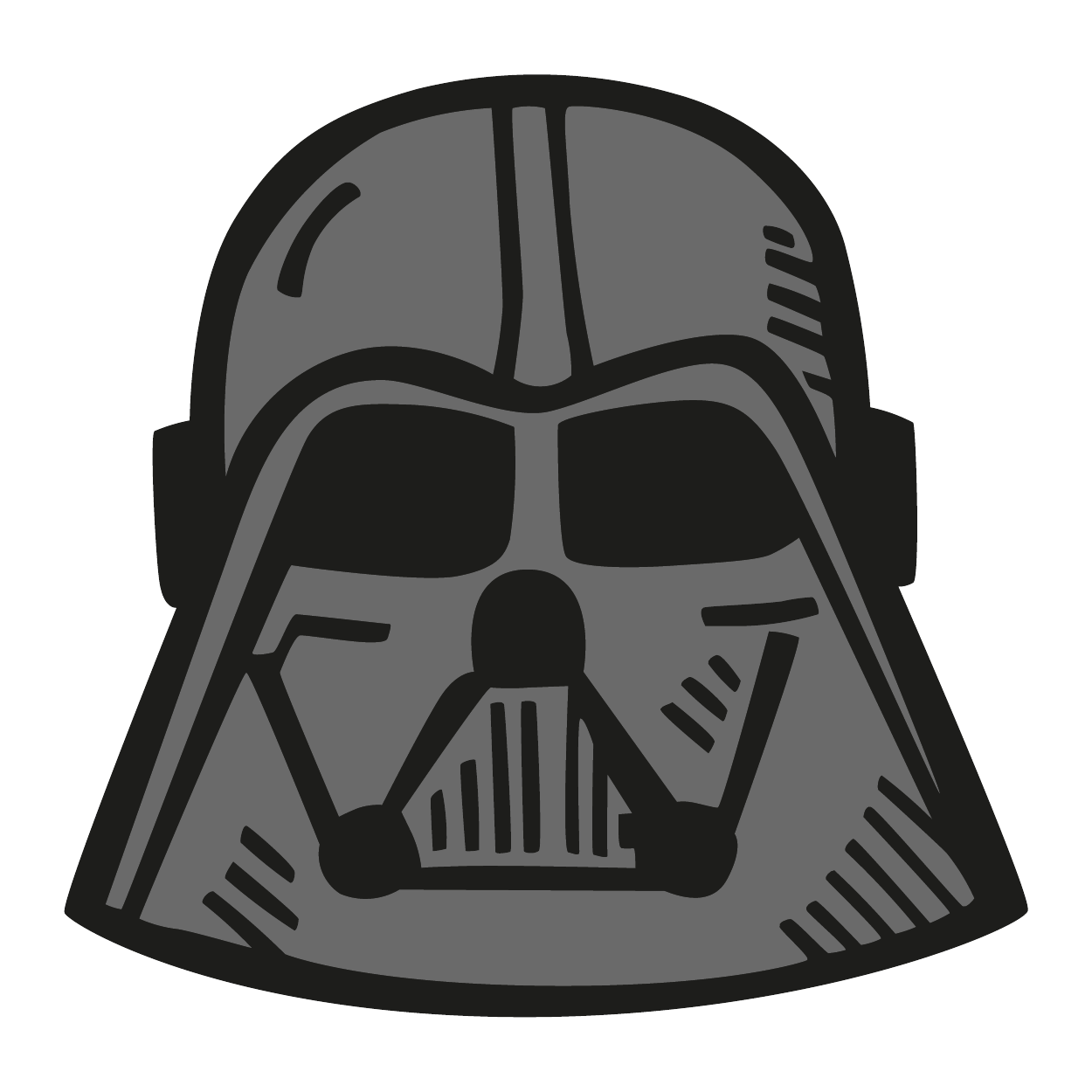}\ \Vadar:
  Implicit Goal Conditioning via Value Disaggregation\par}
\medskip
{\raggedright\small
  {\bfseries
  Shashwat Saxena\textsuperscript{1,*}\quad
  Mehul Goel\textsuperscript{1,*}\quad
  Sreyas Venkataraman\textsuperscript{1,*}\quad
  Sarvesh Patil\textsuperscript{1}\quad
  Max Simchowitz\textsuperscript{1}\par}
  \medskip
  \footnotesize
  \textsuperscript{1}Carnegie Mellon University \quad 
  \textsuperscript{*}Equal contribution.\par
}
\tcbline
\vspace{8pt}
\begin{minipage}[t]{\linewidth}
    \centering
    \includegraphics[width=\textwidth]{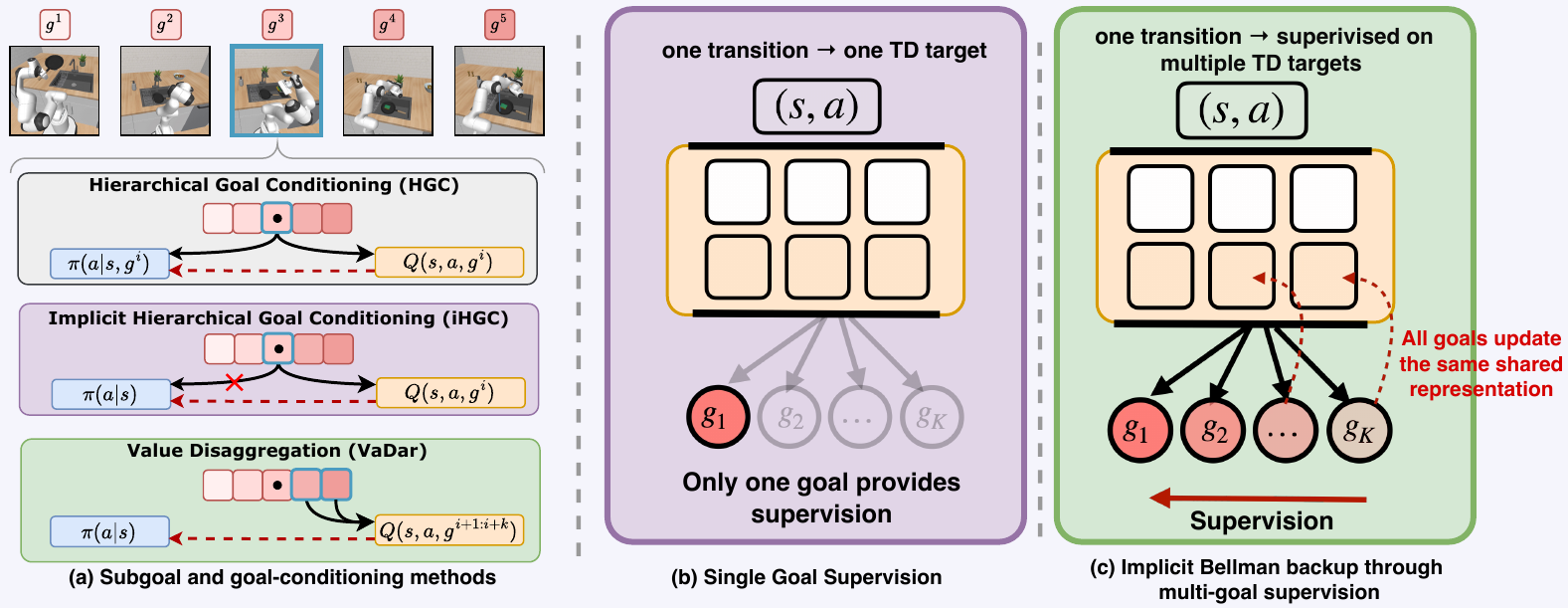}

     \captionof{figure}{\small
      \Vadar{} uses subgoals during training but requires none at deployment.
      \textbf{(a)} \hgc{} conditions both policy and critic on the current subgoal; \ihgc{} removes policy conditioning. \Vadar{} adds all-subgoal critic supervision and future-aware policy extraction.
      \textbf{(b)} With single-goal supervision, each transition supplies a TD target for only the current subgoal.
      \textbf{(c)} \Vadar{} supervises all subgoals on each transition, updating a shared critic representation. We hypothesize that this enables an implicit Bellman backup that improves final-goal value estimates. }
    \label{fig:teaser}
\end{minipage}

\par\vspace{8pt}
\tcbline
{\footnotesize
  {\small\texttt{\{ssaxena2,mehulgoel,sreyasv,sarveshp,msimchow\}@andrew.cmu.edu}\par}
  \smallskip
  \vspace{8pt}
  \textbf{Code:} \url{https://github.com/simchowitzlabpublic/vadar-public}\par
  \smallskip
  \textbf{Date:} \today\par
}
\end{tcolorbox}

\else
\maketitle
\fi

\begin{abstract}


Long-horizon tasks are important for modern AI and robotics, yet pose a challenge for reinforcement learning. Historically, a popular approach has been to decompose a task into subgoals, and learn a subgoal-conditioned policy to execute each subtask in sequence.  
We introduce an alternative called \emph{Value Disaggregation} (\Vadar{}), which  trains a single subgoal-independent policy to maximize a linear combination of per-subgoal critic values. 
Because subgoal information is needed only at \emph{training-time}, \Vadarnc{} removes the need for possibly costly or cumbersome subgoal-selection during policy deployment. Moreover, through careful experiments, we  show that \Vadarnc{} (1) is always competitive with, and frequently \emph{outperforms}, natural baselines like goal-conditioning and (2) succeeds on tasks with non-sequential goal structure and multiple optimization objectives. In particular, \Vadarnc{} is the \textbf{first} algorithm to solve challenging, long-horizon tasks in the \Robocasa{} suite, even when starting from a policy pretrained from behavior cloning  with \emph{near-zero} initial task success. Taken together, \Vadarnc{}'s successes suggest that subgoal information benefits actor-critic reinforcement learning by enhancing the efficacy of \emph{critic learning}, whereas policy goal-conditioning is often unnecessary.


\end{abstract}

\ifdefined\arxivcover\else
\begin{figure}[!t]
\input{arxiv_v2/body/intro_fig}
\end{figure}
\fi


\section{Introduction}

Learning to complete long-horizon, multi-step tasks is a fundamental challenge across a number of core AI domains, from robotics \citep{gupta2019relay}, to language modeling  \citep{wei2022chain}, to chemical reaction design \citep{liu2023fusionretro}. A popular approach, often referred to as  \emph{hierarchical goal conditioning} (\hgcnc) in reinforcement learning literature, decomposes these complex, long-horizon tasks into a set of subgoals, each of which can be  executed in sequence by a dedicated learned, goal-conditioned policy \citep{gupta2019relay, nasiriany2019planning, levy2019learningmultilevelhierarchieshindsight, nachum2018data}. However, attempting to learn per-sub-task policies introduces a new set of challenges. Myopic completion of a given sub-task might compromise performance on future sub-tasks. Moreover, explicit decompositions often require sub-tasks to be sequential in nature, and necessitate a dedicated sub-task planner to decide the current subtask to be attempted during policy deployment.

\iftoggle{arxiv}{\subsection{Contributions.}}{} In this paper, we propose an alternative to \hgcnc{} that we call \termbold{Value-Disaggregation} (\Vadar), a framework for sample-efficient reinforcement learning on long-horizon multi-step tasks. We focus on continuous control robotic manipulation tasks that can be decomposed into sub-tasks $g^1,\dots,g^{|G|}$. These subgoal evaluations are rapidly becoming highly reliable with today's VLMs \citep{du2023vision}.
While \hgcnc{} learns policies $\pi(s,g)$ that take sub-goals $g$ as explicit input, \Vadarnc{} instead learns a sub-goal agnostic policy $\pi(s)$, with a critic that estimates the values of all sub-tasks simultaneously. The policy is then trained to maximize a linear combination of critic values, optimizing for an aggregate performance on current-sub-goals as well as future tasks. We stress that \Vadarnc{} is a generic algorithmic template, being compatible with any preferred choice of critic learning algorithm or policy extraction method; we depict \Vadarnc{} in \Cref{fig:teaser}.

\iftoggle{neurips}{
\begin{figure}[H]
    \vspace{-0.5em}
    \centering
    \includegraphics[width=\textwidth]{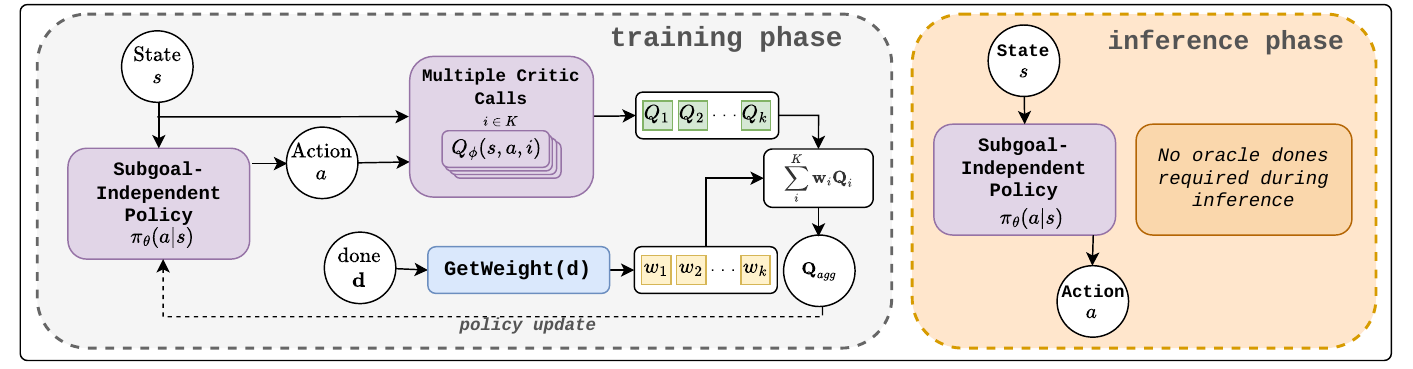}
    \caption{\Vadarnc{} algorithm. We learn per-goal critics with a shared backbone, and optimize a weighted sum of these to update a subgoal independent policy, which does not require subgoals to be supplied at inference.}
    \label{fig:vadar-draft}
    \vspace{-1.0em}
\end{figure}
}{}

\Vadarnc{} removes the need for subgoal-evaluation at inference \emph{by design}. Thus, compute-intensive VLM calls used to delineate subgoals, need only be made at \emph{training time}, and not during policy rollouts. Moreover, \Vadarnc{} is flexible across both task structure and reward constraints, enabling activations both to safety-constrained reinforcement learning (\Cref{sec:exp-multiobj}), and to non-sequential task orders which could admit a directed acyclic graph (DAG) structure. (\Cref{app:dag_extension}).

Across a variety of benchmarks for continuous control and policy extraction methods, we find that \Vadarnc{}, \textbf{(1)} Consistently outperforms \hgcnc{} (\Cref{sec:hgc-prelim}), hindsight experience replay (HER) \citep{andrychowicz2018hindsightexperiencereplay}, and dense preference-based rewards \citep{pmlr-v97-brown19a,liang2026robometer}; \textbf{(2)} Enables the first non-VLA BC-to-online RL method to converge on the evaluated long-horizon \Robocasa{} tasks \citep{nasiriany2026robocasa365} under fixed initial-state conditions; \textbf{(3)} Enables multi-objective reinforcement learning by incorporating safety constraints (\Cref{sec:exp-multiobj}), and enables test-time steering (\Cref{sec:exp-steering}) by combining critic learning with best-of-$N$ sampling \citep{li2026reinforcement}; and \textbf{(4)} Works seamlessly with VLM supervision (\Cref{sec:vlm}) and reaches convergence when goals are specified by reasoning Vision-Language Models (VLMs). This allows sample efficient robot learning from high-dimensional visual observations. 

To understand these gains, our ablations examine dense subgoal supervision and future-aware policy extraction (\Cref{sec:understanding_vadar}). We find that future-aware policy extraction reduces myopic behavior, while intermediate subgoal supervision improves learning even when policy extraction uses only final-subgoal values. This suggests that subgoals can support long-horizon RL through critic learning without explicit policy goal conditioning. 
Taken together, our findings \textbf{invert} the traditional perspectives on overcoming the challenges of long horizon, valued-based reinforcement learning. In short, we find: \emph{Rather than myopically conditioning policies via decomposed long-horizon sub-goals, predicting temporally rich subgoal successes can provide critics with \textbf{dense supervision} to enhance the accuracy of long-reaching measurements of full-task success.}

\iftoggle{neurips}
{
\confpar{Abridged Related Work.} 
Long-horizon tasks are commonly addressed through hierarchical goal-conditioning \citep{nachum2018data,levy2019learningmultilevelhierarchieshindsight}, which decomposes tasks into intermediate subgoals with goal-conditioned policies, or through designed/learned dense rewards \citep{chan2024dense, liang2026robometer}.
In contrast, \Vadar{} addresses long-horizon credit assignment through value disaggregation, decomposing value estimation rather than the task itself. This is closely related in spirit to successor-feature methods, which factor value functions into predictive state-occupancy statistics and reward-dependent weights. In \Cref{app:related}, we discuss related work in more detail. \mgcomment{the biggest concern at neurips reviews was the lack of related work, does that matter here?}
}
{}
\section{Related Work}
\label{sec:related}

\confpar{Long-horizon reinforcement learning via temporal abstraction.}
Long-horizon reinforcement learning is commonly addressed by introducing temporal abstraction: instead of reasoning over primitive actions, the agent reasons over extended behaviors, skills, or subgoals. Classical formulations include the options framework, where temporally extended actions are specified by initiation sets, intra-option policies, and termination conditions~\citep{sutton1999between,bacon2017option}; hierarchical abstract machines~\citep{parr1997reinforcement}; feudal reinforcement learning~\citep{dayan1992feudal,vezhnevets2017feudalnetworkshierarchicalreinforcement}; and MAXQ value decomposition~\citep{dietterich1999hierarchicalreinforcementlearningmaxq}. Subsequent work studies how such skills and hierarchies can be discovered from data~\citep{fox2017multileveldiscoverydeepoptions,evans2024creatingmultilevelskillhierarchies,cannon2025acceleratingtaskgeneralisationmultilevel,peng2022causalitydrivenhierarchicalstructurediscovery}. While these methods differ in how subtasks are specified or learned, they use subgoal structure to organize policy execution: the learned policy must either act through a prescribed hierarchy of subtasks\citep{dietterich1999hierarchicalreinforcementlearningmaxq, parr1997reinforcement} or utilize temporally extended controllers \citep{bacon2017option, fox2017multileveldiscoverydeepoptions}. In contrast, \Vadar{} assumes that subgoals are part of the MDP design, but does not learn or execute a hierarchy over them; subgoals are used only to construct training-time value targets for a single flat policy.

\confpar{Goal-conditioned reinforcement learning.} Goal conditioned RL provide a natural design for conditioning on desired outcomes/sub-goals.
Instead of learning a separate controller for each task, goal-conditioned methods learn policies or value functions indexed by a goal, enabling transfer across related goal-reaching problems.
Classical dynamic-goal learning first studied transfer across changing goal-reaching tasks~\citep{Kaelbling1993LearningTA}. 
This idea was later extended through universal value function approximators, which learn goal-indexed value functions with function approximation~\citep{schaul2015universal}, and hindsight relabeling, which reuses failed rollouts as successful examples for alternative achieved goals~\citep{andrychowicz2018hindsightexperiencereplay}. 
However, conditioning a flat policy on the final task goal does not by itself resolve long-horizon credit assignment. 
For this reason, hierarchical goal-conditioned RL (HGC) uses goal conditioning as a local control primitive: a high-level policy, planner, or task specification provides intermediate subgoals, and a low-level goal-conditioned policy learns to reach them~\citep{nachum2018data,levy2019learningmultilevelhierarchieshindsight, pmlr-v100-gupta20a,nair2018visualreinforcementlearningimagined}. 
This reduces the effective horizon, but introduce additional dependencies: they often assume a sequential subgoal structure, treat only one subgoal as active at a time, and require the deployed policy to be supplied with the correct current goal.  
These issues are costly and difficult when subgoal progress and completion must be evaluated by expert rules or large vision-language models during inference. 
\Vadar{} avoids this test-time dependency by learning a subgoal-agnostic policy; goal information enters only through the critic during training. 

\confpar{Value decomposition and reward-factorized critics.}
MAXQ decomposes the value of a hierarchical policy into child-task values and completion functions, reflecting the procedural structure of a subroutine hierarchy~\citep{dietterich1999hierarchicalreinforcementlearningmaxq}. Successor representations and successor features instead factor value prediction into predictive components and reward-dependent weights~\citep{dayan1993improving,barreto2017successor}. \Vadar{} follows this latter spirit of reward-factorized value estimation, but uses subgoal-completion signals to learn a vector-valued critic in an offline-to-online actor-critic pipeline. Closer to robotics, task-and-motion planning (TAMP) couples high-level symbolic task planners with low-level continuous motion planners over a fixed subtask vocabulary \citep{kaelbling2011hierarchical,garrett2021integrated}. \Vadar{} shares the assumption that the task admits a subgoal-level decomposition, but does not plan over symbolic actions or commit to a fixed sequencing at execution time: the decomposition enters only through the critic, and the deployed policy is a single flat, subgoal-agnostic controller.

\confpar{Representation learning for value-based RL.}
Recent work has shown that auxiliary prediction objectives can improve value learning by shaping the
representations used by the critic. MR.Q~\citep{fujimoto2025towards} learns state and state-action
representations that predict rewards, termination, and future latent states, using model-based objectives
to improve a model-free actor--critic without planning through the learned dynamics.
DR.Q~\citep{lyu2026debiasedmodelbasedrepresentationssampleefficient} builds on this approach by encouraging the learned representations to
retain information about future states through a mutual-information objective, together with a
recency-aware replay scheme. \Vadar{} is complementary to these approaches: rather than introducing
an auxiliary dynamics or representation-learning objective, it obtains additional supervision directly
from the RL problem by fitting subgoal value functions at \emph{multiple temporal resolutions}. Thus, the
shared critic representation is trained simultaneously by intermediate and long-horizon TD targets,
which we find can improve final-task value estimation even when the intermediate values are not used
for policy extraction.


\section{Preliminaries}
\label{sec:preliminaries}

\confpar{MDPs.} We formulate our problem as a \emph{Markov Decision Process} (MDP) $M := (\statespace, \actionspace, P_0, P, R,\gamma)$, with states $s\in\statespace$, actions $a\in\actionspace$, initial state distribution $P_0$, transition probabilities $P$,  reward $R$, and discount factor $\gamma \in (0,1)$. At each timestep $t$, the agent (e.g., robot) observes the state $s_t \in \statespace$, takes an action $a_t \sim \pi(a_t \mid s_t) \in\actionspace$, transitions to the next state $s_{t+1}$ according to $s_{t+1} \sim P(s_{t+1} \mid s_t, a_t)$ while receiving a reward $R(s_t, a_t)$.
For the MDP $\mdpenv$, we let $\Exp^{\pi,M}$ (resp. $\Pr^{\pi,M}$) denote the expectation (resp. probability distribution) over trajectories $(s_0, a_0, \dots, s_T, a_T)$ with length $T+1$, with initial state distribution $s_0 \sim P_0$ and transition operator $P$. The object of reinforcement learning (RL) is to train a policy to optimize the cumulative discounted return:
\iftoggle{arxiv}
{
\begin{equation}
\Jcost^M(\pi_\theta) = \mathbb{E}^{\pi_{\theta}}\!\left[\sum_{t\ge 0} \gamma^t R(s_t,a_t)\right].
\end{equation}
}
{$\Jcost^M(\pi_\theta) = \mathbb{E}^{\pi_{\theta}}\![\sum_{t\ge 0} \gamma^t R(s_t,a_t)]$.}
We also recall the Q-function $Q^{\pi,M}(s,a) := \Exp^{\pi}[\sum_{t \ge 0} \gamma^t R(s_t,a_t) \mid (s_t,a_t) = (s,a)]$ and value function $V^{\pi,M}(s) := \Exp_{a \sim \pi(s)}[Q^\pi(s,a)]$. When $M$ is clear from context, we can omit it from the subscript.

\confpar{Value-based Reinforcement Learning.}  
Value-based RL methods \citep{sutton2018reinforcement} simultaneously learn a policy $\pi_\theta$ and associated $Q_{\phi}$-function,   maintaining \footnote{ We defer critic ensembling and parameter exponential moving averages to Appendix  \ref{app:critic_ema}.} (1) a replay buffer $\Buffer=\{(s,a,s',r,\done)\}$ of past states $s$, actions $a$, successor states $s' \sim P(s,a)$, rewards $r$, and done indicators $\done$; (2) a policy $\pi_\theta$,   critic network $Q_{\phi}:\statespace \times \actionspace \to \R$ trained from this buffer. The parameters of  $Q_{\phi}$ are updated to approximate $Q^{\pi_{\theta}}$, and  $\pi_\theta(s)$ is updated to  maximize critic values. Specifically, updates are sampled $(s,a,s',r)\sim \Buffer$, and new actions $a' \sim \pi_{\theta}(s')$. Then,
\iftoggle{arxiv}
{performs the following updates
\begin{AIbox}{Value-Based RL Algorithm Template}
\begin{itemize}
    \item[1.] Updates each critic parameter $\phi$ via gradient descent on a given~\colorbold{critic loss} 
    \begin{align}
\Lcritic(Q_{\phi} \mid s,a,\ytarg)  = (Q_{\phi}(s,a) - \ytarg)^2 + \regu(s',a',\pi_\theta) \end{align} 
     which measure the squared Bellman residual to the future target 
     $\ytarg = r+ \gamma Q_{\phi}(s',a') $, and which may also incorporate some additional regularization $\regu(\cdot)$ depending on $s',a',\pi_\theta$ (omitted as arguments in the critic loss for simplicity).

    \item[2.]  Updates $\theta$ via gradient descent on a given \colorbold{policy loss} $\Lpol(\pi_\theta \mid s,Q)$  mapping from policies $\pi_\theta:S \to \Delta(A)$, states $s \in S$, and $Q$ to  $\R$. 
\end{itemize}
    
\end{AIbox}}
{
\begin{itemize}[leftmargin=10pt]
    \item[1.] \emph{Critic parameters}  $\phi$ are updated via gradient descent on a given~\termbold{critic loss} 
    \begin{align}
\Lcritic(Q_{\phi} \mid s,a,\ytarg)  = (Q_{\phi}(s,a) - \ytarg)^2 + \regu(s',a',\pi_\theta) \end{align} 
     which measures the squared Bellman residual to the future target 
     $\ytarg = r+ \gamma Q_{\phi}(s',a') $, and which may also incorporate some additional regularization $\regu(\cdot)$ depending on $s',a',\pi_\theta$ (omitted as arguments in the critic loss for simplicity).

    \item[2.]  \emph{Policy parameters} $\theta$ are updated via gradient descent on a given \termbold{policy loss} $\Lpol(\pi_\theta \mid s,Q)$  mapping from policies $\pi_\theta:S \to \Delta(A)$, states $s \in S$, and $Q$ to  $\R$. 
\end{itemize}
}

\confpar{Representative Algorithms: \SAC{}, \FQL{} and \OGPO{}.}
\relax{}
We instantiate \Vadar{} with Soft Actor-Critic (\SAC{})~\citep{haarnoja2018soft2}, Flow Q-Learning (\FQL{})~\citep{park2025flowqlearning}, and off-policy generative policy optimization (\OGPO{})~\citep{patil2026ogpo}. \SAC{} uses a maximum-entropy policy objective; \FQL{} and \OGPO{} support expressive generative control policies. These backbones allow us to test value disaggregation across different policy-extraction methods. Their policy objectives and algorithm-specific critic updates are detailed in \Cref{app:base_algorithms}.

\confpar{Sparse-Reward RL.} In many tasks, such as those encountered in robot manipulation, defining reward that densely measure progress towards goal completion is challenging \citep{amodei2016concreteproblemsaisafety,  krakovna2020specification}. Thus, we consider a \emph{sparse-reward} defined via the \emph{done} indicator $\done \in \{0,1\}$ indicating task completion. Standard practice is to set $r \gets \rsparse(\mathrm{done} )$, where 
\begin{align} \rsparse(\done) =  - 1 \cdot \I\{\done = 0 \} + \rdone \I\{\done = 1 \}, \label{eq:rdef}
\end{align}
and where $\rdone$ is a small positive constant (e.g. $\rdone = 5$). In words, $\rsparse$ penalizes each time-step until the task is not completed, and provides a small positive bonus for task success.

\confpar{Offline-to-Online RL.} In many robotics settings, environment interaction is expensive and a fixed dataset of demonstrations $\rboff$ is available. We follow the now-standard \emph{offline-to-online} recipe \citep{nair2021awacacceleratingonlinereinforcement}: pretrain a behavior-cloning policy $\pi_\textsc{bc}$ on $\rboff$, then initialize the online policy from  $\pi_\textsc{bc}$ and fine-tune online, optionally mixing $\rboff$ into the off-policy updates at a fixed ratio \citep{ball2023efficient}. All algorithms in our experiments share the same BC-pretraining and warm-start protocol, so comparisons isolate the online fine-tuning method.

\confpar{Dense Preference Reward.} An alternative approach to long-horizon RL is to learn dense preference-based rewards. We describe the formulation and implementation in \Cref{app:dense_preference_reward} and compare against this approach in \Cref{sec:experiments,sec:pref_vs_td}.

\section{Hierarchical Goal Conditioning and Its Limitations}
\label{sec:limits_gc}

\label{sec:hgc-prelim}

\begin{algorithm}[h]

\caption{Hiearchical Goal Conditioning RL (\hgcrl). }
\label{alg:gc-concise}
\begin{algorithmic}[1]
\State \textbf{Parameters:}  discount factor $\gamma$; learning rate $\eta$
\State \textbf{Initialize} critic networks $Q_{\phi}: S \times A \gccolor{\times G} \to \R$, policy $\pi_\theta$: $S \gccolor{\times G} \to \Delta(A)$,  buffer $\Buffer \gets \emptyset$ \label{line:hgc_init}

\For{episode = 1, 2, \dots}
    
    \While{episode not done}

        \State $a_t \gets \pi_{\theta}(s_t, \gccolor{g_t})$,  transition $s_{t+1} \sim P(s_t,a_t)$ \Comment{\gccolor{$g_t = \gcurr := \min\{g:\done_g = 0\}$)}} \label{line:hgc_act}
        \State $\Buffer \gets \Buffer \cup \{(s_t, a_t, s_{t+1}, r_{t}, \gccolor{g}_t)\}$, $r_{t} \gets \rsparse(\done_{\gccolor{g_t}})$ \label{line:hgc_goal_tracking_in_buffer}
        \For{number of updates}
        \State Sample $(s, a, s', r_{\gccolor{g}} \gccolor{,g})\sim \Buffer$, $a' \sim \pi_{\theta}(s')$
        \label{line:hgc_update}\State $\ytarg_{\gccolor{g}} = r_g + \gamma    Q_{\phi}(s',a',\gccolor{g}) \label{line:hgc_target}$\Comment{Target for \gccolor{current goal}}
        \State $\phi \gets \phi - \eta \nabla_{\phi}\Lcritic(Q_{\phi}(\cdot,\cdot,\gccolor{g}) \mid s,a,\ytarg_{\gccolor{g}} )$. \label{line:hgc_critic_update}
        
        \State $\theta \gets \theta - \eta \nabla_\theta \Lpol(\pi_\theta(\cdot, \gccolor{g}) \mid s,a,Q_{\phi}(\cdot,\cdot;\gccolor{g}))$ \label{line:hgc_actor_update}
        \EndFor
        \EndWhile
        \EndFor
\end{algorithmic}
\end{algorithm}

\emph{Long-horizon} 
reinforcement learning tasks   require many time steps to complete. Sparse-rewards make \emph{credit assignment} -  identifying which actions contribute to task success/failure - challenging. Thus, prior work introduces \termbold{hierarchical goal conditioning} (\hgc)\footnote{This same practices goes by other names in other communities; see \Cref{sec:related} for further discussion.}, which 
decomposes the problem into a sequence of \emph{subgoals} $G = \{g \}_{g \in G}$. Traditional goal-conditioned RL assumes that the completion of \emph{individual subgoals}, what we call \colorbold{subgoal-dones} \iftoggle{arxiv}
{\begin{align}
\done_g(s) := \I\{ g \text{ is completed at state } s \},
\end{align}}
{$\done_g(s) := \I\{ g \text{ is completed at state } s \},$}
can be evaluated at both {training time} time and during evaluation. 
Traditionally, $\done_g$ have been hand-specified, but modern VLMs can now evaluate $\done_g$'s  specified by natural language queries \citep{chen2026robovlm}.  
\iftoggle{arxiv}
{

}
{}
 For simplicity, in the body of this paper we assume these goals satisfy a linear order with $g^1 \prec g^2 \dots \prec g^{|G|} \in G$, where $g^{|G|}$ denotes final task completion and $g^1$ is the first goal (Appendix \ref{app:dag_extension}) easily removes this sequentiality). Under this sequentiality condition, we define the current goal $\gcurr = \min\{g:\done_g = 0\}$,  where $\min$ is taken with respect to the order $\prec$.
\Cref{alg:gc-concise} depicts how value-based RL methods can be modified to incorporate \hgc{}. These modifications of standard value-based RL, depicted in \gccolor{blue}, are: policies and critics take the current goal \gccolor{$g_t = \gcurr$} as an argument (Line~\ref{line:hgc_init}), so that in particular, the policy needs $g_t$ to select actions (Line~\ref{line:hgc_act}). This necessitates that \gccolor{$g_t$} is tracked in the replay buffer (Line \ref{line:hgc_goal_tracking_in_buffer}), is used to determine the critic target (Line \ref{line:hgc_target}), and features in both the policy and critic loss (Lines \ref{line:hgc_critic_update},~\ref{line:hgc_actor_update}). However, we identify 3 limitations of \hgcnc, outlined below.



\relax{}
\hypertarget{lim-hgc-test-time-goals}{\textbf{Limitation 1}}: Accurate goals must be supplied to the policy at test-time, which is prohibitive if sub-goal completions requires expert engineering or larger models to evaluate. \hypertarget{lim-hgc-myopia}{\textbf{Limitation 2}}: Optimizing for the current goal can be overly myopic if future actions depend subtly on \emph{how} the current goal is completed, such as in robotic pick and place tasks. \hypertarget{lim-hgc-sequential}{\textbf{Limitation 3}}: \hgc{} often requires goals to have a sequential structure, with only one active goal at a time.

We examine these limitations in \Cref{app:hgc-limitations}. As a point of comparison, we introduce implicit hierarchical goal conditioning (\ihgc), which retains goal-conditioned critics but uses a subgoal-independent policy $\pi(a\mid s)$, removing the need for test-time goal information (\hyperlink{lim-hgc-test-time-goals}{Limitation 1}). Although \ihgc{} can outperform traditional \hgc{} as shown in \cref{fig:ihgc-hgc-toolhang-fql}, both optimize the policy against the current-subgoal critic alone and remain susceptible to myopic behavior (\hyperlink{lim-hgc-myopia}{Limitation 2}). This motivates \Vadar{}, which incorporates future-subgoal values during policy optimization.


\section{The Value Disaggregation (\Vadar) Framework
\label{sec:reward_disagg}}


\begin{figure}[h]
    \centering
    \vspace{-1em}
\includegraphics[width=0.9\linewidth]{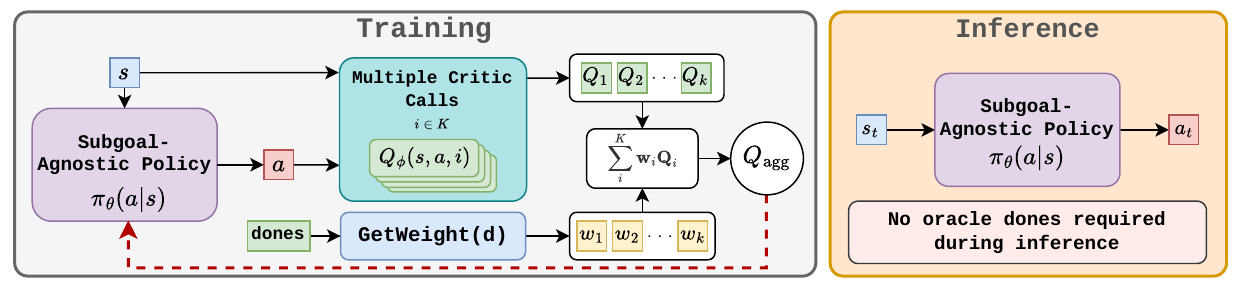}
    \caption{\textbf{\Vadar{} training and inference.}
  \textbf{Left:} Subgoal completion indicators weight the per-subgoal critic values, whose aggregate guides updates to a subgoal-independent policy.
  \textbf{Right:} The policy selects actions directly from the state, without subgoal information.}
    \label{fig:vadar-training-inference}
\end{figure}
\vspace{.5em}
We now introduce Value Disaggregation (\Vadarnc, \Cref{alg:vd-concise}), a framework for subgoal-driven long-horizon RL that addresses the limitations of goal conditioning described in \Cref{sec:limits_gc}. \Vadar{} combines three \textbf{key insights}: (1) Like implicit hierarchical goal conditioning (\ihgc), goal-conditioned critics can train a single, subgoal-independent policy. (2) Leveraging \emph{weighted combinations of critics} can balance current and future goal completion, thereby reducing the myopic overexploitation described in \hyperlink{lim-hgc-myopia}{Limitation 2}. Finally, (3) The extraction from multiple critics is naturally compatible with general goal topologies, addressing \hyperlink{lim-hgc-sequential}{Limitation 3} by allowing more than one subgoal to be active at any given time.




\begin{algorithm}[H]
\caption{Value Disaggregation (\Vadar)}
\label{alg:vd-concise}
\begin{algorithmic}[1]
\State \textbf{Parameters:}  discount factor $\gamma$; learning rate $\eta$; \modcolor{TD clipping parameter $\cclip$.}
\State \textbf{Initialize} goal-conditioned critic $Q_{\phi} : S \times A \times G \to \R$; policy $\pi_\theta$: $S \to \Delta(A)$; replay buffer $\Buffer \gets \emptyset$. \label{line:vadar_multi_subgoal_critic}
\For{episode = 1, 2, \dots}
    \While{episode not done}
        \State $a_t \gets \pi_{\thetatarget}(s_t)$,  transition $s_{t+1} \sim P(s_t,a_t)$
        \State Evaluate subtask completion \modcolor{vector: $ \bd_t = \{\done_g(s_{t+1})\}_{g \in G} \in \{0,1\}^G$} \label{line:vadar_get_dones}
        \State $\Buffer \gets \Buffer \cup \{(s_t, a_t, s_{t+1}, \modcolor{\br_t, \bd_t}\}$, where \modcolor{$\br_t[g] \gets  \rsparse(\bd_t[g])$}  \For{number of updates}
        \State Sample $(s, a, s', \modcolor{\br, \bd})\sim \Buffer$, $a' \sim \pi_{\theta}(s')$
        \State $\bytarg[g] = \modcolor{\mathrm{min}\{}\modcolor{\br}[g] +   \gamma {Q_{\phi}}(s',a',g),\,\modcolor{\cclip \cdot \rdone\}}$, for all $g \in G$ \Comment{\modcolor{all-goal} targets} \label{line:vadar_reward_clipping}
        \State $\phi \gets \phi - \eta \nabla_{\phi}\modcolor{\sum_{g \in G}}\Lcritic(Q_{\phi}(\cdot,\cdot,g) \mid s,a,\bytarg[\modcolor{g}])$. \Comment{\modcolor{multi-goal} critic loss}
        \State $\Qagg(\cdot,\cdot) \gets \modcolor{\sum_{g}\bw[g] }Q_{\phi}(\cdot,\cdot,g)$, where \modcolor{$\bw \in \R^{G} \gets \getWeight(\bd)$} \label{line:vadar_get_Q_weights}
        \State $\theta \gets \theta - \eta \nabla_\theta \Lpol(\pi_\theta \mid s,a,\Qagg)$
\EndFor
\EndWhile
\EndFor
\end{algorithmic}
\end{algorithm}

Conveniently, 
\Vadarnc{} requires \textbf{minimal modification} to standard off-policy RL pipelines, wherein only the critic architecture (conditioned on subgoals) and the corresponding aggregation strategies change.
We now walk through \Cref{alg:vd-concise} in detail, highlighting the components unique to \Vadarnc{} (shown in \modcolor{color} in the pseudocode and below). We give a detailed \textbf{practitioners guide} to using \Vadarnc{} in  \Cref{app:practitioners_guide}. In what follows, we use brackets $[g]$ to denote goal-wise indexing. 

\textbf{Policy update via value weighting.} The key innovation in
\Vadarnc{} is optimizing for weighted  contributions of different critics during policy extraction, which addresses the myopic behavior described in \Cref{sec:limits_gc} and illustrated in \Cref{app:hgc-limitations}.  At each environment step, \Vadarnc{} evaluates a \modcolor{vector} of subgoal completion indicators $\bd_t = \{\done_g(s_{t+1})\}_{g \in G} \in \{0,1\}^G$. It uses these to  computing a weighting $\getWeight$ (Line~\ref{line:vadar_get_Q_weights}) for policy extraction:
\begin{align}
   \confinlinestyle
   \Qagg(s,a)
   = \sum_{g \in G} \bw[g] \cdot Q_\phi(s,a,g),
   \qquad
   {\bw \gets \getWeight(\bd)} .
   \label{eq:q_agg}
   \confinlinestyle
\end{align}
The policy is then updated to maximize $\Qagg$ via the policy loss
$\Lpol(\pi_\theta \mid s,\Qagg)$. Because the policy
$\pi_\theta : S \to \Delta(A)$ is never conditioned on $g$
(\emph{subgoal agnostic}), subgoals are not needed at test time.
A good weighting function provides strong training signal for completing current
goals while improving behavior for future goals.

\begin{table}[!ht]
\iftoggle{neurips}{\vspace{-1.25em}}{}
\centering
\small
\setlength{\tabcolsep}{3.5pt}
\renewcommand{\arraystretch}{1.08}
\iftoggle{neurips}{%
  \caption{Value-weighting variants used for policy extraction in \Vadar{}.}
  \label{tab:value-weighting}
}{}
\begin{tabular}{@{}lll@{}}
\toprule
\textbf{Config.} & \textbf{Weight} $\bw[g]$ & \textbf{Interpretation} \\
\midrule
Current-myopic
& $\mathbb{I}[g = \gcurr]$
& Only the current subtask contributes. \\
Current-future
& $\mathbb{I}[g \succeq \gcurr]$
& All incomplete subtasks contribute. \\
Next-myopic
& $\mathbb{I}[g = \gnext]$
& Focus on the next subtask, gated by the current subtask. \\
Next-future
& $\mathbb{I}[g \succeq \gnext]$
& Incomplete subtasks with incomplete parents contribute. \\
\bottomrule
\end{tabular}
\iftoggle{neurips}{}{%
  \caption{Value-weighting variants used for policy extraction in \Vadar{}.}
  \label{tab:value-weighting}
}
\end{table}

\confpar{Weighting choices for sequential goals.}
As in \Cref{sec:preliminaries}, first assume that goals are ordered as $1,\dots,|G|$, with goal
$g^i$ serving as a prerequisite for goal $g^{i+1}$, denoted $g^i \prec g^{i+1}$. (see \Cref{app:dag_extension} for the general case. Given a vector of completion
indicators $\bd$, recall that
$\gcurr =  \inf\{g : \bd_{g} = 0\}$ denotes the current unfinished goal. 
\par
\Needspace{14\baselineskip}
\begin{wrapfigure}{r}{0.35\textwidth}
  \centering
  \vspace{-1em}
  \includegraphics[width=\linewidth]{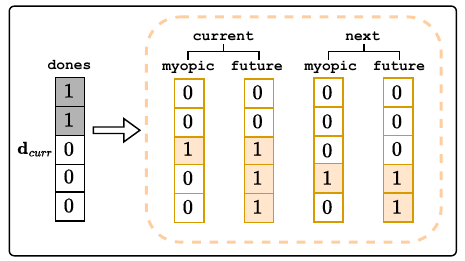}
  
  \caption{Subgoal weighting used by $\getWeight$.}
  \label{fig:subgoal-weighting}
  \vspace{-1em}
\end{wrapfigure}
We also define the next goal as $\gnext := \min\{g^+, g^{|G|}\}$, where $g^+  := \inf\{g':g' \succ \gcurr\}$ ; again, $\min$/$\inf$ is taken with respect to the goal order $\prec$, and  $\succeq$ in \Cref{tab:value-weighting} denotes the negation of $\prec$. \Cref{tab:value-weighting} summarizes four choices for the weighting function along two axes. The first axis distinguishes \emph{myopic} weightings, which optimize only a single goal, from \emph{future-aware} weightings, which include all relevant future goals. The second axis distinguishes \emph{current} weightings, which include the current goal, from \emph{next} weightings, which only optimize for goals after the current goal. These choices are illustrated in \Cref{fig:subgoal-weighting}. Detailed experiments can be found in \cref{sec:understanding_vadar}.



\iclrendwrap

\textbf{All-Subgoal Critic Learning}
Importantly, the weighted extraction \Cref{eq:q_agg} requires that the critic values $Q(s,a,g)$ for \emph{each goal} are well-estimated. Therefore, we design an all-subgoal critic update. Recall that, at each environment step, \Vadarnc{} evaluates a \modcolor{vector} of subgoal completion indicators $\bd_t$,  and computes a corresponding reward vector $\br_t = \rsparse(\bd_t)$ entrywise via the sparse reward function (\Cref{eq:rdef}, Line \ref{line:vadar_get_dones}). This converts a  sparse task reward into $|G|$  per-subgoal rewards (Line \ref{line:vadar_reward_clipping}), each of which provides signal as soon as its corresponding subgoal is completed. The tuple $(s_t, a_t, s_{t+1}, \modcolor{\br_t, \bd_t})$ is stored in the replay buffer. Given a sampled transition $(s, a, s', \modcolor{\br, \bd})$ from the buffer and a new action $a' \sim \pi_\theta(s')$, the \modcolor{vector} of TD targets is computed entrywise:
\begin{align}
    \bytarg[g] = \modcolor{\min\{} \modcolor{\br[g]} + \gamma\, \modcolor{Q_\phi}(s', a', g),\; \modcolor{\cclip \cdot \rdone[g]} \modcolor{\}} \hspace{0.25in} \forall g \in G, \label{eq:vector_td} \confinlinestyle
\end{align}
where the $\min$ is applied entrywise. The critic is updated by a gradient update on the critic loss across all subgoals: $\sum_{g \in G} \Lcritic(Q_\phi(s, a, g) \mid s, a, \ytarg_g)$. 

\confpar{Why clipping?} The critic update involves a ``clipping'' of the maximum critic value. This is because, 
after a subgoal is completed, the agent keeps collecting $\rdone$ at every step, so without the clip the Q-value would accumulate to $\rdone / (1-\gamma)$. The clip caps the target at $\cclip \cdot \rdone$ (Line \ref{line:vadar_reward_clipping}), capping critic values for completed subgoals. 

\begin{AIbox}{\Vadarnc{}: Key Design Decisions}
\begin{itemize}[leftmargin=2.5mm]
    \item \textbf{Subgoal-agnostic policy.} Subgoal information enters only through the critic at training time;  deployed policy $\pi_\theta(a \mid s)$ needs no goal selection/conditioning at inference.
    \item \textbf{Weighted aggregation for policy extraction.} Per-subgoal Q-values are aggregated into a single scalar signal $\Qagg$; the next-future (\VadarNF{}) weighting is our default method which balances current and future subgoal completion.
    \item \textbf{Reward clipping at $\cclip \cdot \rdone$.} Removes the need for critic loss gating on completed subgoals.

\end{itemize}
\end{AIbox}
\vspace{.3em}
 \Vadar{} benefits from both dense critic supervision and weighted policy extraction (\Cref{sec:exp-lastq,sec:exp-reward-shaping}). We provide further details on
  non-sequential tasks in \Cref{app:dag_extension}, multi-objective optimization in \Cref{app:multiobj-pushT}, and alternative critic parameterizations in
  \Cref{app:critic-parameterizations}. Finally, \Cref{sec:reward-shaping-theory} shows that fixed-weight value aggregation is \emph{theoretically (but only
  theoretically)} equivalent to reward shaping but we later show (\Cref{sec:exp-reward-shaping}) that this equivalence does \emph{not} explain \Vadar{}'s empirical success.




\newcommand{\densecolorr}{orange!80!black}
\newcommand{\densecolor}[1]{{\color{\densecolorr} #1}}

\section{Experiments}
\label{sec:experiments}
We evaluate \Vadar{} on a suite of long-horizon manipulation benchmarks. Each method is pretrained via behavior cloning on offline data, stopped early at a low success rate so all methods begin fine-tuning from a comparable starting point, and then fine-tuned online for a fixed number of environment steps. 
\iftoggle{arxiv}
{The remainder of this section is structured as follows: \Cref{sec:exp-long-horizon} compares \Vadar{} to \hgc{} and to the sparse-reward backbone across all benchmarks. \Cref{sec:exp-understanding-vadar-gains} steelmans the comparison with a spectrum of intermediate variants bridging \hgc{} and \Vadar{}. \Cref{sec:exp-iterative} studies the synergy between \Vadarnc{} and iterative critic architectures. \Cref{sec:pref_vs_td} compares \Vadarnc{} to dense preference rewards, and \Cref{sec:exp-multiobj} extends \Vadarnc{} to multi-objective rewards.
}{}
Per-task descriptions and hyperparameters are detailed in \Cref{app:task_descriptions}. 

\confpar{Baselines.}
We  study \Vadar{} with two different weighting schemes  (\Cref{tab:value-weighting}) - \VadarNF{}, using \emph{next-future} (plotted in a solid line) and \VadarCF{}, which use \emph{current-future} (plotted in a dotted line). We compare against 3 alternatives, each using the same base-algorithm per plot: (1) \textbf{Hierarchical Goal Conditioning} (\hgc{}), where both the policy and critic are conditioned on the current subgoal $g$, where $g$ is set to the next incomplete subgoal at every step 
(\Cref{alg:gc-concise}).
(2) \textbf{Implicit Hierarchical Goal Conditioning }(\ihgc{}), which retains the goal conditioned critic of \hgc{} but does not condition the policy on the subgoal.
(3)
\textbf{Dense Preference Reward}, denoted \densecolor{\textmd{Dense}}, where we adopt best-practices for a learning dense reward model via preferences (described in \Cref{app:dense_preference_reward}
), where the reward models are pretrained with additional rollouts on a richer dataset of successful and failed runs in order to steelman their performance.
(4) The \textbf{original} base-algorithm that using sparse reward with no goal information, denoted as ``\sparsecolor{\textmd{Sparse}}''.
{}
\iftoggle{neurips}
{\vspace{-2.2em}}{}  

\textbf{Environments.}
\relax{}
We evaluate on four benchmark suites spanning different task horizons and difficulties. \Robocasa{}~\citep{nasiriany2026robocasa365} provides our three longest-horizon tasks, while \Robomimic{}~\citep{robomimic2021} tests contact-rich, medium-horizon manipulation on \Transport{} and \ToolHang{}. We also evaluate on three multi-stage kitchen tasks from \Libero{}~\citep{liu2023liberobenchmarkingknowledgetransfer} and the long-horizon \texttt{scene-play-singletask-task4} task from \OGBench{}~\citep{park2025ogbenchbenchmarkingofflinegoalconditioned}. Task descriptions, subgoals, and their dependency structures are given in \Cref{app:task_descriptions}.
{}

\confpar{Base-algorithm instantiations.} We instantiate all above baselines with three value-based RL algorithms (details  given in \Cref{app:base_algorithms}).
On the most challenging long-horizon environments (\Robocasa{}), we use \OGPO{}~\citep{patil2026ogpo}, a state-of-the-art offline-to-online RL algorithm for flow-based policies. On the medium-horizon \Robomimic{} tasks, we evaluate on both \OGPO{} and the popular, though less performant \FQL{}~\citep{park2025flowqlearning}, demonstrating that the benefits of \Vadar{} apply to both.  Because \OGPO{} is considerably more sample efficient that than \FQL{}  from \sparsecolor{\textmd{Sparse}}  reward when all BC data are used, we also test \OGPO{} in a regime with limited BC data, denoted \textbf{low data}, on \Robomimic{}; details are described in \Cref{tab:hp-low-data}. 
Lastly, we demonstrate \Vadar{}'s applicability to non-generative policies with  the seminal Soft Actor Critic (\SAC{})  algorithm \citep{haarnoja2018soft2} the simpler  on \OGBench{} and \Libero{} environments.

\begin{figure}[h]
    \centering
    \vspace{-1em}
    \includegraphics[width=0.98\linewidth]{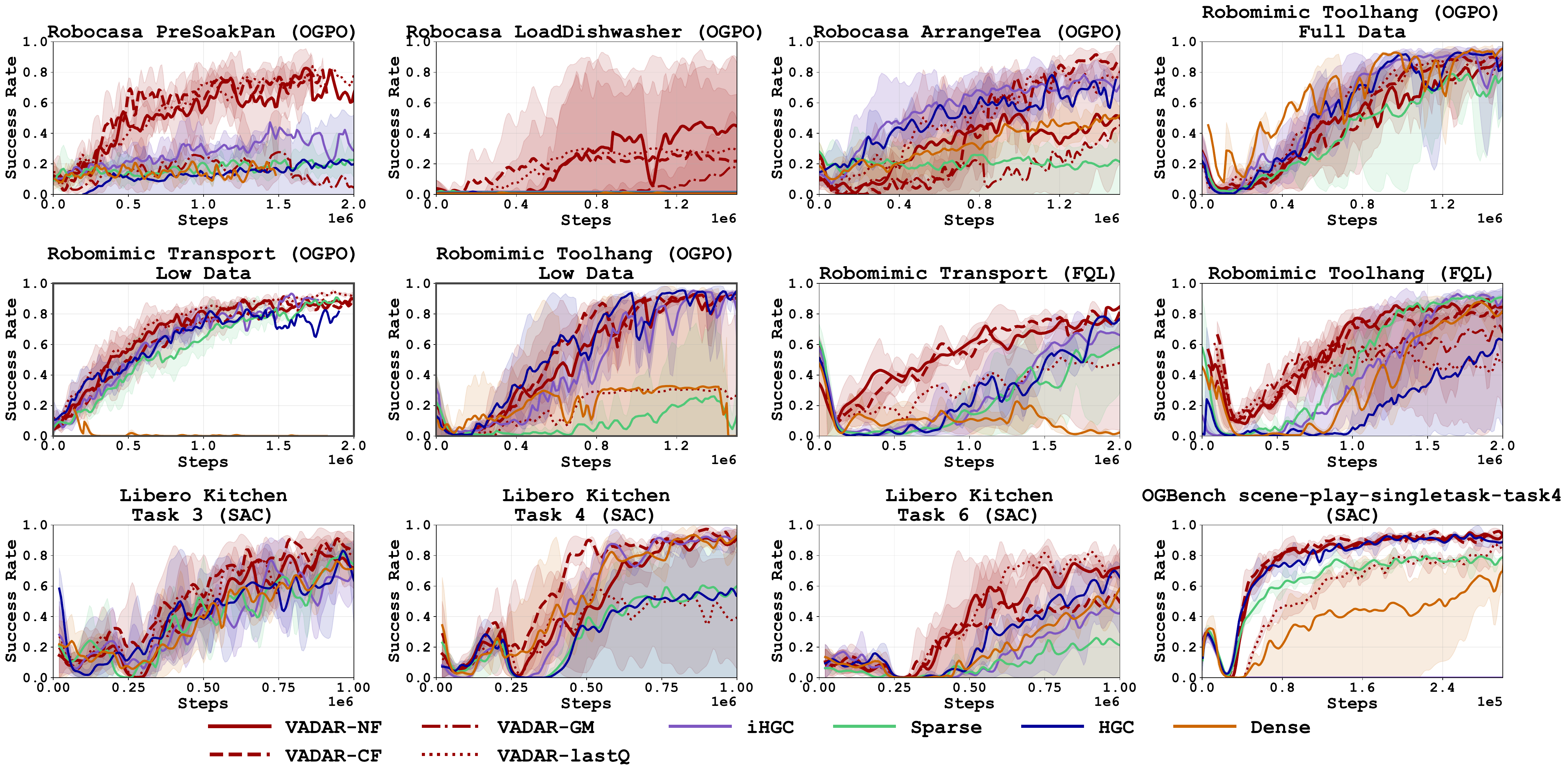}
    \caption{Success rate of \VadarNF{} (solid) and \VadarCF{} (dotted) against \hgc{}, its implicit variant \ihgc{}, the \densecolor{\textmd{Dense}} preference-reward baseline, and the \sparsecolor{\textmd{Sparse}} backbone across \Robocasa{}, \Robomimic{}, \Libero{} and \OGBench{}, for our three instantiations \OGPO{}, \FQL{} and \SAC{}.}
    \label{fig:main-results}
    
\end{figure}

\subsection{\Vadar{} outperforms natural baselines}
\label{sec:exp-long-horizon}

\begin{figure}[t!]
    \centering
    \includegraphics[width=\linewidth]{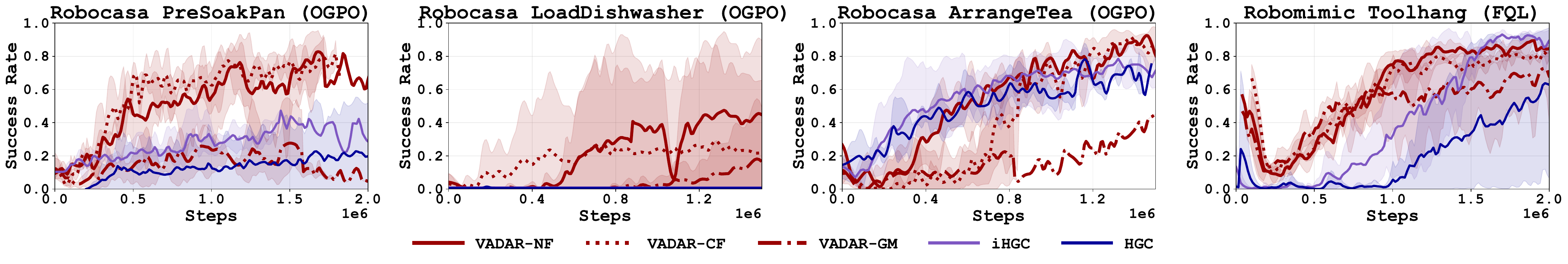}
    \caption{Weighting ablation of \Vadar{} (\VadarGM{}, \VadarCF{}, \VadarNF{}) against \hgc{} and \ihgc{} on \Robocasa{} with the \OGPO{} backbone and \Robomimic{} \ToolHang{} with the \FQL{} backbone. }
    \label{fig:gating}
    \vspace{-1em}
\end{figure}

\Cref{fig:main-results} shows that \Vadar{} improves various policy-extraction methods, with the largest gains on the most challenging long-horizon tasks. Not only does \Vadar{} benefit a SOTA method like \OGPO{}, but also enchances weaker methods like \FQL{} and \SAC{} with improved critic learning.  

\confpar{\Vadar{} supplies useful values for aggressive policy extraction.}
\OGPO{} aggressively optimizes critics using PPO-style policy extraction on the generative policy. We find that \Vadar{} with \OGPO{} matches or outperforms all baselines, with the largest gains on the most challenging long-horizon tasks (\Cref{fig:main-results}). On \Robocasa{}, \Vadar{} is the only method to achieve convergence on \loaddishwasher{}, while on \presoakpan{} it reaches nearly double the success rate of the strongest baseline, \ihgc{}. Similarly, on \arrangetea{}, \Vadar{} outperforms baselines with superior training stability. On \Robomimic{} with \OGPO{}, we find that \Vadar{}, \hgc{} and \ihgc{} obtain similar performance in both the low- and full-data regimes. 

\confpar{The benefits extend to first-order policy extraction.}
\Vadar{} also improves learning with \FQL{} and \SAC{}, which update the policy through action gradients of the critic. With \FQL{}, \Vadar{} learns faster on \Robomimic{} \Transport{} and \ToolHang{}, demonstrating that its benefits do not require \OGPO{}'s policy update. With \SAC{}, the advantage is most apparent on the longer-horizon \Libero{} tasks, whilst performance remains comparable to goal-conditioned baselines on \OGBench{}. Together, these comparisons support the use of value disaggregation across different policy-extraction methods. Even where performance is comparable, \Vadar{} retains the practical benefit of requiring no goal selection at inference.

\confpar{Additional baseline comparisons.}
In \Cref{app:additional-baselines}, we compare to two additional algorithm families, both of which significantly underperform \Vadar{}. First, we test hindsight experience replay (HER) \citep{andrychowicz2018hindsightexperiencereplay} on contact-rich \Libero{} manipulation, including variants supplied with the same subgoal information as \hgc{}. Second, we compare against reward learning with access to subgoal information, training separate preference-based reward models for each subgoal and aggregating with next-future weighting. 
{}
\confpar{VLM-provided subgoal labels.}
On \Robocasa{} \presoakpan{}, \Vadar{} also learns with VLM-provided subgoal labels, reaching similar final success to oracle supervision with only modest effects on sample efficiency (\Cref{sec:vlm-judge}).


\vspace{-.3em}
\subsection{Understanding What Enables \Vadar{}'s Performance}

\begin{figure}[t!]
\centering

\begin{minipage}[t]{0.60\linewidth}
    \vspace{0pt}
    \centering
    \scriptsize
    \renewcommand{\arraystretch}{1.12}

    \resizebox{\linewidth}{!}{%
    \begin{tabular}{lccc}
        \toprule
        \textbf{Variant} & \textbf{Critic supervision}
        & \textbf{Policy weighting} & \textbf{Policy} \\
        \midrule
        \hgc{} (reference)
            & current subgoal & current-myopic & $\pi(a\mid s,g)$ \\
        \ihgc{} (\Cref{sec:limits_gc})
            & current subgoal & current-myopic & $\pi(a\mid s)$ \\
        \steelmancolor{\textmd{Steelman 1}}
            & all subgoals & current-myopic & $\pi(a\mid s,g)$ \\
        \steelmancolor{\textmd{Steelman 2}}
            & all subgoals & current-future & $\pi(a\mid s,g)$ \\
        \steelmancolor{\textmd{Steelman 3}}
            & all subgoals & next-future & $\pi(a\mid s,g)$ \\
        \VadarGM{}
            & all subgoals & current-myopic & $\pi(a\mid s)$ \\
        \VadarCF{} (ours)
            & all subgoals & current-future & $\pi(a\mid s)$ \\
        \VadarNF{} (ours)
            & all subgoals & next-future & $\pi(a\mid s)$ \\
        \bottomrule
    \end{tabular}%
    }
\end{minipage}\hfill
\begin{minipage}[t]{0.38\linewidth}
    \vspace{0pt}
    \centering
    \includegraphics[width=\linewidth]{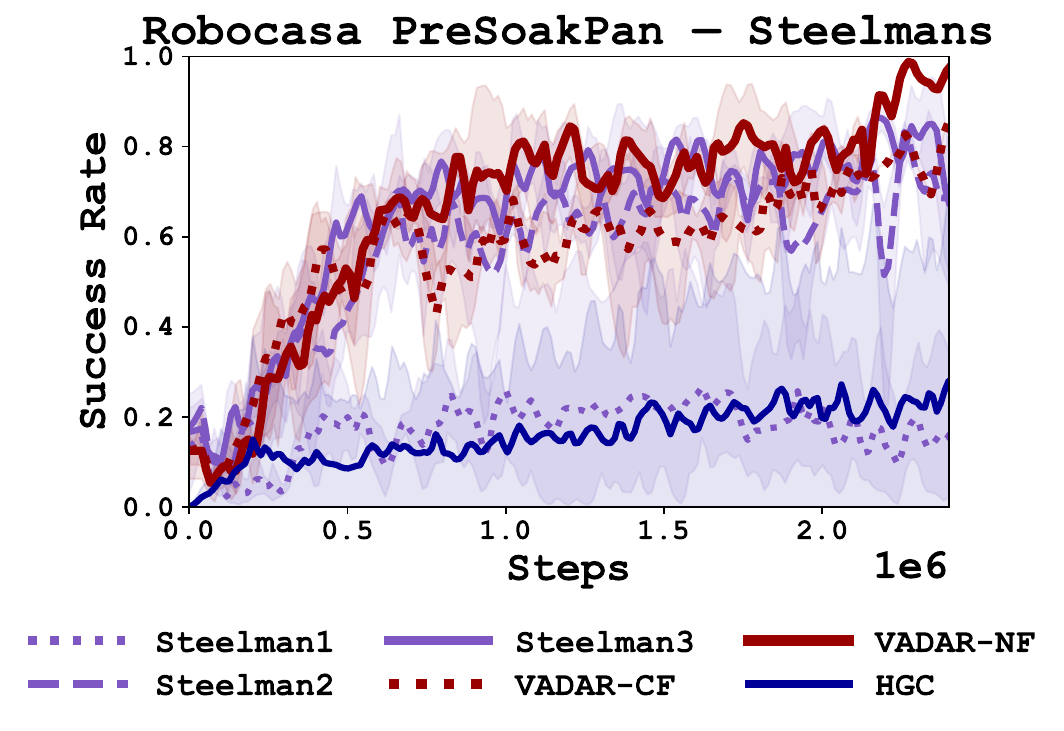}
\end{minipage}\par
\nointerlineskip

\begin{minipage}[t]{0.60\linewidth}
    \vspace{0pt}
    \captionof{table}{Variants bridging \hgc{} and \VadarNF{},
    varying critic supervision, policy weighting
    (\Cref{tab:value-weighting}), and policy goal conditioning.}
    \label{tab:steelman-variants}
\end{minipage}\hfill
\begin{minipage}[t]{0.38\linewidth}
    \vspace{0pt}
    \captionof{figure}{\steelmancolor{Steelman} sweep on
    \Robocasa{} \presoakpan{} with the \OGPO{} backbone.}
    \label{fig:steelmans-final}
\end{minipage}
\vspace{-1em}
\end{figure}
\textbf{Future-Aware Policy Extraction Improves Performance}
\label{sec:understanding_vadar}
\label{sec:exp-understanding-vadar-gains}
To isolate the effect of policy extraction, we compare the current-myopic (\VadarGM{}), current-future (\VadarCF{}), and next-future (\VadarNF{}) weightings described in \Cref{tab:value-weighting}. These ablations share all-subgoal critic supervision and goal-agnostic policies. \Cref{fig:gating} shows that \VadarGM{} does not outperform \hgc{} and \ihgc{}, while the future-aware variants improve sample efficiency and are the only methods to converge on \loaddishwasher{}. Hence, dense critic supervision alone is insufficient when used with myopic policy extraction. 
{}
Next, we test whether future-aware policy extraction also improves \hgc{} while retaining policy goal conditioning. On \Robocasa{} \presoakpan{} with \OGPO{}, we introduce three \emph{Steelman} variants, namely 
\steelmanx{1} using current-myopic, \steelmanx{2} using current-future, and \steelmanx{3} using next-future, extraction respectively. \cref{fig:steelmans-final} shows that \steelmanx{1} remains close to \hgc{}, while \steelmanx{2} and \steelmanx{3} approach \VadarCF{} and \VadarNF{}. These comparisons show that future-aware extraction benefits both policy formulations on this task, allowing \Vadar{} to retain these benefits without requiring subgoal selection at inference. 

\vspace{-.3em}

  \begin{figure}[h!]
      \centering
      \includegraphics[width=\linewidth]{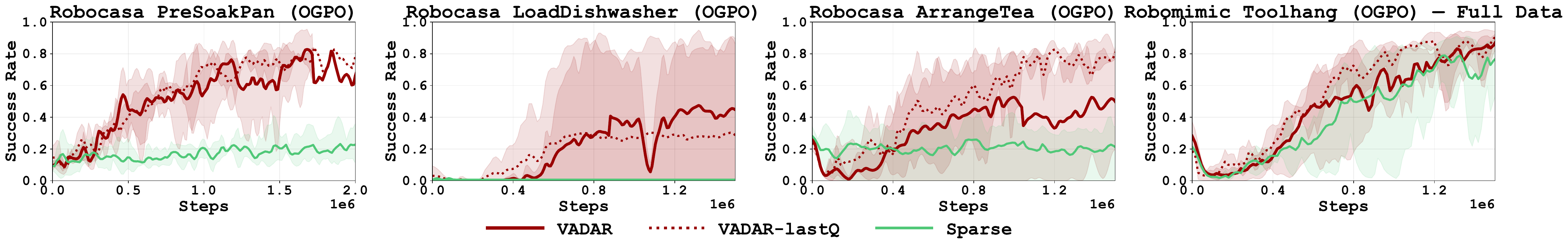}
      \caption{Success rate of \LastQ{} compared with \VadarNF{} and \sparsecolor{\textmd{Sparse}} on selected tasks. \Cref{fig:lastq-full} gives the results across all benchmarks and instantiations.}
      \label{fig:lastq}
    \end{figure}
    
\textbf{Dense Subgoal Supervision Improves Critic Learning}
\label{sec:exp-lastq}. To further test whether dense supervision across multiple temporal resolutions improves critic learning, we ask a stricter question: \emph{can intermediate subgoal supervision improve the final-task critic even when those intermediate values are never used for policy extraction?} We introduce \LastQ{}, which trains the same multi-goal critic as \Vadar{} (\Cref{sec:reward_disagg}) but extracts the policy using only the final-subgoal value $Q_\phi(s,a,g^{|G|})$. Thus, \LastQ{} uses the same policy objective as the \sparsecolor{\textmd{Sparse}} baseline, while additionally supervising the critic on intermediate subgoals.
{}
\Cref{fig:lastq} shows that \LastQ{} approaches \VadarNF{} on several tasks and substantially outperforms \sparsecolor{\textmd{Sparse}} on the \Robocasa{} benchmarks. Because the policy objective is unchanged, these gains isolate the effect of denser critic supervision. Full results across all benchmarks and policy-extraction methods are provided in \Cref{app:lastq-full}. This suggests that intermediate subgoal targets improve the critic even when their values are not used directly for policy extraction. 

\confpar{Effect of the policy extraction method.} While \LastQ{} sometimes matches \VadarNF{}, the size of this gap depends on the policy-extraction method. \LastQ{} is generally closer to \VadarNF{} under \OGPO{}, where the critic-derived signal supervises every denoising step of the generative policy. In \FQL{}, where the critic provides less dense supervision than \OGPO{}, \LastQ{} alone is not sufficient, and Next-future weighting provided additional gains.

\begin{AIbox}{Takeaway: Intermediate supervision improves final-task learning} \LastQ{}, which uses \Vadar{} for critic supervision, but only supervises on the \textbf{final subgoal},  improves over \sparsecolor{\textmd{Sparse}} on the long-horizon \Robocasa{} tasks even though intermediate subgoal values never enter its policy objective directly. This supports our hypothesis that dense subgoal supervision improves critic learning. Next-future weighting provides additional gains where final-subgoal extraction alone remains insufficient.
\end{AIbox}

\relax{}
\relax{}

\textbf{Further findings.} We further compare against \VadarShaped{}, which aggregates subgoal rewards before the Bellman backup and trains a scalar critic. It fails to learn on the harder \Robocasa{} tasks and is less sample efficient than \VadarNF{} elsewhere (\Cref{sec:exp-reward-shaping}). These and the \LastQ{} results  suggests that reward shaping alone does not explain the gains from all-subgoal critic supervision.
\confpar{Multi-objective optimization.}
\Vadar{} also applies when reward components represent distinct objectives. As a proof of concept, \Cref{sec:exp-multiobj} shows how to train separate critics for task completion and obstacle avoidance
improving performance and training stability over \SAC{} trained on the aggregated rewards with optimal reward weighting. 

\relax{}
\relax{}

\subsection{Test-Time Steering with a Frozen Policy}
\label{sec:exp-steering}

We hypothesize that \Vadar{}'s success comes from learning a better critic through denser supervision across multiple temporal resolutions. We test this directly by asking: \emph{does the learned critic better identify actions that lead to task success?} To isolate critic quality from policy learning, we fix a behavior-cloned policy and update only the critic. Following Q-chunking~\citep{li2025reinforcement}, we use best-of-$N$ steering: a frozen behavior-cloned policy proposes $N$ candidate action chunks, and the critic selects the highest-valued candidate. We compare \VadarNF{} and \sparsecolor{\textmd{Sparse}} critics trained on the same demonstrations and frozen-policy rollouts. For this steering experiment, next-future aggregation uses subgoal-completion indicators at inference; the unconditional policy learned in our main experiments requires no such information.

\begin{figure}[!htbp]
    \centering
    \includegraphics[width=\linewidth]{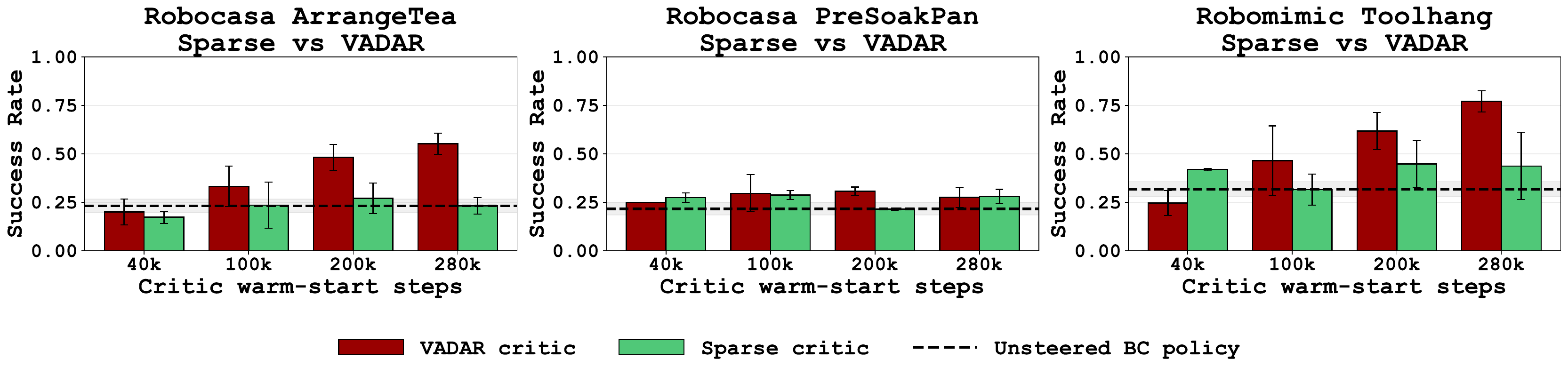}
    \caption{Best-of-$N$ steering ($N=4$) of a frozen BC policy with \VadarNF{} and \sparsecolor{\textmd{Sparse}} critics as critic training progresses. The dashed line marks the unsteered policy's success rate. We use the BC-pretrained flow policy from \OGPO{}, without online policy updates.}
    \label{fig:steering}
\end{figure}

\Cref{fig:steering} shows that the \Vadar{} critic improves action selection most clearly on \arrangetea{} and \ToolHang{}. At the largest critic-training budget with $N=4$, success rises from 23\% to 55\% on \arrangetea{} and from 32\% to 77\% on \ToolHang{}, compared with 23\% and 44\% for the \sparsecolor{\textmd{Sparse}} critic. Gains on \presoakpan{} are smaller. These results support the benefit of subgoal supervision for critic learning independently of policy updates. \Cref{app:steering} provides the full protocol, seed variability, candidate-count sweep, and critic visualizations.

\subsection{Hypothetical Mechanism: Implicit Bellman-Backup via  Dense Critic Supervision}
\label{sec:exp-bellman}

\noindent The frozen-policy experiments show that subgoal supervision improves action selection independently of policy updates. Together with the \LastQ{} results, they motivate a further question: \textbf{\emph{how can intermediate subgoal targets improve value predictions for the final task?}} We hypothesize that shared representations allow supervision at shorter horizons to inform value prediction at longer horizons. We refer to this proposed mechanism as an \colorbold{implicit Bellman backup}.
\begin{wrapfigure}{r}{0.36\textwidth}
    \centering
    \includegraphics[width=\linewidth]{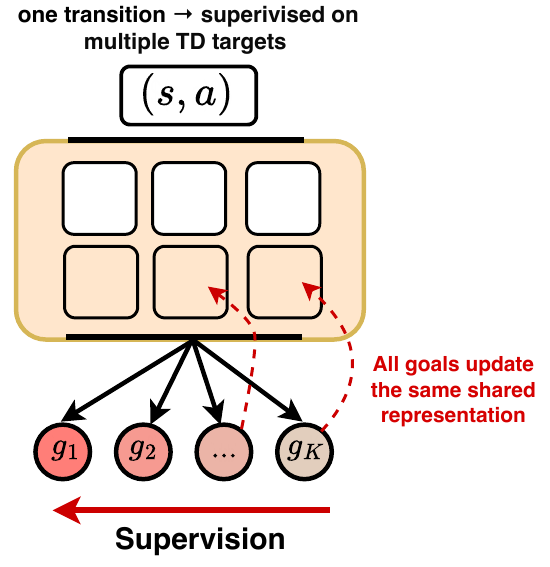}
    \caption{All-subgoal TD targets update shared critic representations.}
    \label{fig:implicit-bellman}
\end{wrapfigure}
 For a fixed policy, the classical Bellman recursion is
\begin{equation}
    \begin{aligned}
    Q^\pi(s,a) &= r(s,a)\\
    &\quad + \gamma\,
    \mathbb{E}_{s'\sim P(\cdot\mid s,a),\,a'\sim\pi(\cdot\mid s')}
    \big[Q^\pi(s',a')\big].
    \end{aligned}
    \label{eq:bellman}
\end{equation}
With sparse final-task rewards, one-step TD learning propagates the reward signal backward through successive bootstrapped targets. Function approximation can generalize across states, but the final-task reward alone provides little direct supervision for learning representations of intermediate progress.

\noindent
On the other hand, \Vadar{} supervises $Q_\phi(s,a,g)$ for every subgoal $g\in G$ at each sampled transition (\Cref{eq:vector_td}). These targets span multiple temporal resolutions: a state far from task completion may already be close to an intermediate subgoal. We speculate that predicting these nearer outcomes trains useful features before the final-task value is accurate.
Furthermore, because the subgoal predictions share network parameters (\Cref{fig:implicit-bellman}), features learned from intermediate targets are also available to the final-subgoal prediction $Q_\phi(s,a,g^{|G|})$. We hypothesize that this transfer helps the critic learn dependencies across stages of the task that would otherwise have to be learned from final-task rewards alone.

The \LastQ{} comparison provides evidence of this hypothesized mechanism: intermediate targets improve learning even though only the final-subgoal value enters policy extraction (\Cref{sec:exp-lastq}). These comparisons do not establish additional Bellman backups within a forward pass or a guaranteed reduction in the number of TD updates. \Cref{app:vadar_iterative_critics} develops a related interpretation for iterative critics, which explicitly refine a vector of value estimates over multiple integration steps.

\iclrendwrap

\vspace{-.2em}
\section{Limitations and Conclusion}
\label{sec:conclusion}
\vspace{-.2em}
This work introduces \Vadar{}, an approach to critic learning that utilizes subtask decomposition, which provides significant improvements across a wide range of policy extraction algorithms as well as challenging environment specifications such as long horizon or safety filtering. Still, a limitation of \Vadar{} is that it defining a fixed DAG structure of subtasks, whereas other methods enable conditioning on a continuum of goal information \citep{black2024susie, pi07_2026}. We look forward to generalizing the design principles of  \Vadar{} to this continuous or combinatorially large goal spaces, such as those described by images or text, in future work.
Moreover,  while recent literature has focused on improved methods for policy extraction, particularly with modern expressive policy parameterizations \citep{ren2024diffusion,patil2026ogpo,park2025flowqlearning,dong2025expo}, \Vadar{} demonstrates that there remains significant untapped potential for algorithmic improvements to critic learning, particularly in long-horizon tasks. Moreover, in light of the recent popularity of scaling test-time computation \citep{snell2024scaling}, our study suggests a complimentary scaling direction: scaling the \emph{density of training signals} used to supervise learning, which not only improves learning efficacy, but also reduces the need for complex test-time computation (e.g. VLM queries for goal-selection) at deployment. 


\Needspace{9\baselineskip}
\subsubsection*{Acknowledgments}

The authors would like to thank Nidhi Hegde for helping with the initial implementation. MS would like to thank Abhishek Gupta and Andrea Bajcsy for insightful discussions. SS would like to thank Manan Agarwal and Giri Anantharaman for their help with the baseline implementations  and Shreyas Kowshik, Michael Chen, Pranav Vignesh, Elizabeth Terveen and other members of CMU MaxLab  for the informative discussions and feedback on an earlier draft. This research used the Babel computational cluster resources provided by the School of Computer Science at Carnegie Mellon University.
MS and SBP would like to acknowledge support from the Toyota Research Institute (TRI) University 3.0 program.


\newpage

\bibliographystyle{iclr2027/iclr2027_conference}
\newpage
\bibliography{refs}

\appendix

\crefalias{section}{appendix}
\crefalias{subsection}{appendix}
\crefalias{subsubsection}{appendix}


\section{Practitioner's Guide to \Vadar{}}
\label{app:practitioners_guide}

Here we give a short hands-on reference for applying \Vadar{} to a new
task or base algorithm. 
\confpar{Subgoal decomposition.}
Most of the long horizon tasks can be decomposed naturally into subtasks which should be taked as the subgoals for \Vadar{}. For robotics tasks, we believe that the subgoals defined by us for our environments (\Cref{app:task_descriptions}) can serve as good examples for subgoal selections for a new task.  

\confpar{Weighting variant.}
From our experiments, we found \VadarNF{} to be the most performative variant
for robotic manipulation tasks, and recommend it as the default. \VadarCF{} can
train slightly faster early on when the first subgoal is hard to reach from
BC initialization, at the cost of greater susceptibility to over-exploiting
the current goal. The myopic variants are useful as ablations but not as
defaults.

\textbf{Adapting to a new base algorithm.}
\Vadar{} is a wrapper over any Value Based RL Algorithm. Given any value-based RL algorithm $\mathcal{A}$, the
required modifications are:
\begin{enumerate}
    \item Change critic output dim from $1$ to $|\mathcal{G}|$ (backbone
    unchanged); ensemble aggregation becomes entrywise.
    \item Store per-step completion vectors $\mathbf{d}_t$ and per-head
    rewards $\mathbf{r}_t$ in the replay buffer.
    \item Replace the critic loss with its summed-over-heads version; broadcast any $\mathcal{L}_{\mathrm{reg}}$
    across heads (see \Cref{app:sac} for more details).
    \item Apply the entrywise reward clip in the TD target.
    \item In the policy loss, replace $Q_\phi(s,a)$ with $Q_{\mathrm{agg}}(s,a)
    = \langle w(\mathbf{d}(s)), \langle Q_\phi(s,a, g), g \in G\rangle \rangle$.
\end{enumerate}
The policy is \emph{not} conditioned on subgoals. We have applied this recipe
unchanged to SAC, FQL, and OGPO (\Cref{app:base_algorithms}), and these serve as good references for applying \Vadar{} to any new Value Based RL Algorithm.


\section{Critic Parameterizations, Ensembling, and Target Networks}
\label{app:critic_ensembling}

This section expands the brief mention in \Cref{sec:preliminaries} of critic ensembling and target exponential moving averages (EMAs), which are standard practices in modern off-policy value-based RL and which we adopt without modification in \Vadar{}. The recipes below apply uniformly across the base instantiations used in our experiments; specific details that interact with the policy update are covered in \Cref{app:base_algorithms}.

\subsection{Critic Parameterizations}
\label{app:critic-parameterizations}

\confpar{Vector-valued  multi-goal critic.}  As an alternative to the goal-conditioned critic $Q_\phi(s,a,g)$, \Vadarnc{} can use a \modcolor{multi-goal} critic $\bQ_\phi: S \times A \to \modcolor{\R^G}$ that maps each state-action pair to a $|G|$-dimensional vector of Q-values. The $g$-th entry $\bQ_\phi[g](s,a)$ estimates the value of the state-action pair under the MDP induced by the per-subgoal reward $r_g$. Crucially, all $|G|$ entries share a single network backbone, enabling representation sharing across subgoals: features learned for predicting one subgoal's completion are available to all others, Both parameterizations provide dense supervision across subgoals; \Cref{sec:exp-critic-param} compares them empirically.
 \Cref{app:successor-features} shows the vector-valued critic can be interpreted simply as a structured instantiation of successor features \citep{barreto2017successor}, where each feature corresponds to subgoal performance.

\subsection{Critic Ensembles}

Off-policy methods are known to suffer from \emph{overestimation bias}: maximizing over a noisy approximation of $Q^{\pi}$ systematically inflates the resulting target~\citep{fujimoto2018addressing, chen2021randomized}. The standard remedy is to train an ensemble of $M$ critic networks $\vectorqfirst,\dots,\vectorqlast: S \times A \to \R$ and aggregate their predictions when forming the TD target. We modify the classical scalar-valued critic in each ensemble member to be a goal-conditioned critic $Q_{\phi_i}: S \times A \times G \to \R$. Concretely, given any aggregator $\mathrm{agg}_M : \R^M \to \R$ (e.g., mean, min), we extend it to act on the $M$ vector critics by applying it independently to each subgoal head: for every $g \in G$,
\begin{align}
    \big[\mathrm{agg}_M\!\big(Q_{\phi_1}, \dots, Q_{\phi_M}\big)\big](s, a, g)
    \;=\; \mathrm{agg}_M\!\Big(Q_{\phi_1}(s,a, g),\, \dots,\, Q_{\phi_M}(s,a, g)\Big).
    \label{eq:entrywise_agg}
\end{align}
We use \Cref{eq:entrywise_agg} as the meaning of ``entrywise aggregation'' throughout the paper, and refer back to it whenever an aggregator is applied to the critic. The ensemble size $M$ is taken to be 10 ($M = 10$) by default, unless otherwise specified.

\subsection{Target Aggregation}
\label{app:critic_aggregation}

The TD target $Q_{\bar\phi}(s', a')$ is computed by aggregating the predictions of $M$ \emph{target} critics $Q_{\bar\phi_1},\dots,Q_{\bar\phi_M}: S \times A \times G \to \R$, whose parameters are an EMA of the online critics (\Cref{app:critic_ema}). We use the ensemble \emph{mean} as the aggregator $\mathrm{agg}_M$ throughout, applied entrywise per \Cref{eq:entrywise_agg}:
\begin{align}
    Q_{\bar\phi}(s',a', g) \;=\; \frac{1}{M}\sum_{i=1}^{M} Q_{\bar\phi_{i}}(s',a', g) \; \forall g \in\; G.
    \label{eq:vadar_qtarg}
\end{align}
The aggregated scalar $\Qtarg(s',a') = \langle \bw, \langle Q_{\bar\phi}(s',a', g), g \in G \rangle \rangle$ used downstream by the policy update (\Cref{eq:q_agg}) is recovered by an inner product with the policy-extraction weights $\bw$ ($\bw(s) = \getWeight(\bd(s))$ based on the weighting strategies we described in \Cref{tab:value-weighting}). Note that other standard alternatives like the full ensemble minimum, or a randomized two-sample minimum in the style of REDQ~\citep{chen2021randomized} can also be used for the curtailing overestimation bias, but we found the entrywise mean to work consistently well across all \Vadar{} experiments and adopt it as the default.

\subsection{Target Network EMAs}
\label{app:critic_ema}

To stabilize the moving Bellman target, we maintain a separate set of target critic parameters $\bar\phi_1,\dots,\bar\phi_M$ that lag the online parameters via Polyak averaging:
\begin{align}
    \bar\phi_i \;\leftarrow\; (1-\tau)\,\bar\phi_i + \tau\,\phi_i, \qquad i = 1,\dots,M,
    \label{eq:critic_ema}
\end{align}
with decay rate $\tau \in (0,1]$ applied at every gradient step. We use $\tau = 0.005$ throughout. When the base algorithm requires a target policy, the same Polyak update is applied to the policy parameters:
\begin{align}
    \bar\theta \;\leftarrow\; (1-\tau)\,\bar\theta + \tau\,\theta.
\end{align}
Whether a target policy is maintained, and how it is used, is algorithm-specific; see \Cref{app:base_algorithms}.

\subsection{Critic Loss}

Combining \Cref{eq:vadar_qtarg,eq:critic_ema} with \Vadar{}'s formulation (\Cref{eq:vector_td}) yields the following per-ensemble-member critic loss, summed over subgoal heads:
\begin{align}
    \Lcritic(Q_{\phi_i} \mid s,a, \bytarg) \;=\; \sum_{g \in G} \big(Q_{\phi_i}(s,a, g) - \bytarg_g\big)^2, \qquad i = 1,\dots,M,
\end{align}
where $\bytarg$ is the clipped vector TD target from \Cref{eq:vector_td}.


\section{Base-Algorithm Details: \SAC{}, \FQL{}, and \OGPO{}}
\label{app:base_algorithms}
We position \Vadar{} as a wrapper around any value-based RL algorithm that replaces each scalar critic with a goal-conditioned critic $Q_{\phi_i}: S \times A \times G \to \R$, evaluated once per subgoal, with a few variations in the $\Qtarg$ aggregation that can be plugged into the base algorithm's policy extraction objective.
Here we present the policy losses $\Lpol$ and any algorithm-specific implementation details for the three base algorithms used in our experiments: \SAC{}, \FQL{}, and \OGPO{}. \Cref{sec:exp-long-horizon} pairs each base algorithm with a benchmark suite (\SAC{} with \OGBench{} and \Libero{}, \FQL{} with \Robomimic{}, and \OGPO{} with both \Robomimic{} and \Robocasa{}); coupling each policy extraction method to its correspondingly challenging benchmark(s). The descriptions below are restricted to the components that interact with \Vadar{}.

\subsection{Soft Actor-Critic (\SAC{})}
\label{app:sac}

\SAC{} \citep{haarnoja2018soft} is a maximum-entropy off-policy actor-critic algorithm that augments the standard return with a per-step entropy bonus $\alpha \mathcal{H}(\pi(\cdot \mid s))$. The optimal policy is the one that maximizes
\begin{align}
    J(\pi) \;=\; \sum_{t \ge 0} \mathbb{E}_{(s_t, a_t)} \left[ R(s_t, a_t) + \alpha\, \mathcal{H}\!\big(\pi(\cdot \mid s_t)\big) \right],
\end{align}
where $\alpha > 0$ is a temperature controlling the entropy-reward trade-off. We use a Gaussian policy with a $\tanh$-squashing transform that bounds actions in $[-1,1]^{\dim A}$~\citep[Eq.~(11)]{haarnoja2018soft}: $a_t = \tanh(\mu_\theta(s_t) + \sigma_\theta(s_t) \odot \epsilon_t)$, with $\epsilon_t \sim \Normal(0, I)$.

\confpar{Critic loss.} The soft Q-function is trained against an entropy-augmented Bellman target. Plugging into the value-based RL template of \Cref{sec:preliminaries} ($\Lcritic(Q_\phi \mid s,a,\ytarg) = (Q_\phi(s,a) - \ytarg)^2 + \regu$), the entropy bonus enters through the regularizer in the target:
\begin{align}
    \ytarg \;=\; r + \gamma\,\big[\Qtarg(s', a') - \alpha \log \pi_\theta(a' \mid s')\big],
    \qquad a' \sim \pi_\theta(\cdot \mid s'),
    \label{eq:sac_target}
\end{align}
so that $\regu(s', a', \pi_\theta) = -\gamma\,\alpha \log \pi_\theta(a' \mid s')$ in the notation of \Cref{sec:preliminaries}.

\newcommand{\Qphi}{Q_{\phi}}

\confpar{Policy loss.} The actor minimizes the KL-divergence between the policy and the (exponentiated) soft Q-function. Using the reparameterization trick~\citep[Eq.~(12)]{haarnoja2018soft2}, this reduces to:
\begin{align}
    \Lpol(\pi_\theta \mid s, \Qphi) \;=\; \mathbb{E}_{a \sim \pi_\theta(\cdot \mid s)} \Big[\alpha \log \pi_\theta(a \mid s) - \texttt{sg}\left[\Qphi(s, a)\right]\Big],
    \label{eq:sac_policy}
\end{align}
where $\texttt{sg}$ is the stopgrad operator. The expectation is estimated by a single reparameterized sample per state, and the gradient flows through the Squashed Gaussian into both the policy $\mu_\theta$ and $\sigma_\theta$ heads.

\confpar{Temperature auto-tuning.} We tune $\alpha$ to match a target entropy $\bar{\mathcal{H}} = -\dim(A)$ via the auxiliary loss~\citep[Eq.~(18)]{haarnoja2018soft2}:
\begin{align}
    L(\alpha) \;=\; \mathbb{E}_{a \sim \pi_\theta(\cdot \mid s)}\!\left[ -\alpha \log \pi_\theta(a \mid s) - \alpha \bar{\mathcal{H}}\right].
\end{align}

\confpar{Interaction with \Vadar{}.} The scalar critic $Q_\phi: S \times A \to \R$ is replaced by the goal-conditioned critic $Q_\phi: S \times A \times G \to \R$, which is evaluated once for every subgoal $g \in G$. The Bellman target in \Cref{eq:sac_target} is computed for all goals, with the same next action $a'$ and the same entropy regularizer shared across subgoals:
\begin{align}
    \bytarg[g] \;=\; \br[g] + \gamma\,\big[Q_{\bar\phi}(s', a', g) - \alpha \log \pi_\theta(a' \mid s')\big],
    \qquad a' \sim \pi_\theta(\cdot \mid s'),
    \qquad \forall g \in G,
    \label{eq:sac_vadar_target}
\end{align}
where $Q_{\bar\phi}$ is the target critic as defined in \Cref{app:critic_ema}, and the per-ensemble-member critic loss is summed over subgoals (\Cref{app:critic_ensembling}):
\begin{align}
    \Lcritic\!\big(Q_{\phi_i} \mid s,a,\bytarg\big) \;=\; \sum_{g \in G} \big(Q_{\phi_i}(s,a,g) - \bytarg[g]\big)^2.
\end{align}
The actor sees only the aggregated scalar $\Qagg(s,a) = \sum_{g \in G} \bw[g]\, Q_{\phi}(s,a,g)$ of \Cref{eq:q_agg}, so the policy loss in \Cref{eq:sac_policy} becomes
\begin{align}
    \Lpol(\pi_\theta \mid s, \Qagg) \;=\; \mathbb{E}_{a \sim \pi_\theta(\cdot \mid s)} \Big[\alpha \log \pi_\theta(a \mid s) - \texttt{sg}\Big[\sum_{g \in G} \bw[g]\, Q_{\phi}(s,a,g)\Big]\Big].
    \label{eq:sac_vadar_policy}
\end{align}

The temperature loss remains unchanged.

\subsection{Flow Q-Learning (\FQL{})}
\label{app:fql}

\FQL{}~\citep{park2025flowqlearning} is an off-policy method designed for expressive flow-matching policies. Its key idea is to avoid backpropagating through the multi-step ODE integration of a flow policy by training a separate, expressive \emph{one-step} policy that is simultaneously (i) regularized toward a frozen behavioral-cloning (BC) flow policy via a distillation loss and (ii) optimized to maximize the critic.

\textbf{Architecture.} \FQL{} maintains three networks: a critic $Q_\phi$, a BC flow policy $\mu_\theta(s, z)$ that defines a flow-matching velocity field $v_\theta(t, s, x^t)$, and a one-step policy $\mu_\omega(s, z): S \times \R^{\dim A} \to A$ that maps Gaussian noise $z \sim \Normal(0, I_{\dim A})$ directly to an action.

\confpar{BC flow loss.} The BC flow policy is trained with the standard flow-matching objective on the offline dataset $\cD$~\citep[Eq.~(5)]{park2025flowqlearning}:
\begin{align}
    \mathcal{L}_{\mathrm{Flow}}(\theta) \;=\; \mathbb{E}_{\substack{(s, a^1) \sim \cD,\, x^0 \sim \Normal(0, I) \\ t \sim \mathrm{Unif}([0,1])}}\!\Big[\big\|v_\theta(t, s, x^t) - (a^1 - x^0)\big\|_2^2\Big],
    \quad x^t = (1-t)x^0 + t\,a^1.
\end{align}
The corresponding stochastic policy $\pi_\theta(a \mid s)$ is the push-forward of $\Normal(0, I)$ under the ODE $\mu_\theta(s, \cdot)$ defined by $v_\theta$. Crucially, this loss is the only training signal seen by the flow policy: $\mu_\theta$ is never updated against $Q_\phi$.

\confpar{Critic loss.} The critic is trained with the standard TD residual but using the \emph{one-step} policy $\mu_\omega$ in the bootstrap action~\citep[Sec.~3]{park2025flowqlearning}:
\begin{align}
    \Lcritic(Q_\phi) \;=\; \mathbb{E}_{(s,a,r,s') \sim \cD,\, z \sim \Normal(0, I)} \Big[\big(Q_\phi(s,a) - r - \gamma\,\Qtarg(s', \mu_\omega(s', z))\big)^2\Big],
\end{align}
where $\Qtarg = Q_{\bar\phi}$ is the EMA target critic (\Cref{app:critic_ema}).

\confpar{One-step policy loss.} The one-step policy is trained against an actor loss with two terms: a Q-maximization term and a distillation term that pulls $\mu_\omega(s, z)$ toward the BC flow policy's output $\mu_\theta(s, z)$ at the same noise $z$~\citep[Eqs.~(7),(9)]{park2025flowqlearning}:
\begin{align}
    \Lpol(\pi_\omega \mid s, Q_\phi) \;=\; \underbrace{\mathbb{E}_{\substack{s \sim \cD,\, z \sim \Normal(0, I) \\ a^\pi = \mu_\omega(s, z)}}\!\big[-Q_\phi(s, a^\pi)\big]}_{\text{Q loss}} \;+\; \alpha_{\mathrm{FQL}}\,\underbrace{\mathbb{E}_{\substack{s \sim \cD \\ z \sim \Normal(0, I)}}\!\big[\|\mu_\omega(s, z) - \mu_\theta(s, z)\|_2^2\big]}_{\text{distillation loss}}.
    \label{eq:fql_policy}
\end{align}
Gradients of the distillation term are stopped at $\mu_\theta$, so the BC flow policy is not affected.

\confpar{Interaction with \Vadar{}.} When applying \Vadar{} on top of \FQL{}, the scalar critic is replaced by the goal-conditioned critic $Q_\phi: S \times A \times G \to \R$, evaluated once for every subgoal $g \in G$. The TD bootstrap is computed for all goals, using a single bootstrap action $\mu_\omega(s', z)$ shared across subgoals:
\begin{align}
    \bytarg[g] \;=\; \br[g] + \gamma\,Q_{\bar\phi}\!\big(s', \mu_\omega(s', z), g\big),
    \qquad z \sim \Normal(0, I),
    \qquad \forall g \in G,
\end{align}
and the per-ensemble-member critic loss is summed over subgoals:
\begin{align}
    \Lcritic\!\big(Q_{\phi_i} \mid s,a,\bytarg\big) \;=\; \mathbb{E}_{(s,a,\br,s') \sim \cD,\, z}\!\Big[\sum_{g \in G}\big(Q_{\phi_i}(s,a,g) - \bytarg[g]\big)^2\Big].
\end{align}

Similar to before, we define the online aggregate $\Qagg(s,a) := \sum_{g \in G} \bw[g]\, Q_\phi(s,a,g)$ (online critic, matching the \FQL{} convention of \Cref{eq:fql_policy}). The Q-loss term in \Cref{eq:fql_policy} then becomes
\begin{align}
    \Lpol(\pi_\omega \mid s, \Qagg) \;=\; \underbrace{\mathbb{E}_{\substack{s \sim \cD,\, z \sim \Normal(0,I) \\ a^\pi = \mu_\omega(s, z)}}\!\Big[-\sum_{g \in G} \bw[g]\, Q_\phi(s, a^\pi, g)\Big]}_{\text{Q loss}} \;+\; \alpha_{\mathrm{FQL}}\,\mathcal{L}_{\mathrm{distill}}(\omega; \theta).
\end{align}
The distillation term is unchanged. The deployed policy is the one-step $\mu_\omega$, which takes no goal as input and hence requires no goal information at inference (in line with \Vadar{}'s subgoal-agnostic property).

\subsection{Off-Policy Generative Policy Optimization (\OGPO{})}
\label{app:ogpo_policy_loss}

\newcommand{\Nbatch}{N_{\texttt{batch}}}
\newcommand{\Ngroup}{N_{\texttt{group}}}

\OGPO{} \citep{patil2026ogpo} is an off-policy actor-critic for generative control policies (GCPs), like flow matching and diffusion, that side-steps the instabilities of backpropagation through the flow ODE by applying a zero-order PPO-style update to the denoising chain itself. \Vadar{} pairs with \OGPO{} on our most challenging long-horizon benchmarks (\Robomimic{}, and \Robocasa{}). We summarize the policy loss here and note that \OGPO{} follows the same aforementioned critic ensembling and EMA strategies (\Cref{app:critic_ensembling}).

\confpar{Setup.} A flow-based GCP samples an action $\alastt$ via a $\igpfirst$-step denoising chain $a_{t,K} \to a_{t,K-1} \to \dots \to a_{t,0}$, where each step samples $a_{t,k-1} \sim \pi_\theta(\cdot \mid s_t, a_{t,k})$. The likelihood of a full chain factorizes as
\begin{align}
    \pi_\theta(a_{t, K:0} \mid s_t) \;=\; \prod_{k=1}^{K} \pi_\theta(a_{t,k-1} \mid s_t, a_{t,k}).
\end{align}
\OGPO{} treats the denoising trajectory $a_{t,K:0}$ as a sequence in an inner ``denoising MDP'' embedded inside the environment MDP, and applies PPO at the level of this inner MDP only.

\confpar{Annealed importance sampling ratio.} The reference policy is the EMA target $\piema$ (\Cref{app:critic_ema}). The policy ratio is the product of per-step ratios over the entire chain:
\begin{align}
    \ratio \;:=\; \prod_{k=1}^{K} \frac{\pi_\theta(a_{t,k-1} \mid s_t, a_{t,k})}{\piema(a_{t,k-1} \mid s_t, a_{t,k})}.
    \label{eq:ogpo_ratio}
\end{align}

\confpar{OGPO Policy Updates}
\OGPO{} performs policy extraction via a PPO-style zeroth order policy gradient update that considers the entire flow-denoising chain as an MDP. The terminal action $\alastt$ is the only action evaluated against the target critic. \OGPO{} draws a \emph{group} of $\Ngroup$ denoising trajectories $a^{(i,j)}_{K:0}$ from $\piema(\cdot \mid s^{(i)})$ per state and uses the group mean as a value baseline:
\begin{align}
    \hat V^{(i)} \;\gets\; \frac{1}{\Ngroup} \sum_{j=1}^{\Ngroup} \Qtarg\!\big(s^{(i)}, a^{(i,j)}_{0}\big),
    \qquad
    \hat A \;=\; \Qtarg(s_t, a_{t,0}) - \hat V.
\end{align}
This GRPO-style baseline obviates the need for a separately learned value network.

\confpar{Policy loss.} The PPO-clipped surrogate loss is applied over the entire flow denoising chain as:
\begin{align}
    \Lpol(\pi_\theta \mid s, \Qtarg) \;=\; -\,\mathbb{E}_{\substack{s_t \sim \cB \\ a_{t,K:0} \sim \piema(\cdot \mid s_t)}}\!\Big[\min\!\big(\ratio\, \hat A,\; \mathrm{clip}(\ratio, 1-\epsilon, 1+\epsilon)\, \hat A\big)\Big],
    \label{eq:ogpo_policy}
\end{align}
where $\epsilon$ is the PPO clip parameter. Because both $\ratio$ and the denoising trajectories can be computed in parallel without the need for environmental interactions, the loss is fully parallelizable across the $\Nbatch \times \Ngroup$ batch.

\confpar{Noise injection for flow policies.} Evaluating $\ratio$ requires per-step likelihoods $\pi_\theta(a_{t,k-1} \mid s_t, a_{t,k})$ to be non-singular. Following ReinFlow \citep{zhang2025reinflow}, we replace the deterministic Euler step with a Gaussian-noised update $\ainct \sim \Normal(v_\theta(\aitert, k/K, s_t),\, \sigma_k^2 I)$, and retain the marginal path-preserving correction of~\citet{albergo2023stochastic} so that the per-step SDE marginals match standard ODE flow matching rollouts.

\confpar{Interaction with \Vadar{}.} The critic is the goal-conditioned $Q_\phi: S \times A \times G \to \R$, trained on the all-goal targets $\bytarg[g] = \br[g] + \gamma\, Q_{\bar\phi}(s', a', g)$ for every $g \in G$, exactly as above. \Vadar{} composes with \OGPO{} by replacing the scalar $\Qtarg$ in the advantage computation with the aggregate of the target critic evaluated on every subgoal, following \Cref{eq:q_agg}:
\begin{equation}
\begin{aligned}
    \Qagg(s,a) &= \sum_{g \in G} \bw[g]\, Q_{\bar\phi}(s, a, g), \\
    \hat V^{(i)} &\gets \frac{1}{\Ngroup} \sum_{j=1}^{\Ngroup} \Qagg\!\big(s^{(i)}, a^{(i,j)}_{0}\big),
    \qquad
    \hat A = \Qagg(s_t, a_{t,0}) - \hat V.
\end{aligned}
\end{equation}
The same weights $\bw$ are used for all $\Ngroup$ samples of a given state, since they depend only on that state's completion vector. The PPO surrogate, group baseline, and noise-injection scheme are otherwise unchanged. For our experiments, unless otherwise stated, we use the \OGPOplus{} variant (due to its superior performance).

\section{Limitations of Explicit and Implicit Hierarchical Goal Conditioning}
\label{app:hgc-limitations}

We examine the limitations summarized in \Cref{sec:limits_gc} using implicit hierarchical goal conditioning (\ihgc), which preserves goal-conditioned critics but removes goal conditioning from the policy.

\subsection{Implicit Hierarchical Goal Conditioning}

\begin{definition}[Implicit Hierarchical Goal Conditioning (\ihgc)] Implicit goal conditioning follows the Hierarchical Goal Conditioning in \Cref{alg:gc-concise}, with the sole modification that the policy $\pi(a \mid s)$ is \textbf{not} conditioned on subgoal information.
\end{definition}

\par
\Needspace{20\baselineskip}
\begin{wrapfigure}{r}{0.42\textwidth}
      
      \centering
      \includegraphics[width=\linewidth]{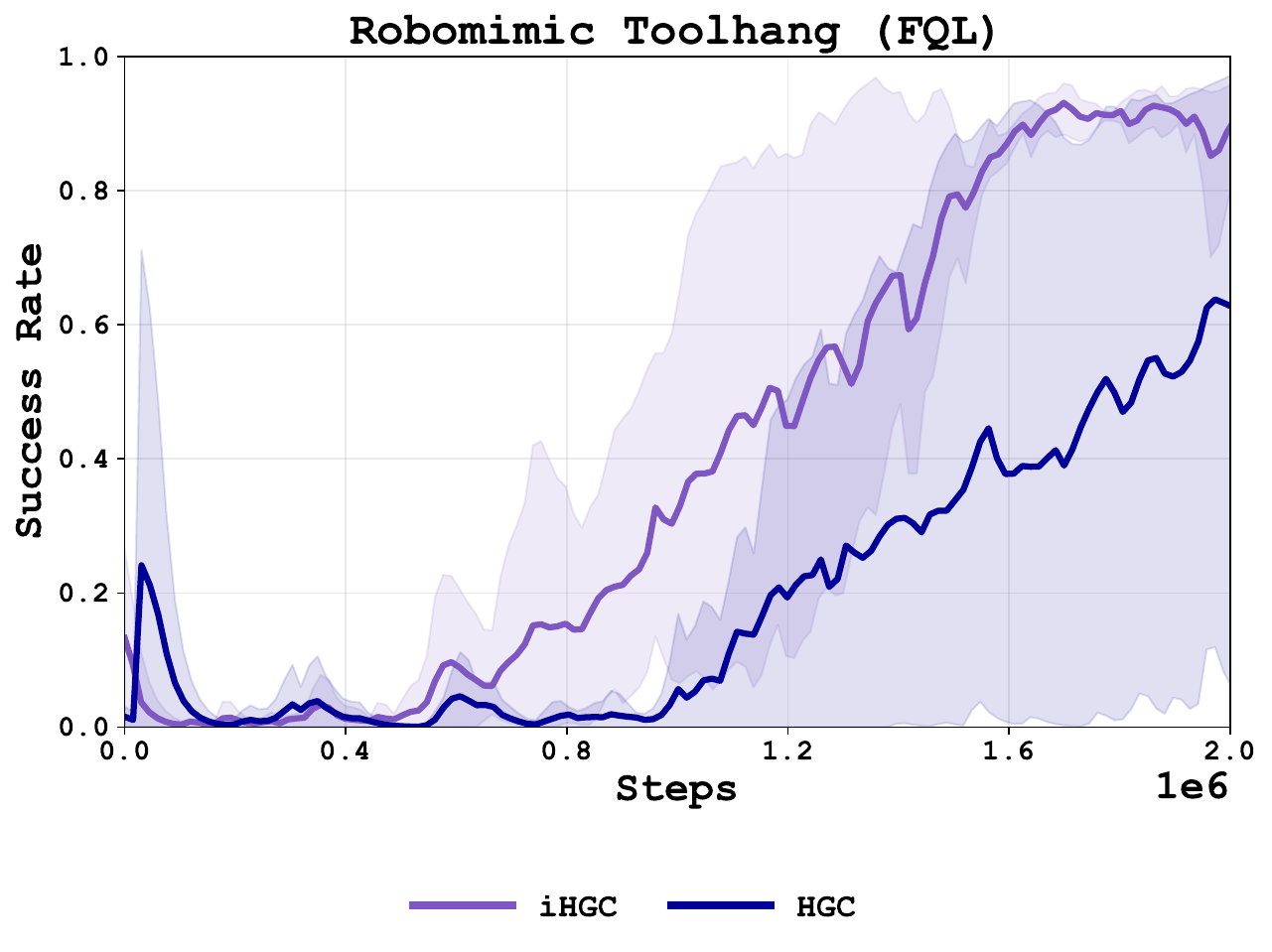}
      \caption{\ToolHang{} with \FQL{}: \ihgc{} learns more smoothly and achieves higher final success than \hgc{}.}
      \label{fig:ihgc-hgc-toolhang-fql}
\end{wrapfigure}

We run \hgc{} using \emph{oracle} dones, providing perfect subgoal selection for this comparison.

\ihgc{} eliminates the first limitation described above, as now the policy need not incorporate goal information at deployment time. As an immediate example,
  \Cref{fig:ihgc-hgc-toolhang-fql} compares \ihgc{} and \hgc{} on \Robomimic{} \ToolHang{} with the \FQL{} (\Cref{app:base_algorithms} for more details) policy extraction backbone. Removing goal information from
  the policy improves final performance and yields a smoother learning curve, suggesting that goal-conditioning the policy can introduce unnecessary instability even
  when goal-conditioned critics remain useful. We hypothesize that removing goal-conditioning inputs avoids aliasing at goal boundaries.

\iclrendwrap

\subsection{Both Explicit and Implicit Hierarchical  Goal Conditioning Suffer from Myopic Behavior}

\begin{figure}[!htbp]
    \centering
    \includegraphics[width=\linewidth]{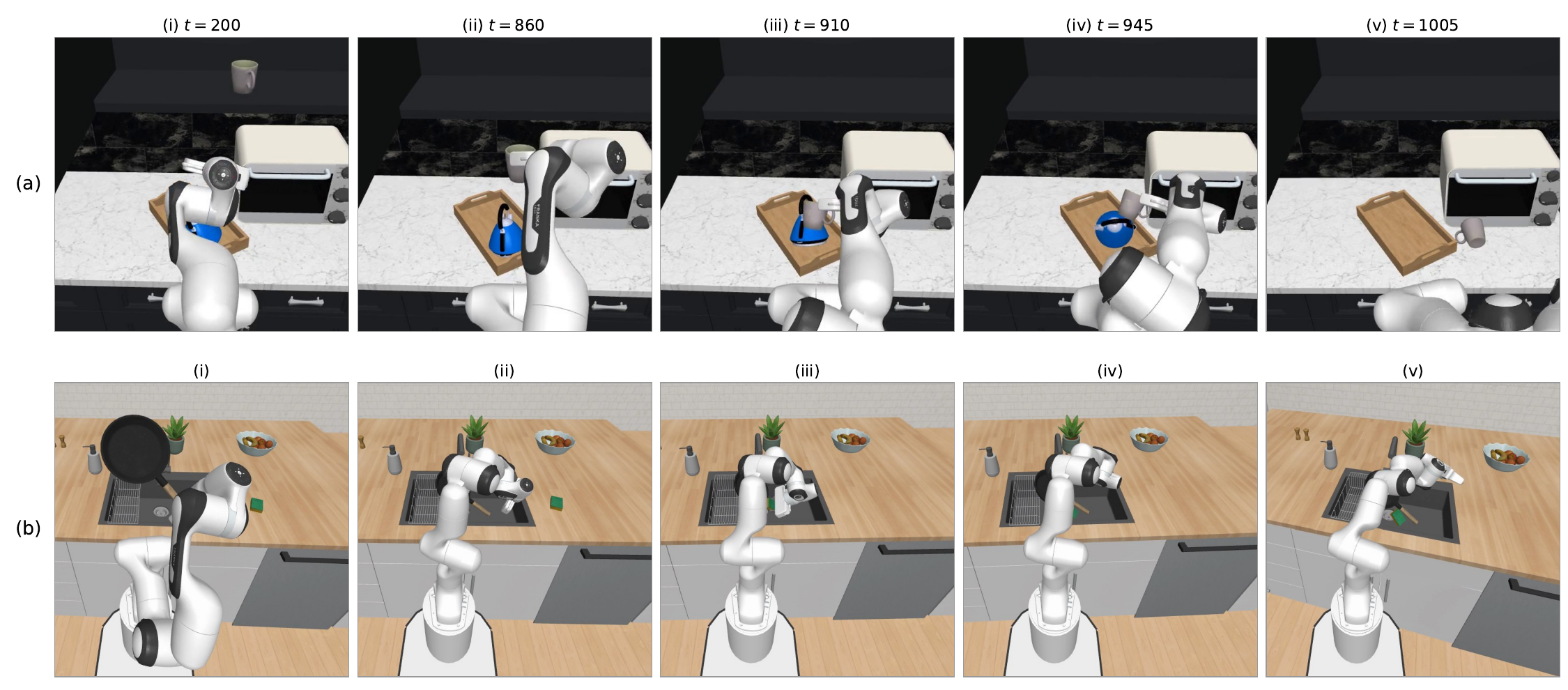}
    \caption{Myopic failure modes of goal-conditioned policies. Rollouts of policies trained to optimize only the current subgoal on \Robocasa{} \arrangetea{} (a, top) and \presoakpan{} (b, bottom); frames are chronological within each row. In (a), the kettle is placed on the tray without leaving room for the mug, so the subsequent mug placement collides with the kettle and knocks it off the counter. In (b), the pan is dropped into the sink in a position that does not permit a stable placement of the sponge on top of it}
    \label{fig:myopia-frame-reels}
\end{figure}

However, we still find that implicit and explicit hierarchical goal conditioning run into a common failure mode: \emph{overly myopic policy exploitation}. Both methods update the policy against the critic of the currently active subgoal alone. As a result, the policy is rewarded for completing the current subgoal without accounting for how its manner of completion affects subsequent subtasks. We illustrate this failure mode on \Robocasa{} \arrangetea{} and \presoakpan{} in \Cref{fig:myopia-frame-reels}.

In \Cref{fig:myopia-frame-reels}(a), the policy places the kettle on the tray in a position that obstructs the subsequent mug placement. The ``placing the kettle" subgoal is complete, but attempting to place the mug causes a collision that knocks the kettle off the counter. In \Cref{fig:myopia-frame-reels}(b), the policy places the pan in the sink in a position that prevents stable placement of the sponge.
Both these failure cases are manifestations of over-optimizing for the current subtask \hyperlink{lim-hgc-myopia}{(Limitation 2)}. The policy does not distinguish between \emph{merely completing the current subgoal} and \emph{completing it in a way that facilitates future subgoals}. Our method, \Vadar{}, addresses this limitation by optimizing the policy for future-subgoal values (\Cref{sec:reward_disagg}), providing a learning signal for actions that enable progress toward full-task completion rather than just myopically completing the current subtask.


\section{Dense Preference Reward: Algorithms and Implementation Details}
\label{app:dense_preference_reward}

\subsection{Background and Limitations}
An alternative to \hgc{} for solving the credit assignment problem
is to craft a denser reward signal $r$. When oracle dense rewards are not available, the most popular data-dependent method  has been preference-based dense rewards defined at the level of full trajectories $\tau = (s_1,a_1,\dots,s_H,a_H)$. Given two trajectories $\tau_1,\tau_2$, one maximizes the Bradley-Terry-Luce (BTL) \citep{19ff28b9-64f9-3656-ba40-08326a05748e} likelihood, assuming the probability that $\tau_2$ is preferred to $\tau_1$  is given by
\begin{align}
P(\tau_1 \prec \tau_2 \mid \psi, \beta) = \frac{\exp(\beta R_\psi(\tau_2))}{\exp(\beta R_\psi(\tau_1))+\exp(\beta R_\psi(\tau_2))},
\end{align}

where $R_\psi(\tau) = \sum_{s \in \tau}r_{\psi}(s)$ is cumulative trajectory return, $r_{\psi}(s)$ is a learned, per-state dense reward, and $\beta$ is a temperature parameter.  Preference based rewards have shown strong promise \citep{wirth2017survey,rafailov2023direct,tajwar2024preference}, but are subject to other limitations: (1) assigning consistent numerical progress values on very long, multi-step tasks is challenging; (2) statistical errors in reward learning can lead to reward hacking; and (3) effective reward learning can require large datasets. Dense rewards are imperfect proxies for desired behavior, and agents may exploit errors in the learned reward rather than complete the task \citep{amodei2016concreteproblemsaisafety,krakovna2020specification}. For an empirical comparison with value disaggregation, see \Cref{sec:pref_vs_td}. We describe our implementation below.

\subsection{Implementation}
We implemented dense prefence rewards for the same policy backbones as \Vadar{}: \OGPO{}, \SAC{}, \FQL{}. Our \pbrl{} implementation was heavily inspired by \citep{shin2023benchmarksalgorithmsofflinepreferencebased,wang2024rlvlmfreinforcementlearningvision,venkataraman2025realworld}  . We used trajectory-wise preferences to train a state-based reward model independently for each task. Trajectory preferences were autogenerated and fit using the Bradley--Terry \citep{19ff28b9-64f9-3656-ba40-08326a05748e} model. When training the reward model, we uniformly sample from the generated preference dataset.  \textbf{We found that the most reliable preference signal was binary trajectory success.} To ensure we used the most accurate preference signal, we experimented with alternatives in \Cref{tab:pbrl-signals}, but found them to be inaccurate.

\begin{table}[!ht]
\centering
\small
\setlength{\tabcolsep}{4pt}
\renewcommand{\arraystretch}{1.12}
\caption{Alternative auto generated preference signals tested for \pbrl{}.}
\label{tab:pbrl-signals}
\begin{tabular}{@{}p{0.28\linewidth}p{0.66\linewidth}@{}}
\toprule
\textbf{Signal} & \textbf{Observed issue} \\
\midrule
$\displaystyle \sum_{s \in \tau} V^*(s)$
&
Uses the value function $V^*$ from an optimal policy $\pi^*$, but produced noisy
preferences because $V^*$ gave poor estimates outside the distribution induced
by $\pi^*$. \\[0.35em]

$\displaystyle \sum_{t, s_t \in \tau} \gamma^t \rsparse(s_t)$
&
Monte Carlo returns depend on the policy used to generate the trajectory, making
the induced reward signal noisy and not policy-invariant at the state level. \\[0.35em]

$\displaystyle \sum_{\rdense \in \tau} \rdense$
&
Uses $\rdense$ as defined from the QIPM main algorithm, but it preferred longer trajectories, which introduced a harder and less reliable
preference signal for the reward model to learn. \\
\bottomrule
\end{tabular}
\end{table}

\confpar{Offline Training}
\pbrl{} was provided privileged information, a diverse set of trajectories to kick off it's reward training from, 100 trajectories collected from $(\pi_{\textsc{early}}, \pi_{\textsc{mid}}, \pi_{\textsc{late}})$, where these were checkpoints collected during the training of an optimal policy, $\pi^*$. This dataset combined with the original optimal success dataset provided on average 500 trajectories to pre-train the reward model from, with different levels of optimality and a diverse range of failure modes. \textbf{We saw that this was required for \pbrl{} to show signs of success}, as demonstrated in \Cref{fig:transport-reward-rollouts}. This is a significant setback compared to \Vadar{}, which is able to operate on far more limited datasets. 
\begin{figure}[!ht]
    \centering
    \includegraphics[width=\textwidth]{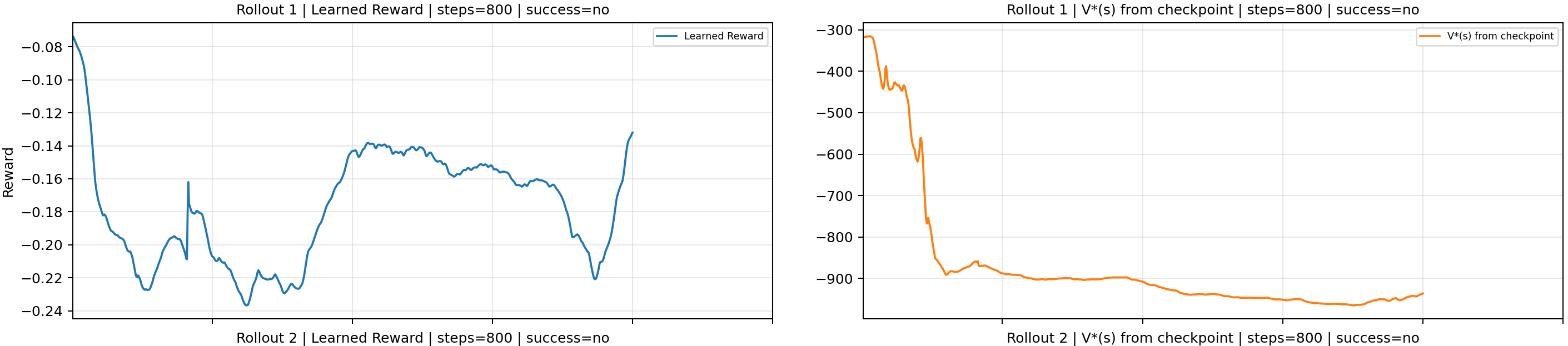}\\
    \includegraphics[width=\textwidth]{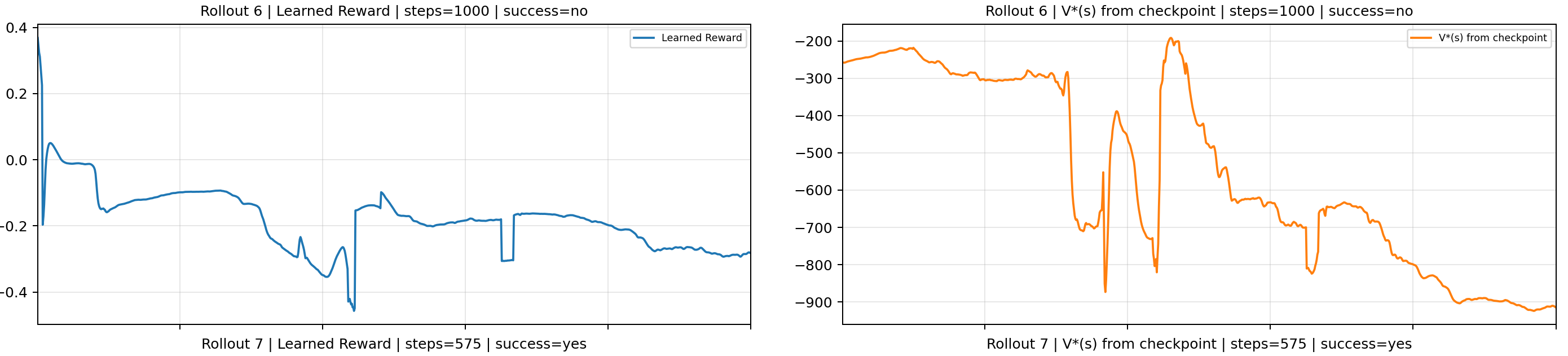}
    \caption{Comparison of success-only (top) and mixed (bottom) dataset based reward rollouts. We see that the success-only dataset is unable to accurately estimate failure cases, not having a clear signal for reward. Both models were using preference based on $V^*$}
    \label{fig:transport-reward-rollouts}
\end{figure}

\confpar{Online Training}
After offline reward-model pretraining, we continue updating the dense reward model during online RL following \citep{wang2024rlvlmfreinforcementlearningvision,lee2021bpref}. 
The reward model is trained using the following iterative loop: the current reward model is used to train the policy, the resulting policy collects new trajectories, these trajectories are converted into additional synthetic preference comparisons, and the reward model is fine-tuned on the offline and online preference dataset. 

During online RL, the environment reward is replaced by the learned state-only reward model $\hat r_\theta(s)$. 
The underlying RL algorithm is otherwise unchanged: \OGPO{}, \SAC{}, and \FQL{} use their standard actor and critic losses, but with rewards given by $\hat r_\theta$ instead of the environment reward.

Every $K_{\mathrm{int}}$ environment steps, the reward model is fine-tuned on the union of offline and online preference pairs using the same Bradley--Terry objective as in offline pretraining. 
The reward model gradually adapts to the on-policy state distribution induced by the improving policy, while still being regularized by the original offline preference data.

After each reward-model update, we relabel the live RL replay buffer with the updated reward model:
$
    r_t \leftarrow \hat r_\theta(s_t).
$
This step makes the online reward update immediately affect subsequent policy learning. 

\confpar{Hyper Parameters}
\Cref{tab:hp-pbrl-reward-model} mentions the primary hyper parameters used to train the model.
\begin{table}[!htb]
\centering
\small
\setlength{\tabcolsep}{6pt}
\caption{Reward-model hyperparameters used for PBRL experiments.}
\label{tab:hp-pbrl-reward-model}
\begin{tabular}{l l}
\toprule
\textbf{Hyperparameter} & \textbf{Value} \\
\midrule
\multicolumn{2}{l}{\textit{Reward model}} \\
ensemble size & $3$ \\
model architecture & MLP of ($512, 512, 512 ,512$) \\
learning rate & $3 \times 10^{-4}$ \\
training epochs & $50$ \\
segment size & \texttt{episode} \\
\midrule
\multicolumn{2}{l}{\textit{Online PBRL schedule}} \\
RM update interval & $5{,}000$ env steps \\
maximum online feedback & $10{,}000$ preference pairs \\
feedback buffer & FIFO \\
\bottomrule
\end{tabular}
\end{table}

\section{Additional Baseline Comparisons}
\label{app:additional-baselines}

\subsection{Hindsight experience replay (HER) struggles with contact-rich control}
\label{sec:exp-her}

An alternative approach to mitigating the credit assignment problem in goal-conditioned RL is hindsight experience replay (HER) \citep{andrychowicz2018hindsightexperiencereplay}, which relabels states reached during a rollout as goals, so that the policy receives learning signal even when the task reward is never observed. HER has largely been evaluated in reaching, pick-and-place and maze settings, where any state is a reasonable goal. To test whether it applies to contact-rich manipulation settings, we evaluate HER with \SAC{} on \Libero{}-Task 6, with goals given by object states and relabelled from future states of the same episode, and with the same BC pretraining, online step budget and networks as our other \SAC{} baselines. We also steelman HER with two variants that provide it with the same subgoal information that \hgc{} receives: an oracle that supplies a representative goal state for each subgoal in turn, and a learned goal embedding trained to recognize subgoal completions. In both the variants, we find HER to be incapable of performing multi stage complex manipulation tasks, with the eventual success rate remaining quite close to the initial BC success rate.

\subsection{TD critic learning outperforms preference-based per-subgoal reward}
\label{sec:pref_vs_td}

\begin{figure}[h]
    \centering
    \includegraphics[width=\linewidth]{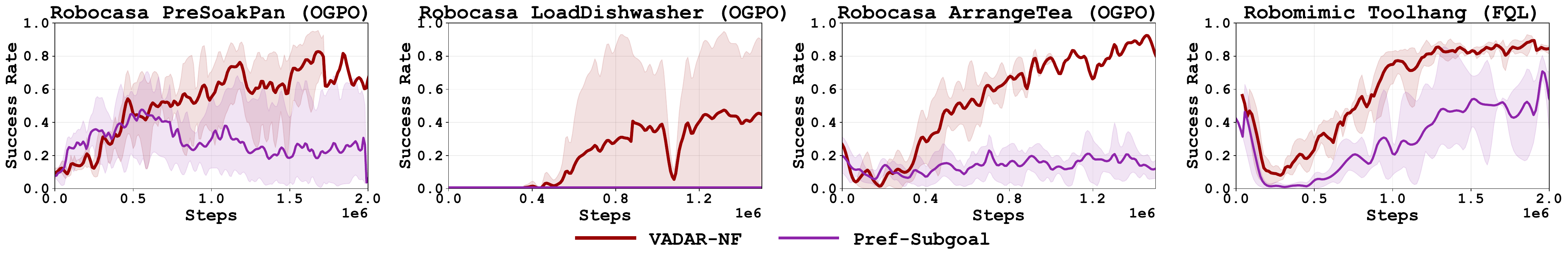}
    \caption{Success rate of \VadarNF{} against \PrefSubgoal{}, which learns a dense reward per subgoal from preferences, on \Robocasa{} with the \OGPO{} backbone and \Robomimic{} \ToolHang{} with the \FQL{} backbone.}
    \label{fig:pref-subgoal}
    \vspace{-1em}
\end{figure}

The \densecolor{\textmd{Dense}} baseline in \Cref{sec:experiments} learns rewards from the full trajectory preference, so it does not receive the same subgoal information as \Vadar{}. To steelman the comparison, we learn separate preference reward models for each of the subtasks, aggregate them using our Next-Future weighting same as \VadarNF{}, and then train a scalar critic on aggregated reward. We call this method \PrefSubgoal{}. \Cref{fig:pref-subgoal} shows that the \PrefSubgoal{} underperforms \VadarNF{} on all four tasks. On \presoakpan{}, \PrefSubgoal{} appears to be more sample efficient in the early stages but later fails to reach the same final performance as \VadarNF{}. On \arrangetea{} and \loaddishwasher{},  it does not improve over its BC initialization. With \FQL{} on \ToolHang{}, \PrefSubgoal{} trains more slowly and plateaus below \VadarNF{}.

These results suggest that directly fitting the critic on multiple temporal resolutions is more reliable than learning dense per-subgoal rewards from preferences. We conjecture that small reward-model errors accumulate over the long horizon, while \Vadar{} directly supervises per-subgoal TD targets at multiple temporal resolutions while sharing representations across subgoals. This structure may benefit from the implicit Bellman-backup mechanism discussed in \cref{sec:exp-bellman}.

\section{DAG Extension for Non-Sequential Subtasks}
\label{app:dag_extension}

Multiple subgoals may be active simultaneously, as in bimanual robotic manipulation. The value-weighting function in \Cref{tab:value-weighting} extends naturally to this setting: a DAG can express both parallel subgoals and prerequisites that must be completed before a subsequent goal becomes available.

In the sequential setting, subtasks must be completed in a fixed linear order. However, many real-world tasks have more complex structures where multiple subtasks can be pursued simultaneously for optimal behavior. Moreover, some subtasks can form a precondition boundary for others, where policies can be rolled out in the future subtasks only if the preconditions are satisfied. \Vadar{} can naturally be applied to a litany of these subtask dependencies often represented as directed acyclic graphs (DAGs) with each node corresponding to an individual Q-function. This provides the core benefits of value disaggregation to rich dependency structures dominant in long-horizon RL settings.

\colorpar{Illustrative Example: Robomimic Transport.}
To illustrate this, consider the robomimic Transport task, a multi-robot task where two robot arms must coordinate to transport a payload from one bin to another while managing trash removal. The task decomposes into 7 interdependent subtasks forming a DAG:

\begin{enumerate}
    \item \texttt{lid\_removed}: Lid removed from start bin
    \item \texttt{payload\_grasped}: Payload grasped from start bin (Robot 0)  
    \item \texttt{payload\_lifted}: Payload lifted above bins (Robot 0)
    \item \texttt{trash\_grasped}: Trash grasped from target bin (Robot 1)
    \item \texttt{trash\_in\_bin}: Trash placed in trash bin (Robot 1)
    \item \texttt{payload\_handed\_off}: Payload transferred from Robot 0 to Robot 1
    \item \texttt{payload\_in\_target}: Payload placed in target bin (SUCCESS)
\end{enumerate}

The dependencies form two parallel execution paths: $(1 \to 2 \to 3)$ for payload handling by Robot 0, and $(4 \to 5)$ for trash management by Robot 1. The handoff step (6) requires both paths to complete, and the final success (7) depends on the handoff, an illustration is provided in \Cref{fig:transport-subgoal-structure}.

\begin{figure}[!ht]
\centering
\small
\resizebox{0.9\linewidth}{!}{%
\begin{tikzpicture}[
    >=Latex,
    node distance=3mm and 6mm,
    goal/.style={
        draw,
        rounded corners=2pt,
        align=center,
        minimum height=5.2mm,
        inner sep=2pt,
        font=\footnotesize
    },
    normal/.style={goal, fill=gray!10},
    pred/.style={goal, fill=purple!15, draw=purple!70!black, line width=0.5pt},
    final/.style={goal, fill=green!18, draw=green!50!black, line width=0.7pt},
    edge/.style={->, thick, draw},
    note/.style={font=\scriptsize, text=purple!70!black}
]

\node[normal] (lid) {Lid removed};
\node[normal, right=of lid] (grasp) {Payload grasped};
\node[normal, right=of grasp] (lift) {Payload lifted};
\node[normal, right=of lift] (handoff) {Payload\\handed off};

\node[normal, below=5mm of grasp] (trashg) {Trash grasped};
\node[normal, right=18mm of trashg] (trashbin) {Trash in\\correct bin};

\node[normal, right=12mm of handoff, yshift=-3.5mm] (target) {$g$: Payload in\\target bin};

\draw[edge] (lid) -- (grasp);
\draw[edge] (grasp) -- (lift);
\draw[edge] (trashg) -- (trashbin);

\draw[edge] (lift) -- (handoff);
\draw[edge] (trashbin) -- (handoff);

\draw[edge] (handoff) -- (target);

\end{tikzpicture}%
}
\caption{Transport Subgoal Structure.}
\label{fig:transport-subgoal-structure}
\end{figure}
\colorpar{DAG Formalism.}
Recall from \Cref{sec:hgc-prelim} that the subgoals $G$ are partially ordered, where $g^i \prec g^j$ means that $g^i$ must be completed before $g^j$. We represent this structure with a DAG $\mathcal{D}=(G,E)$, with
\[
    (g^i,g^j)\in E
    \iff
    g^i \prec g^j
    \;\;\text{and}\;\;
    \nexists\, g^k \in G \text{ such that } g^i \prec g^k \prec g^j .
\]
This graph is acyclic: a directed cycle would imply $g^i \prec g^i$ for some subgoal $g^i$, contradicting the prerequisite relation. The linear ordering from \Cref{sec:hgc-prelim}  is the special case in which the DAG is a chain: every subgoal except the initial one has exactly one parent. We define the following graph functions:

\[\pred(g) \coloneq \{p \in G : (p,g)\in E\} \qquad \desc(S) \coloneq \{g \in G : \exists s \in S \text{ such that } s \preceq g\}.
\]

\colorpar{Gate Weights for DAG Structures.}
For a DAG, the ``current'' subgoals are the incomplete goals whose prerequisites
have all been completed:
\[
    \Gcurr(\bd)
    \coloneq
    \left\{
        g \in G :
        \bd_g = 0
        \;\;\text{and}\;\;
        \bd_{p}=1 \;\; \forall p \in \pred(g)
    \right\},
\]
We also define the terminal subgoals and the immediate successors of the current frontier as
\[
    \Gterm
    \coloneq
    \{g \in G : \nexists\, h \in G \text{ such that } (g,h)\in E\},
\]\[
    \Gnext(\bd)
    \coloneq
    \{g \in G : \exists\, p \in \Gcurr(\bd) \text{ such that } (p,g)\in E\}
    \cup
    \left(\Gcurr(\bd) \cap \Gterm\right).
\]

For $\gnext$, we handle the edge case where an element from $\gcurr$ is a terminal node, putting those nodes into the set. Using these sets, the DAG value-weighting variants are given in \Cref{tab:value-weighting-dag}, provided that we are given \bd.

\begin{table}[!ht]
\vspace{-1.25em}
\centering
\small
\setlength{\tabcolsep}{3.5pt}
\renewcommand{\arraystretch}{1.08}
\iftoggle{neurips}{%
  \caption{DAG value-weighting variants used for policy extraction in \Vadar{}.}
  \label{tab:value-weighting-dag}
}{}
\begin{tabular}{@{}lll@{}}
\toprule
\textbf{Config.} & \textbf{Weight} $w_g$ & \textbf{Interpretation} \\
\midrule
Current-myopic
& $\mathbb{I}[g \in \Gcurr(\bd)]$
& Only currently available subtasks contribute. \\
Current-future
& $\mathbb{I}[g \in \overline{\desc}(\Gcurr(\bd))]$
& Current subtasks and their descendants contribute. \\
Next-myopic
& $\mathbb{I}[g \in \Gnext(\bd)]$
& Only direct successors of current subtasks contribute. \\
Next-future
& $\mathbb{I}[g \in \overline{\desc}(\Gnext(\bd))]$
& Next subtasks and their descendants contribute. \\
\bottomrule
\end{tabular}
\iftoggle{neurips}{}{%
  \caption{DAG value-weighting variants used for policy extraction in \Vadar{}.}
  \label{tab:value-weighting-dag}
}
\vspace{-1.25em}
\end{table}


\section{Environment Details: Per-Task Descriptions: Subgoals, Dones, and DAG Structure}
\label{app:task_descriptions}

We list the subgoals used by \Vadar{} and \hgc{} on each benchmark task, taken directly from the simulator wrappers in our codebase. For each task we list (i) the high-level objective, (ii) the per-subgoal completion indicators $\done_g(s)$ as evaluated from low-level simulator state (no VLM calls are used for $\done$ evaluation), and (iii) the prerequisite structure used to gate subgoal completion. Subgoal completion is evaluated with \emph{monotonic completion} semantics: once $\done_g$ becomes true and all prerequisites are satisfied, $\done_g$ remains true for the rest of the episode. The default prerequisite structure is a sequential chain ($g^i$ requires $g^1,\dots,g^{i-1}$); tasks that override this with an explicit DAG are noted below. See \Cref{app:dag_extension} for an in-depth explanation of the DAG structure. 
 
\subsection{\Robocasa{}}
\label{app:tasks_robocasa}

Our longest-horizon tasks come from the \Robocasa{}~\citep{nasiriany2026robocasa365} suite. Each task involves multiple sequential phases of long-horizon kitchen manipulation. For our experiments, we use the state-based environments of \Robocasa{} composite tasks (having multiple subtasks to be done in order to achieve success), which do not involve the movement of the robot's base. Concretely, we evaluate on the following \Robocasa{} tasks:

\confpar{\presoakpan{}.} The agent must place a pan in the sink, place a sponge inside the pan in the sink, and then turn on the faucet. This task is decomposed into 5 sequential subgoals:
\begin{enumerate}
    \item $g^1 = \texttt{pan\_grasped\_lifted}$: pan (\texttt{obj1}) is grasped and lifted to height $z > 0.88$ m.
    \item $g^2 = \texttt{pan\_in\_sink}$: pan is contained inside the sink basin (\texttt{obj\_inside\_of(obj1, sink)}).
    \item $g^3 = \texttt{sponge\_grasped\_lifted}$: sponge (\texttt{obj2}) is grasped and lifted to height $z > 0.88$ m.
    \item $g^4 = \texttt{sponge\_in\_pan\_and\_sink}$: sponge is inside the sink, with $xy$-distance to pan $< 0.25$ m and vertical offset $z_{\mathrm{sponge}} - z_{\mathrm{pan}} \in (-0.05, 0.20)$ m.
    \item $g^5 = \texttt{water\_on}$: faucet is on, both pan and sponge are in the sink, and the gripper is far from both objects (matches the environment's original success check).
\end{enumerate}
Final task success is $\done_{g^5}$.

\confpar{\loaddishwasher{}.} The agent must load two dishes (a cup and a bowl) into the dishwasher rack and close the dishwasher. Decomposed into 5 sequential subgoals:
\begin{enumerate}
    \item $g^1 = \texttt{cup\_grasped}$: cup (\texttt{dish0}) is grasped (\texttt{check\_obj\_grasped}).
    \item $g^2 = \texttt{cup\_in\_dw}$: cup is in contact with the dishwasher rack.
    \item $g^3 = \texttt{bowl\_grasped}$: bowl (\texttt{dish1}) is grasped.
    \item $g^4 = \texttt{bowl\_in\_dw}$: bowl is in contact with the dishwasher rack.
    \item $g^5 = \texttt{dw\_closed}$: cup in rack \emph{and} bowl in rack \emph{and} dishwasher door closed (angular tolerance $0.05$).
\end{enumerate}
Final task success is $\done_{g^5}$.

\confpar{\arrangetea{}}: This is the third \Robocasa{} task we evaluate on. In this task, the agent must pick up the kettle and place it on a tray on the kitchen counter, then pick up a mug from inside the kitchen cabinet and then place it next to the kettle on the tray, and then  close the hinged cabinet that initially holds the mug. This task is decomposed into 6 natural subgoals: 
\begin{enumerate}
    \item $g^1 = \texttt{kettle\_grasped}$: kettle is grasped (\texttt{check\_obj\_grasped(kettle)}).
    \item $g^2 = \texttt{kettle\_on\_tray}$: kettle is contained on the tray (\texttt{check\_obj\_in\_receptacle(kettle, container)}).
    \item $g^3 = \texttt{kettle\_on\_tray}\,\&\,\texttt{mug\_grasped}$: kettle still on tray \emph{and} mug is grasped.
    \item $g^4 = \texttt{kettle\_on\_tray}\,\&\,\texttt{mug\_on\_tray}$: kettle still on tray \emph{and} mug is now on the tray.
    \item $g^5 = \texttt{both\_on\_tray}\,\&\,\texttt{right\_door\_closed}$: both objects on tray \emph{and} the right cabinet door is closed.
    \item $g^6 = \texttt{both\_on\_tray}\,\&\,\texttt{cab\_closed}$: both objects on tray \emph{and} the full hinged cabinet is closed.
\end{enumerate}

Final task success is $\done_{g^6}$ which matches the original success of the \arrangetea{} task.

\subsection{\Robomimic{}}
\label{app:tasks_robomimic}

\Robomimic{}~\citep{robomimic2021} contributes two contact-rich, medium-horizon tasks.

\confpar{\Transport{}.} A bimanual task in which two arms coordinate to lift a lid, transport a payload between bins, and dispose of trash. The codebase exposes 7 subgoal indicators; the user-facing description of the DAG is given in \Cref{app:dag_extension}. The 7 subgoal checks are:
\begin{enumerate}
    \item $g^1 = \texttt{lid\_removed}$: lid handle has slid leftward by more than $0.08$ m from its initial $x$-position ($x_{\mathrm{lid}} < x^{\mathrm{init}}_{\mathrm{lid}} - 0.08$).
    \item $g^2 = \texttt{payload\_grasped}$: Robot 0's gripper is in contact with the payload geoms (\texttt{\_check\_grasp}).
    \item $g^3 = \texttt{payload\_lifted}$: payload $z$-position exceeds the table height by $> 0.10$ m.
    \item $g^4 = \texttt{trash\_grasped}$: Robot 1's gripper is in contact with the trash geoms.
    \item $g^5 = \texttt{trash\_in\_bin}$: \texttt{transport.trash\_in\_trash\_bin} flag is true.
    \item $g^6 = \texttt{payload\_handed\_off}$: Robot 1 is grasping the payload \emph{and} Robot 0 is not.
    \item $g^7 = \texttt{payload\_in\_target}$: \texttt{transport.payload\_in\_target\_bin} flag is true.
\end{enumerate}
\textbf{Structure:} two parallel chains $(g^1 \to g^2 \to g^3)$ and $(g^4 \to g^5)$, both prerequisites for $g^6$, which is itself a prerequisite for $g^7$ (\Cref{fig:transport-subgoal-structure}). Final task success is $\done_{g^7}$.

\confpar{\ToolHang{}.} A single-arm precision task: insert a hook frame into a base stand, then hang a wrench on it. Decomposed into 4 sequential subgoals:
\begin{enumerate}
    \item $g^1 = \texttt{frame\_grasped}$: gripper is grasping the frame and the frame is lifted $> 0.02$ m above the table (\texttt{\_check\_grasp\_frame}).
    \item $g^2 = \texttt{frame\_inserted}$: frame is assembled into the stand (\texttt{\_check\_frame\_assembled}).
    \item $g^3 = \texttt{frame\_inserted\_and\_wrench\_grasped}$: frame is still inserted \emph{and} the wrench is grasped and lifted $> 0.02$ m above the table.
    \item $g^4 = \texttt{wrench\_attached}$: wrench is hung on the inserted frame (\texttt{\_check\_tool\_on\_frame}).
\end{enumerate}

The final success is $\done_{g^4}$ which matches the success of the original \ToolHang{} env.

\subsection{\OGBench{}}
\label{app:tasks_ogbench}

Next we use the \texttt{scene-play-singletask-task4} (\texttt{put\_in\_drawer}) from \OGBench{}~\citep{park2025ogbenchbenchmarkingofflinegoalconditioned}. In this task, the agent needs to first unlock the drawer (by pressing a button), then open the drawer, then lift the cube and place it inside the drawer and finally close the drawer. The subgoal decomposition is as follows:
\begin{enumerate}
    \item $g^1 = \texttt{press\_button\_0}$: button 0 state is set to pressed (\texttt{\_cur\_button\_states[0] == 1}); this unlocks the drawer.
    \item $g^2 = \texttt{open\_drawer}$: drawer slide $x$-position $< -0.12$ (slide range $[-0.16, 0]$).
    \item $g^3 = \texttt{block\_in\_drawer}$: block $xyz$-position satisfies the env's \texttt{\_is\_in\_drawer} predicate.
    \item $g^4 = \texttt{close\_drawer}$: $|x_{\mathrm{slide}}| < 0.04$ (drawer back to nearly fully closed).
\end{enumerate}
Final task success is $\done_{g^4}$ which is coonsisitent with the origianl success.

\subsection{\Libero{}}
We evaluate \Vadar{} on 3 multi stage kitchen tasks from the \Libero{}-10 \citep{liu2023liberobenchmarkingknowledgetransfer} suite. These are: \texttt{KITCHEN\_SCENE3}, \texttt{KITCHEN\_SCENE4}, \texttt{KITCHEN\_SCENE6}.  We use the following subgoal decomposition for these tasks: 
\confpar{\texttt{KITCHEN\_SCENE3} (turn on the stove and place the moka pot).} BDDL goal:
$(\texttt{Turnon}\ \texttt{flat\_stove\_1}) \wedge (\texttt{On}\ \texttt{moka\_pot\_1}\ \texttt{flat\_stove\_1\_cook\_region})$. Decomposed into 3 sequential subgoals:
\begin{enumerate}
    \item $g^1 = \texttt{stove\_on}$: \texttt{(turnon flat\_stove\_1)}.
    \item $g^2 = \texttt{moka\_pot\_grasped}$: gripper is grasping any geom containing \texttt{moka\_pot\_1}.
    \item $g^3 = \texttt{moka\_pot\_on\_stove}$: \texttt{(on moka\_pot\_1 flat\_stove\_1\_cook\_region)}.
\end{enumerate}
Final task success is $\done_{g^3}$.

\confpar{\texttt{KITCHEN\_SCENE4} (place black bowl in bottom drawer and close it).} The bottom drawer of \texttt{white\_cabinet\_1} is open at reset. BDDL goal: $(\texttt{Close}\ \texttt{white\_cabinet\_1\_bottom\_region}) \wedge (\texttt{In}\ \texttt{akita\_black\_bowl\_1}\ \texttt{white\_cabinet\_1\_bottom\_region})$. Decomposed into 3 sequential subgoals:
\begin{enumerate}
    \item $g^1 = \texttt{bowl\_grasped}$: gripper is grasping any geom containing \texttt{akita\_black\_bowl}.
    \item $g^2 = \texttt{bowl\_in\_drawer}$: \texttt{(in akita\_black\_bowl\_1 white\_cabinet\_1\_bottom\_region)}.
    \item $g^3 = \texttt{drawer\_closed}$: \texttt{(close white\_cabinet\_1\_bottom\_region)}.
\end{enumerate}
Final task success is $\done_{g^3}$.

\confpar{\texttt{KITCHEN\_SCENE6} (place mug in microwave and close it).} The microwave is open at reset. BDDL goal: $(\texttt{In}\ \texttt{white\_yellow\_mug\_1}\ \texttt{microwave\_1\_heating\_region}) \wedge (\texttt{Close}\ \texttt{microwave\_1})$. Decomposed into 3 sequential subgoals:
\begin{enumerate}
    \item $g^1 = \texttt{mug\_grasped}$: gripper is grasping any geom containing \texttt{white\_yellow\_mug}.
    \item $g^2 = \texttt{mug\_in\_microwave}$: \texttt{(in white\_yellow\_mug\_1 microwave\_1\_heating\_region)}.
    \item $g^3 = \texttt{microwave\_closed}$: \texttt{(close microwave\_1)}.
\end{enumerate}
Final task success is $\done_{g^3}$.

\subsection{\PushTObs{}}
\label{app:tasks_pushtobs}

The multi-objective \PushTObs{} task does not use a sequential subgoal decomposition; instead, the two ``heads'' correspond to two independent per-step signals defined in \Cref{app:multiobj-pushT}:
\begin{enumerate}
    \item $g^1 = \texttt{task\_done}$: block-goal coverage (intersection-over-union proxy) $\geq 0.95$. Ratchets up.
    \item $g^2 = \texttt{collision\_occurred}$: continuous proximity cost $c_t \in [0,1]$ around the obstacle boundary (1 inside, decaying linearly across a $40$-px soft margin). 
\end{enumerate}
Neither head is a prerequisite for the other, so $\getWeight$ collapses to a constant weighting $\bw = (1, \lambda_{\mathrm{safety}})$.

\begin{table}[!htb]
\centering
\small
\caption{Summary of subgoal counts and dependency structure across benchmark tasks.}
\label{tab:task_summary}
\begin{tabular}{llcl}
\toprule
\textbf{Suite} & \textbf{Task} & $|G|$ & \textbf{Structure} \\
\midrule
\Robocasa{}  & \presoakpan{}                          & 5 & sequential chain \\
\Robocasa{}  & \loaddishwasher{}                      & 5 & sequential chain \\
\Robocasa{}  & \arrangetea{}                          & 6 & sequential chain  \\
\Robomimic{} & \Transport{}                           & 7 & DAG, two parallel chains, joint handoff \\
\Robomimic{} & \ToolHang{}                            & 4 & sequential chain \\
\Libero{}    & \texttt{KITCHEN\_SCENE3}               & 3 & sequential chain \\
\Libero{}    & \texttt{KITCHEN\_SCENE4}               & 3 & sequential chain \\
\Libero{}    & \texttt{KITCHEN\_SCENE6}               & 3 & sequential chain \\
\OGBench{}   & \texttt{scene-play-singletask-task4}   & 4 & sequential chain \\
\midrule
--           & \PushTObs{} (multi-objective)          & 2 & independent heads (no prerequisites) \\
\bottomrule
\end{tabular}
\end{table}

\subsection{Hyperparameters}
\Vadar{} is tested with three off policy value based instantiations. Hyperparameters for each instantiation follows the best practices recommended in the corresponding original papers; we tune only the small set of task-level knobs (training-step budget, discount, BC regularization strength) needed to keep each baseline competitive on the task at hand, and we use the \emph{same} setting across all baselines (\sparsecolor{\textmd{Sparse}}, \densecolor{Dense}, \hgc{}, \Vadar{}) on a given task so that comparisons are not confounded by tuning. 

\subsubsection{Common Hyperparameters}
Most of our hyper parameters remain the same for all are experiments, unless explicitly mentioned. The detailed list of these are listed in \Cref{tab:hp-common}.

\begin{table}[!htb]
\centering
\small
\setlength{\tabcolsep}{8pt}
\caption{Maximum episode length per environment (used both for evaluation and as the truncation horizon during online finetuning)}
\label{tab:hp-env-len}
\begin{tabular}{l c}
\toprule
\textbf{Environment} & \textbf{Max episode length} \\
\midrule
\Robocasa{}  (all envs: \arrangetea{}, \presoakpan{}, \loaddishwasher{})                 & $1200$ \\
\Robomimic{} \Transport{}                 & $800$ \\
\Robomimic{} \ToolHang{}                  & $1000$ \\
\Libero{} \texttt{KITCHEN\_SCENE3/4/6}    & $500$ \\
\OGBench{} \texttt{scene-play-singletask-task4} & $100$ \\
\PushTObs{}                                & $300$ \\
\bottomrule
\end{tabular}
\end{table}

\begin{table}[!htb]
\centering
\small
\setlength{\tabcolsep}{6pt}
\caption{Hyperparameters shared across all tasks and backbones (\OGPO{}, \FQL{}, \SAC{})}
\label{tab:hp-common}
\begin{tabular}{l l}
\toprule
\textbf{Hyperparameter} & \textbf{Value} \\
\midrule
\multicolumn{2}{l}{\textit{Optimization}} \\
critic learning rate & $3 \times 10^{-4}$ \\
actor / BC learning rate & $3 \times 10^{-4}$ \\
batch size & 256 \\
replay buffer size & $2 \times 10^{6}$ \\
\midrule
\multicolumn{2}{l}{\textit{Networks}} \\
actor MLP hidden dims & $(512, 512, 512, 512)$ \\
critic MLP hidden dims & $(512, 512, 512, 512)$ \\
critic ensemble size $M$ & $10$ \\
ensemble aggregation & \texttt{mean} \\
target EMA $\tau$ (critic) & $0.05$ \\
target EMA $\bar\tau_\pi$ (actor)  & $0.05$ \\
\midrule
\multicolumn{2}{l}{\textit{\Vadar{} critic}} \\
critic shape & $S \times A \to \R^{|G|}$ \\
advantage gating mode & \texttt{grandparent\_future} (\VadarNF{}) / \texttt{goal\_future} (\VadarCF{}) \\
\midrule
\multicolumn{2}{l}{\textit{Online schedule}} \\
UTD (warmup, online) & $(1, 1)$ \\
\bottomrule
\end{tabular}
\end{table}

\subsubsection{\OGPO{} specific hyperparameters}
\label{app:hp_ogpo}
\OGPO{}\citep{patil2026ogpo} is our flow-based backbone for the \Robocasa{} and \Robomimic{} suites. \Cref{tab:hp-ogpo-common} lists the \OGPO{} hyperparameters that are held fixed across all five \OGPO{}-backed tasks; the per-task differences are then enumerated below.

\begin{table}[!htb]
\centering
\small
\setlength{\tabcolsep}{6pt}
\caption{\OGPO{} hyperparameters held fixed across \emph{all} five \OGPO{}-backed tasks (\presoakpan{}, \loaddishwasher{}, \arrangetea{}, \Transport{}, \ToolHang{})}
\label{tab:hp-ogpo-common}
\begin{tabular}{l l}
\toprule
\textbf{Hyperparameter} & \textbf{Value} \\
\midrule
$h$ (action chunk length)   & $8$ \\
\texttt{q\_warmup\_steps}   & $20{,}000$ \\
\texttt{flow\_steps}        & $10$ \\
\texttt{best\_of\_n}        & $8$ \\
critic target EMA $\tau$    & $0.05$ \\
\bottomrule
\end{tabular}
\end{table}

The remaining hyper parameters vary per task. For the \Robocasa{} suite of tasks (\presoakpan{}, \loaddishwasher{}, \arrangetea{}), we use the bc data obtained from the official VLA checkpoint \citep{nvidia2025gr00tn1openfoundation}. Since we deal with state-based environments and \Robocasa{} environments change the layout/scenes/fixtures for every environment reset, we fix them across resets to avoid partial observability in state-based settings. For \Robocasa{} environments, we set  \texttt{bc\_coeff} to 1 and \texttt{offline\_steps} to $500,000$. We use around 150 demos collected from the VLA checkpoint.  On \Robomimic{}, we do two sets of experiments: (1) with the full  set of the offline demos present (2) reducing the offline demonstration set to a fraction of the original (the \texttt{dataset.proportion} parameter): only this fraction of the offline trajectories is used for BC pretraining. The full list of our hyper parameter settings for \Robomimic{} experiments on \OGPO{} is present in \Cref{tab:hp-low-data}.

\begin{table}[h]
\centering
\small
\setlength{\tabcolsep}{5pt}
\caption{Low-data BC ablation on \Robomimic{}. Both columns use the \OGPO{} backbone with \Vadar{}-style critic disaggregation. \texttt{dataset.proportion} controls the fraction of offline demonstrations used for BC pretraining and offline replay; all other hyperparameters remain the same.}
\label{tab:hp-low-data}
\begin{tabular}{l c c c c}
\toprule
& \Transport{} (low) & \Transport{} (full) & \ToolHang{} (low) & \ToolHang{} (full) \\
\midrule
\texttt{dataset.proportion} & $0.1$ & $1.0$ & $0.30$ & $1.0$ \\
\midrule
offline\_steps      & 300K & 300K & 300K & 300K \\
online\_steps       & 3M   & 3M   & 3M   & 3M \\
q\_warmup\_steps    & 20K  & 20K  & 20K  & 20K \\
start\_training     & 40K  & 40K  & 10K  & 10K \\
\midrule
$\bar\tau_\pi$      & 0.005 & 0.005 & 0.005 & 0.005 \\
\texttt{bc\_coeff}  & 0.1   & 0.1  & 0.1  & 0.1 \\
\texttt{clip\_bc\_threshold} & 0.30 & 0.30 & 0.30 & 0.30 \\
success buffer      & true  & true  & true & true \\
\texttt{subsample\_bon}  & true  & true  & true & true \\
\bottomrule
\end{tabular}
\end{table}

\subsubsection{\FQL{} per-task hyperparameters}

\FQL{}~\citep{park2025flowqlearning} is the secondary flow-based backbone we use on \Robomimic{} \Transport{} and \ToolHang{}. \Cref{tab:hp-fql} lists the values used. We use the same network architecture and learning rates as \OGPO{} (i.e., \Cref{tab:hp-common}); We do behavior cloning using all the available data a bit longer, due to \FQL{} being less performative. The detailed list of hyperparameters used for \FQL{} are listed in \Cref{tab:hp-fql}

\begin{table}[!htb]
\centering
\small
\setlength{\tabcolsep}{6pt}
\caption{Per-task hyperparameters for the \FQL{} backbone on \Robomimic{}. Values not listed match \Cref{tab:hp-common}.}
\label{tab:hp-fql}
\begin{tabular}{l c c}
\toprule
& \Transport{} & \ToolHang{} \\
\midrule
offline\_steps    & 1M  & 500K \\
online\_steps     & 6M  & 3M \\
q\_warmup\_steps  & 50K & 50K \\
start\_training   & 10K & 10K \\
\midrule
$h$ (chunk)       & 8 & 8 \\
$\gamma$          & 0.999 & 0.999 \\
$\bar\tau_\pi$    & 0.005 & 0.005 \\
\texttt{q\_agg}   & mean & mean \\
\midrule
$|G|$, structure  & 7, DAG & 4, chain \\
\midrule
\texttt{group\_num\_samples} & 32 & 32 \\
\texttt{best\_of\_n}, \texttt{flow\_steps} & 8, 10 & 8, 10 \\
$\alpha_{\mathrm{FQL}}$ (distillation coeff.) & 1000 & 1000 \\
\midrule
\texttt{bc\_coeff}           & 1.0  & 1.0 \\
\texttt{clip\_bc\_threshold} & 0.45 & 0.45 \\
\bottomrule
\end{tabular}
\end{table}

\Needspace{12\baselineskip}
\section{Learning with VLM-Provided Subgoal Labels}
\label{sec:vlm}
\subsection{Goal-conditioned policies are computationally prohibitive with frontier model feedback}
\label{sec:vlm-judge}

For our main experiments, the subgoal completion indicators $\done_g$ are evaluated using privileged simulator state. In real-world settings, these intermediate subgoal labels may instead be obtained from a VLM given visual observations and natural-language descriptions of the subgoals \citep{du2023vision}.

\par
\Needspace{21\baselineskip}
\begin{wrapfigure}{r}{0.42\textwidth}
    
    \centering
    \includegraphics[width=\linewidth]{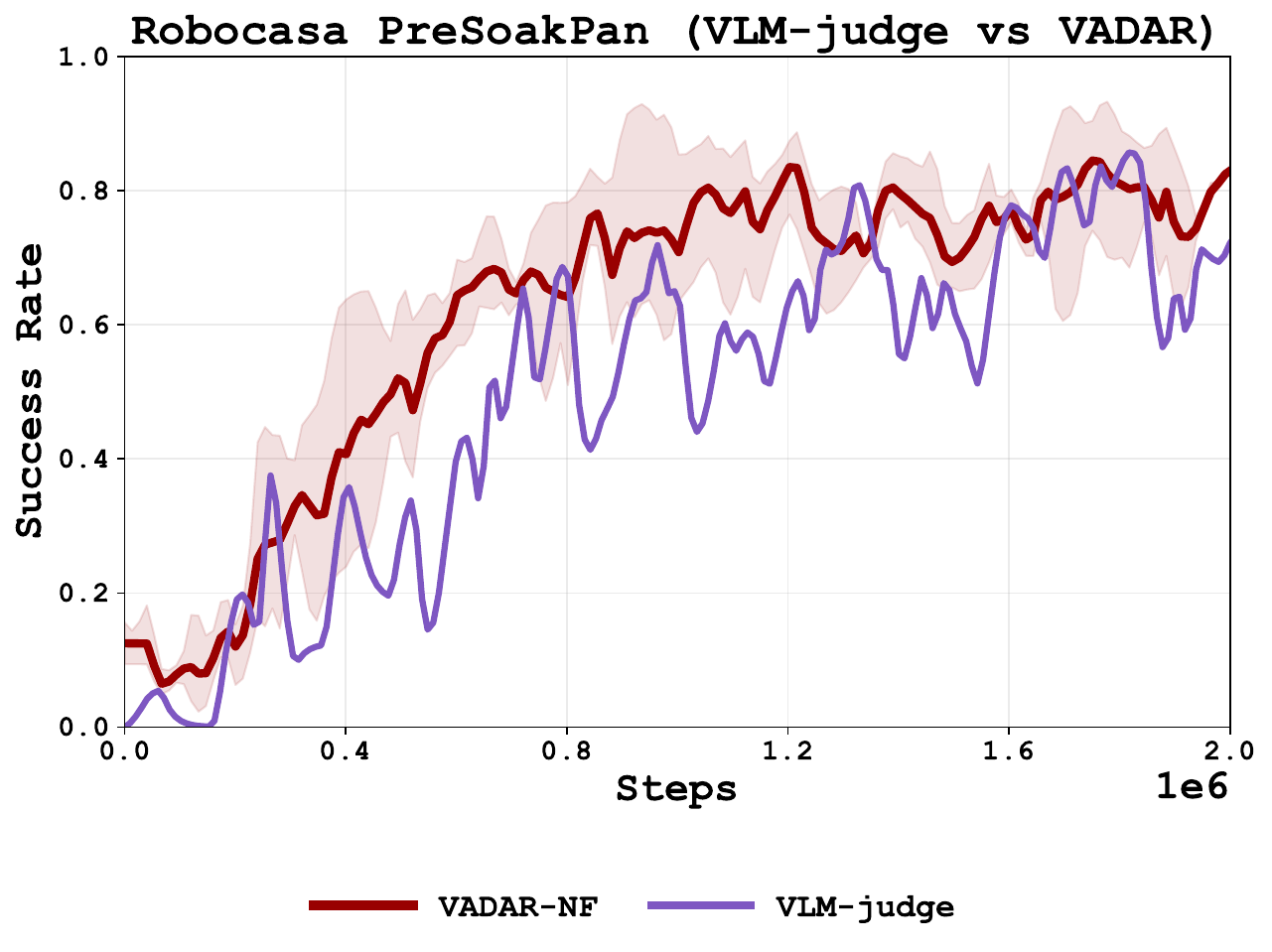}
    \caption{\VadarNF{} on \Robocasa{} \presoakpan{} with the \OGPO{} base algorithm, trained with oracle subgoal completions vs.\ with completions judged by a VLM (\texttt{gemini-3.6-flash}).}
    \label{fig:vlm-judge}
    
\end{wrapfigure}

We test whether \Vadar{} can learn when these intermediate training signals are provided by a VLM rather than by the simulator. On \Robocasa{} \presoakpan{}, we replace the oracle subgoal completion indicators with predictions from a VLM judge. The environment provides only the final task-success signal, while all intermediate subgoal supervision used by \Vadar{} comes from the VLM. The policy and critic remain state-based, and the VLM is used only to generate training-time subgoal labels. Full details and the judge prompt are provided in \Cref{app:vlm-judge-prompt}.

\Cref{fig:vlm-judge} compares \VadarNF{} trained with oracle and VLM-predicted subgoal completions. VLM supervision is somewhat less sample efficient early in training, but reaches a similar final success rate. These results show that VLMs can serve as training-time evaluators for the subgoal rewards used by \Vadar{}, without introducing any VLM dependence at deployment time, while \hgc{} needs such subgoal completion information during inference time, adding to the cost and latency of querying VLMs during deployment.

\iclrendwrap



\section{VLM Judge Prompt}
\label{app:vlm-judge-prompt}

\confpar{Protocol.} \newtext{We use \texttt{gemini-3.6-flash} as the VLM judge for the experiment in \Cref{sec:vlm-judge}. For every 8-step action chunk, the judge receives the frames at step offsets $[0, 2, 5, 7]$ from all three cameras, together with the prompt below, and returns a JSON list of $|G|$ booleans, which we use as the subgoal completion vector $\bd_t$ for the final step of the chunk. The prompt used for \Robocasa{} \presoakpan{} is shown below.}

\begin{AIbox}{VLM judge prompt for \Robocasa{} \presoakpan{}}
\ttfamily\footnotesize\raggedright
You are a strict robot-manipulation subgoal evaluator.\\
\medskip
Task: PreSoakPan\\
The input is one 8-step action chunk. It contains 4 chronological timestamps at step offsets [0, 2, 5, 7]. Every timestamp has exactly three synchronized camera views. Timestamps are ordered oldest to newest.\\
\medskip
Judge whether each subgoal{\textquotesingle}s completion criterion is satisfied at the FINAL timestamp. Use the earlier timestamps as visual history to disambiguate grasping, motion, placement, and occlusion. A transient condition such as {\textquotesingle}gripper is holding{\textquotesingle} must still be true at the final timestamp. Do not infer completion only from the expected task order, and do not require criteria that are not stated below.\\
\medskip
Subgoals:\\
0. pan\_grasped\_lifted\\
\hspace*{1.5em}Completion criterion: The gripper is holding the pan and the pan is lifted off the counter.\\
1. pan\_in\_sink\\
\hspace*{1.5em}Completion criterion: The pan is resting inside the sink basin.\\
2. sponge\_grasped\_lifted\\
\hspace*{1.5em}Completion criterion: The gripper is holding the sponge and the sponge is lifted up.\\
3. sponge\_in\_pan\_and\_sink\\
\hspace*{1.5em}Completion criterion: The sponge is inside the sink and is placed on or in the pan.\\
4. water\_on\\
\hspace*{1.5em}Completion criterion: The faucet water is running, the pan and sponge are in the sink, and the gripper has moved away from both objects.\\
\medskip
Return only strict JSON in this form: \{{\textquotedbl}done{\textquotedbl}:[true,false,...]\}. The list must contain exactly 5 booleans in the listed order.
\end{AIbox}

\section{Multi-Objective Optimization on \PushTObs{}: Full Details}
\label{app:multiobj-pushT}


This appendix gives the full environment, algorithm, and training details for the multi-objective experiment in \Cref{sec:exp-multiobj}, together with the reward-shaping ablation that controls for the safety signal's smoothness.

\confpar{ Generalization to multi-objective losses.} We observe that
\Vadarnc{} does not require that the disaggregated value functions correspond to rewards associated with subgoals of a long-horizon task. Any collection of independently valuable rewards, such as safety constraints or velocity penalties, can be included in the fitting of the multi-objective critics. We validate the utility of \Vadarnc{} for such multi-objective  problems in \Cref{sec:exp-multiobj}.

\subsection{  \Vadar{} enables stable multi-objective optimization}
\label{sec:exp-multiobj}

\par
\Needspace{19\baselineskip}
\begin{wrapfigure}{r}{0.4\textwidth}

    \centering
    \includegraphics[width=\linewidth]{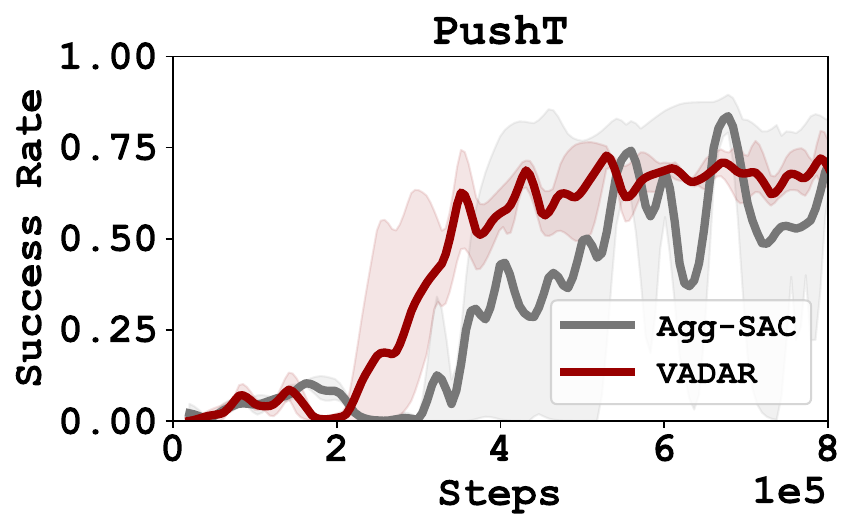}
    \caption{Multi-Objective Optimization using \Vadar{} on a variant of \texttt{PushT} created: \PushTObs{}. Here the goal of the agent is to align the T with the given position whilst avoiding contact with the obstacle }
    \label{fig:pushT-obstacle}
    
\end{wrapfigure}

Next, we test \Vadar{}'s sample efficient multi-objective optimization capability. Specifically, we consider a setting where a reward function is decomposed into $G$ groups $r = \sum_{g \in G} r^g$. While classical RL uses an \emph{single critic} for the aggregate reward $r$, we  propose \Vadar{} for \emph{separate per-reward critics}, which are aggregated after TD updates in the policy extraction step. Full reward and environment details are provided in \Cref{app:multiobj-pushT}. Here we present a high level summary.

 To test \Vadar{} for multi-objectiv optimization, we design \PushTObs{}, a variant of the popular env \texttt{PushT} \citep{chi2023diffusion} where a blue-dot pushes a gray ``T'' to the green goal configuration. We add a fixed circular obstacle and include both a sparse \emph{task reward} for pushing the T-block to the goal pose $r_{t}^{\mathrm{task}}$, and a dense \emph{safety reward} that penalizes proximity to the obstacle $r_{t}^{\mathrm{safety}}$. We instantiate \Vadar{} with \SAC{} and with $G = 2$ heads (one per signal) and use a vector-valued multigoal critic $(\bQ_\phi)$, using a constant weighting for the aggregated critic in policy extraction: $\Qagg = \bQ_\phi[\mathrm{task}] + \lambda_{\mathrm{safety}}\,\bQ_\phi[\mathrm{safety}]$. We compare against standard RL with aggregated reward $r_t = r^{\mathrm{task}}_t + \lambda_{\mathrm{safety}}\,r^{\mathrm{safety}}_t$ as a baseline(\aggSAC{}). See \Cref{app:multiobj-pushT} and \Cref{app:pusht-lambda-sweep} for more details on the env and comparison of \Vadar{} and \aggSAC{} on \PushTObs{}. \Cref{fig:pushT-obstacle} shows  that that $\Vadar{}$ outperforms aggregated rewards, even when 
\emph{ $\lambda_{\mathrm{safety}}$ is chosen to maximize baseline performance}. Moreover, we observe added training stability with \Vadar{} safety, compared to \aggSAC{}.

\relax{}
\relax{}

\relax{}
\iclrendwrap

\subsection{Environment}

\PushTObs{} extends the standard \texttt{PushT}~\citep{chi2024diffusionpolicy} planar block-pushing task with a static circular obstacle of radius $30$~px placed at $(320, 220)$ in the $512{\times}512$ workspace. The agent (a circular pusher) must move a T-shaped block to a target pose while avoiding the obstacle. We use the keypoint observation variant (\texttt{pusht-keypoints-v0}) with a $5$D state $(x_{\mathrm{agent}}, y_{\mathrm{agent}}, x_{\mathrm{block}}, y_{\mathrm{block}}, \theta_{\mathrm{block}})$ and a $2$D continuous action setting the target position of the PD-controlled pusher. Episodes terminate after $300$ steps.

\begin{figure}[!htb]
    \centering
\includegraphics[width=0.75\linewidth]{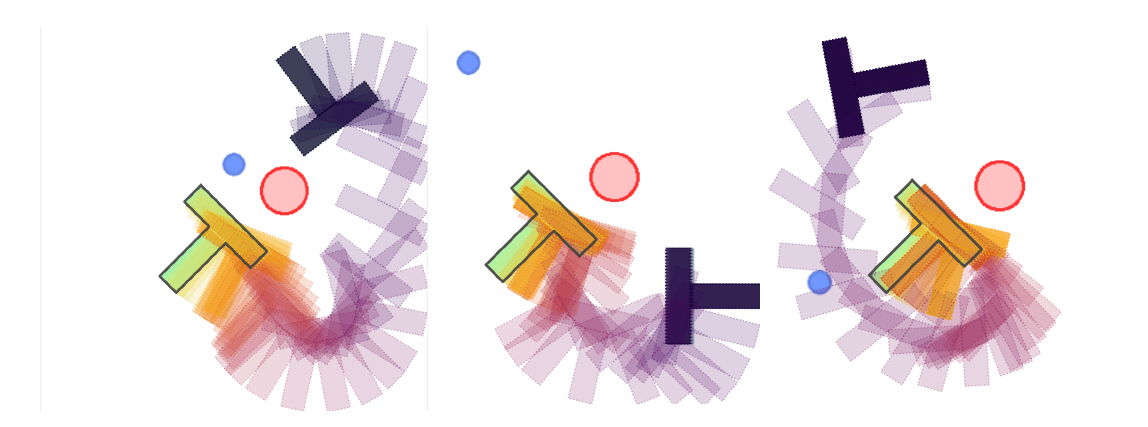}
    \caption{Depiction of our \PushTObs{} env. The agent needs to place the T into the T shaped slot (measured by IoU $\geq 95\%$) whilst avoiding the red shaped obstacle placed in the vicinity of the final position.}
    \label{fig:pusht-multiobjective}
\end{figure}

Each environment step yields two scalar signals:
\begin{itemize}
    \item \emph{Task signal}: a sparse coverage indicator $r^{\mathrm{task}}_t \in \{-1, +5\}$, equal to $+5$ when the block-goal IoU exceeds $95\%$ and $-1$ otherwise.
    \item \emph{Safety signal}: a continuous soft-margin proximity cost $c_t \in [0, 1]$ that takes value $1$ if either the pusher or the block penetrates the obstacle, decays linearly to $0$ across a safety margin of $10$~px around the obstacle boundary, and is $0$ outside the margin. The corresponding per-step safety reward used by the critic is $r^{\mathrm{safety}}_t = -5\,c_t$.
\end{itemize}

\subsection{\Vadar{} Instantiation}

We provide the full pseudocode in \Cref{alg:vadar-multiobj}. We instantiate \Vadar{} on top of a \SAC{} backbone with $G = 2$: head $0$ tracks task completion and head $1$ tracks the safety cost. The vector critic $\bQ_\phi : S \times A \to \R^{2}$ is a 4-layer MLP of width $512$ with layer norm, ensembled over $10$ members and aggregated by mean. The actor is a tanh-squashed Gaussian over a $4$-step action chunk, also a 4-layer MLP of width $512$. The TD targets follow \Cref{eq:vector_td}: the task head uses the standard reward-clipped \Vadar{} target with cap $\cclip \cdot \rdone = 15$, while the safety head uses  the backup $-5\,c_t + \gamma^{H} Q[{\mathrm{safety}}](s', a')$ with $\gamma = 0.99$ and chunk length $H = 4$. Because neither subtask is a prerequisite for the other, the policy weighting $\getWeight$ collapses to a constant $\bw = (1, \lambda_{\mathrm{safety}})$, yielding the aggregated objective
\begin{align*}
    \confinlinestyle \Qagg(s,a) \;=\; \bQ_\phi[{\mathrm{task}}](s,a) \;+\; \lambda_{\mathrm{safety}}\, \bQ_\phi[{\mathrm{safety}}](s,a),
\end{align*}
with $\lambda_{\mathrm{safety}} = 1.0$ in our experiments. (We also do a sweep over the values of  $\lambda_{\mathrm{safety}}$ \Cref{app:pusht-lambda-sweep}). Apart from the vector-valued critic and this aggregation, the agent is parameterized as a standard Squashed Gaussian policy with learnable temperature.

\begin{algorithm}[!htbp]
\caption{\Vadar{} for Multi-Objective RL (instantiated on \PushTObs{} with $G = 2$).}
\label{alg:vadar-multiobj}
\small
\begin{algorithmic}[1]
\State \textbf{Parameters:}  discount $\gamma$; learning rate $\eta$; safety weight $\modcolor{\lambda_{\mathrm{safety}}}$; \modcolor{task-head TD clip $\cclip$}.
\State \textbf{Initialize} \modcolor{multi-objective} critic $\bQ_{\phi}: S\times A \to \modcolor{\R^{2}}$ with heads $\{\mathrm{task},\mathrm{safety}\}$; policy $\pi_\theta:S\to\Delta(A)$; replay buffer $\Buffer\gets\emptyset$.
\For{episode = 1, 2, \dots}
    \While{episode not done}
        \State $a_t \gets \pi_\theta(s_t)$, transition $s_{t+1}\sim P(s_t,a_t)$
        \State Read both signals: \modcolor{task indicator $\done[{\mathrm{task}}]_{t}\in\{0,1\}$ and continuous proximity cost $c_{t}\in[0,1]$.}
        \State \modcolor{Form reward vector $\br_t = (\rsparse(\done[{\mathrm{task}}]_{t}),\, -5\,c_{t})$.}
        \State $\Buffer \gets \Buffer \cup \{(s_t,a_t,s_{t+1},\modcolor{\br_t,\done[{\mathrm{task}}]_{t}})\}$
        \For{number of updates}
        \State Sample $(s,a,s',\modcolor{\br,\done[{\mathrm{task}}}])\sim\Buffer$, $a'\sim\pi_\theta(s')$
        \State $y^{\mathrm{target}}[\mathrm{task}] = \modcolor{\min\{}\br[\mathrm{task}] + \gamma\,\bQ_\phi[{\mathrm{task}}](s',a'),\; \modcolor{\cclip\cdot \rdone\}}$ \label{line:multiobj-task-target}
        \Comment{reward clipping}
        \State $y^{\mathrm{target}}[\mathrm{safety}] = \br[\mathrm{safety}] + \gamma\,\bQ_\phi[{\mathrm{safety}}](s',a')$ \label{line:multiobj-safety-target}
        \State $\phi \gets \phi - \eta\nabla_\phi \modcolor{\sum_{g\in\{\mathrm{task},\mathrm{safety}\}}}\Lcritic(\bQ_\phi[g]\mid s,a,y^{\mathrm{target},g})$
        \State \modcolor{$\Qagg(\cdot,\cdot) \gets \bQ_\phi[{\mathrm{task}}](\cdot,\cdot) + \lambda_{\mathrm{safety}}\,\bQ_\phi[{\mathrm{safety}}](\cdot,\cdot)$} \Comment{constant weighting}
        \State $\theta \gets \theta - \eta\nabla_\theta \Lpol(\pi_\theta\mid s,a,\Qagg)$
        \EndFor
    \EndWhile
\EndFor
\end{algorithmic}
\end{algorithm}

\FloatBarrier

\subsection{Baselines}

To isolate the contribution of value disaggregation from the contribution of the underlying reward terms, we compare against a dense-reward baseline that shares \Vadar{}'s base algorithm (\SAC{}), critic ensemble, network sizes, BC pretraining, and online step budget, and that consumes \emph{the exact same per-step task and safety signals} but combines them into a single scalar reward fitted by a scalar critic. We call this \aggSAC{} (\Cref{alg:dense-parity}), and it combines the task and safety signals term-for-term as $r_t = r^{\mathrm{task}}_t + \lambda_{\mathrm{safety}}\, r^{\mathrm{safety}}_t$, with the same $\lambda_{\mathrm{safety}} = 1.0$. This is the most direct baseline: it differs from \Vadar{} only in collapsing the two signals before the Bellman backup rather than after.

\begin{algorithm}[!htbp]
\caption{Dense-reward shaping baseline for \PushTObs{} (\aggSAC{}).}
\label{alg:dense-parity}
\small
\begin{algorithmic}[1]
\State \textbf{Parameters:}  discount $\gamma$; learning rate $\eta$; safety weight $\lambda_{\mathrm{safety}}$.
\State \textbf{Initialize} \emph{scalar} critic $Q_\phi : S\times A \to \R$; policy $\pi_\theta:S\to\Delta(A)$; replay buffer $\Buffer\gets\emptyset$.
\For{episode = 1, 2, \dots}
    \While{episode not done}
        \State $a_t \gets \pi_\theta(s_t)$, transition $s_{t+1}\sim P(s_t,a_t)$
        \State Read both signals: task indicator $\done^{\mathrm{task}}_{t}\in\{0,1\}$ and continuous proximity cost $c_{t}\in[0,1]$
        \State \emph{Combine before the Bellman backup into a single scalar reward:}
        \State $r_t \gets \rsparse(\done^{\mathrm{task}}_{t}) + \lambda_{\mathrm{safety}}\,(-5\,c_{t})$ \Comment{shared signals with \Cref{alg:vadar-multiobj}; same $\lambda_{\mathrm{safety}}$}
        \State $\Buffer \gets \Buffer \cup \{(s_t,a_t,s_{t+1},r_{t})\}$
        \For{number of updates}
        \State Sample $(s,a,s',r)\sim\Buffer$, $a'\sim\pi_\theta(s')$
        \State $\ytarg = r + \gamma\, Q_\phi(s',a')$ \Comment{single scalar TD target}
        \State $\phi \gets \phi - \eta\nabla_\phi \Lcritic(Q_\phi\mid s,a,\ytarg)$
        \State $\theta \gets \theta - \eta\nabla_\theta \Lpol(\pi_\theta\mid s,a,Q_\phi)$
        \EndFor
    \EndWhile
\EndFor
\end{algorithmic}
\end{algorithm}

\FloatBarrier

\newcommand{\lsafety}{\lambda_{\mathrm{safety}}}

\subsection{Training and Evaluation Protocol}

All methods are pre-trained for $150{,}000$ behavior-cloning steps on a fixed dataset of transitions collected from a previously trained \SAC{} policy on the obstacle-free task, and then fine-tuned online for $800{,}000$ environment steps. We measure the success rate in terms of \emph{success}: block-goal coverage $\geq 95\%$ \emph{and} no collision with the obstacle occurs the episode.

\subsection{Parameter Sweep for $\lambda_{\textrm{safety}}$}
\label{app:pusht-lambda-sweep}

We sweep over values of $\lambda_{\mathrm{safety}} \in \{0.1, 1, 5, 10\}$
(\Cref{fig:sweep-pushT}) and observe two trends. First, \Vadar{} is far less
sensitive to the choice of $\lambda_{\mathrm{safety}}$: it reaches comparable
final success rates at $\lambda = 0.1$ and $\lambda = 1$, degrades only
moderately at $\lambda = 5$, and still makes meaningful progress at $\lambda
= 10$. \aggSAC{} matches \Vadar{} at small $\lambda$ but collapses to near-zero
success once $\lambda \geq 5$, since the safety penalty starts dominating the
aggregate reward. Second, at any fixed $\lambda_{\mathrm{safety}}$, \Vadar{}
trains more stably than \aggSAC{}, with narrower seed bands and fewer
fluctuations during training. We conjecture that this is due to ease of composition at the Q-function
level: each \Vadar{} head fits a $\lambda$-independent target, while
\aggSAC{} absorbs $\lambda$ into the target itself.

\begin{figure}[!htb]
    \centering
\includegraphics[width=\linewidth]{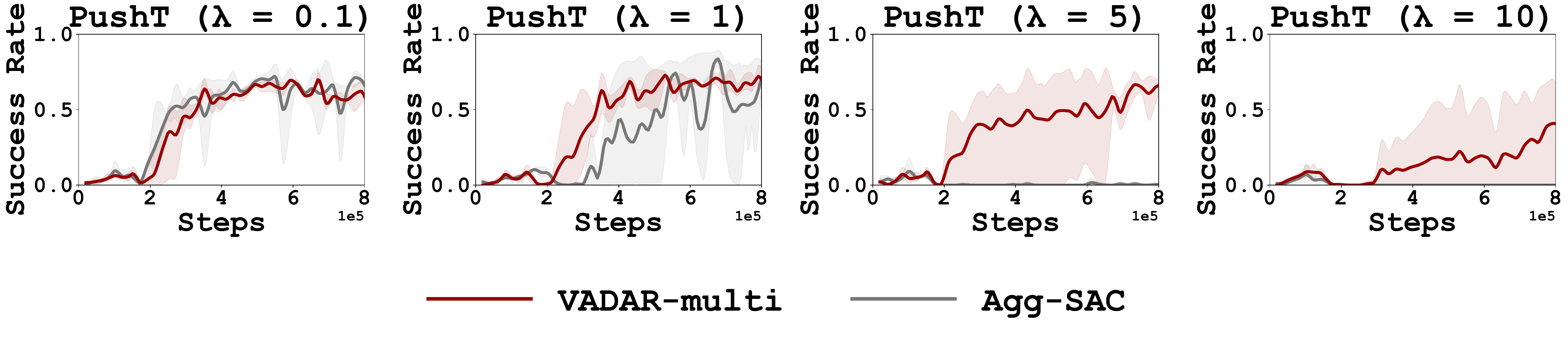}
    \caption{Plots of success rate for different values of $\lambda_{\mathrm{safety}}$ comparing both \Vadar{} and \aggSAC{} on our \PushTObs{} environment. Across all the values of  $\lambda_{\mathrm{safety}}$ that \Vadar{} is more performative and stable than \aggSAC{}}
    \label{fig:sweep-pushT}
\end{figure}


\section{Test-Time Policy Steering}
\label{app:steering}

This appendix gives the full protocol, critic training details, and results for the test-time steering experiment in \Cref{sec:exp-steering}.

\subsection{Setup}
We evaluate a critic by using it to select among candidate actions of a frozen policy. The policy is the BC-pretrained flow policy of \OGPO{}, trained once and never updated. At every step, the policy proposes $N \in \{1, 4, 16, 64\}$ candidate action chunks of length $h = 8$, the critic scores each candidate, and the highest-valued chunk is executed. With $N=1$ the critic is not used, so this setting is the unsteered BC policy and is shared across critics and budgets, and we report a single $N=1$ baseline per task. The \sparsecolor{\textmd{Sparse}} critic is scored by its ensemble mean (\Cref{app:critic_ensembling}). The \Vadar{} critic is scored by the next-future aggregate $\Qagg(s,a) = \sum_{g \in G} \bw[g]\,Q_\phi(s,a,g)$ of \Cref{eq:q_agg}, where the goal-conditioned critic $Q_\phi(s,a,g)$ is evaluated once per subgoal and $\bw = \getWeight(\bd_t)$ is computed from the environment's subgoal completion indicators.

\subsection{Critic Training}
Starting from the frozen BC policy, only the critic is trained; the policy is never updated against the critic. The replay buffer is first seeded with $20{,}000$ environment steps of frozen-policy rollouts. The critic is then updated with 4 gradient steps per environment step on batches drawn half from the offline demonstrations and half from frozen-policy rollouts, for up to $300{,}000$ environment steps, and we evaluate checkpoints at $40{,}000$, $100{,}000$, $200{,}000$ and $280{,}000$ steps. The two critics share all hyperparameters (discount, value bounds, normalization and network sizes, \Cref{tab:hp-common}) and differ only in the critic loss. We report the mean and standard deviation over 3 seeds, each evaluated on 96 episodes.

\begin{figure}[!ht]
    \centering
    \includegraphics[width=\linewidth]{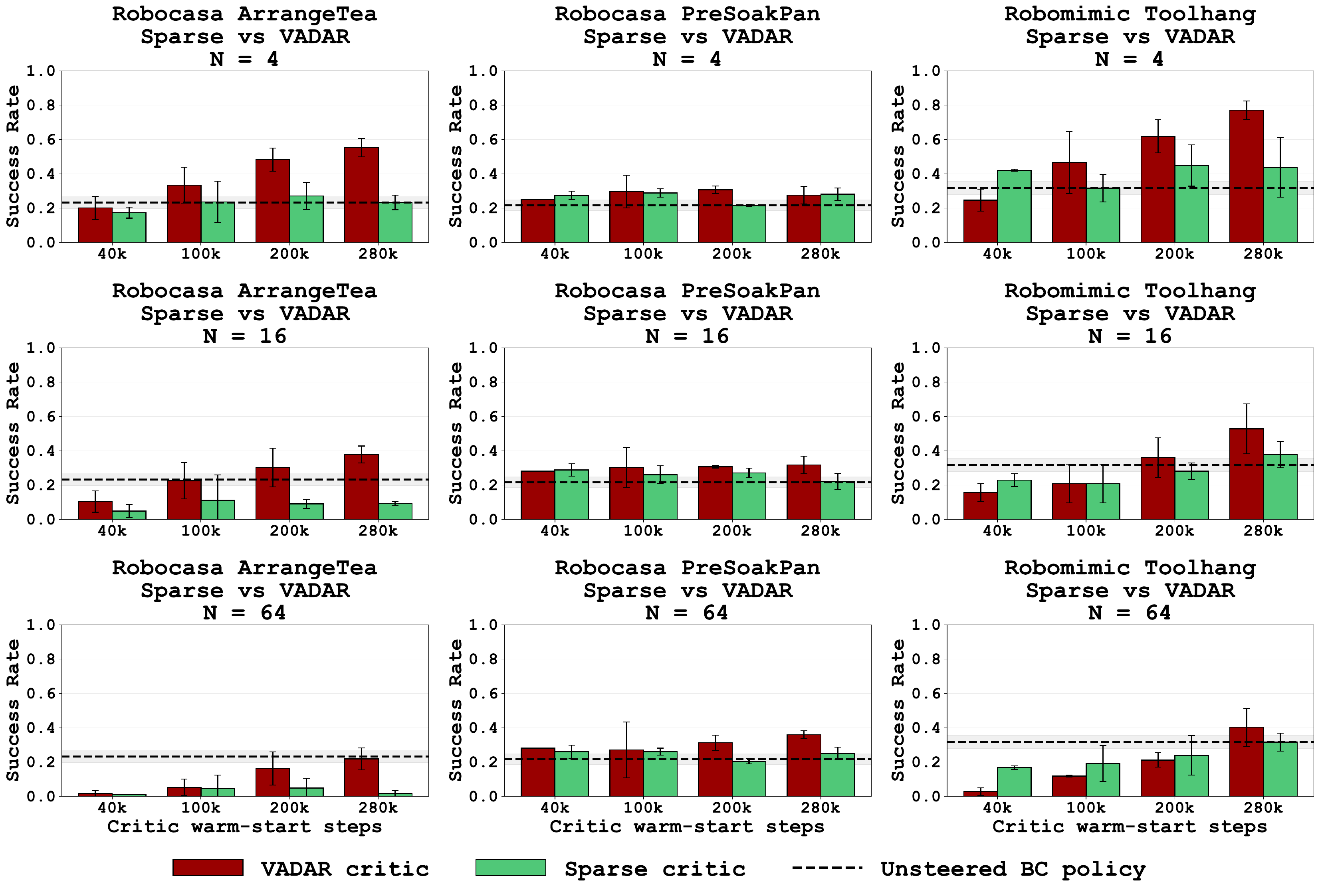}
    \caption{Best-of-$N$ steering of the frozen \OGPO{} BC policy with the \sparsecolor{\textmd{Sparse}} and \Vadar{} critics for $N \in \{4, 16, 64\}$ (rows) on \Robocasa{} \arrangetea{}, \presoakpan{} and \Robomimic{} \ToolHang{} (columns), as a function of the critic training budget. Error bars are one standard deviation over seeds; the dashed line is the unsteered BC policy.}
    \label{fig:steering-grid}
\end{figure}

\begin{table}[!ht]
\centering
\small
\setlength{\tabcolsep}{2.9pt}
\caption{Success rate of best-of-$N$ steering of the frozen \OGPO{} BC policy with the \sparsecolor{\textmd{Sparse}} and \Vadar{} critics, as a function of the critic training budget (mean $\pm$ std over seeds). The $N=1$ row is the unsteered BC policy, and the best cell in each row is highlighted.}
\label{tab:steering}
\begin{tabular}{llcccccc}
\toprule
& & \multicolumn{3}{c}{\sparsecolor{\textmd{Sparse}} critic} & \multicolumn{3}{c}{\Vadar{} critic} \\
\cmidrule(lr){3-5} \cmidrule(lr){6-8}
\textbf{Task} & \textbf{Critic budget} & $N=4$ & $N=16$ & $N=64$ & $N=4$ & $N=16$ & $N=64$ \\
\midrule
\multirow{5}{*}{\arrangetea{}}
 & $N=1$ & \multicolumn{6}{c}{$0.23 \pm 0.04$} \\
 & 40K & $0.17 \pm 0.03$ & $0.05 \pm 0.04$ & $0.01 \pm 0.00$ & \cellcolor{gray!20}$\mathbf{0.20 \pm 0.07}$ & $0.10 \pm 0.06$ & $0.02 \pm 0.02$ \\
 & 100K & $0.24 \pm 0.12$ & $0.11 \pm 0.15$ & $0.05 \pm 0.08$ & \cellcolor{gray!20}$\mathbf{0.33 \pm 0.10}$ & $0.23 \pm 0.11$ & $0.05 \pm 0.05$ \\
 & 200K & $0.27 \pm 0.08$ & $0.09 \pm 0.03$ & $0.05 \pm 0.06$ & \cellcolor{gray!20}$\mathbf{0.48 \pm 0.07}$ & $0.30 \pm 0.11$ & $0.16 \pm 0.10$ \\
 & 280K & $0.23 \pm 0.04$ & $0.09 \pm 0.01$ & $0.02 \pm 0.02$ & \cellcolor{gray!20}$\mathbf{0.55 \pm 0.05}$ & $0.38 \pm 0.05$ & $0.22 \pm 0.06$ \\
\midrule
\multirow{5}{*}{\presoakpan{}}
 & $N=1$ & \multicolumn{6}{c}{$0.22 \pm 0.03$} \\
 & 40K & $0.27 \pm 0.02$ & \cellcolor{gray!20}$\mathbf{0.29 \pm 0.04}$ & $0.26 \pm 0.04$ & $0.25$ & $0.28$ & $0.28$ \\
 & 100K & $0.29 \pm 0.02$ & $0.26 \pm 0.05$ & $0.26 \pm 0.02$ & \cellcolor{gray!20}$\mathbf{0.30 \pm 0.10}$ & \cellcolor{gray!20}$\mathbf{0.30 \pm 0.12}$ & $0.27 \pm 0.16$ \\
 & 200K & $0.22 \pm 0.01$ & $0.27 \pm 0.03$ & $0.20 \pm 0.02$ & \cellcolor{gray!20}$\mathbf{0.31 \pm 0.02}$ & \cellcolor{gray!20}$\mathbf{0.31 \pm 0.01}$ & \cellcolor{gray!20}$\mathbf{0.31 \pm 0.04}$ \\
 & 280K & $0.28 \pm 0.04$ & $0.22 \pm 0.05$ & $0.25 \pm 0.04$ & $0.28 \pm 0.05$ & $0.32 \pm 0.05$ & \cellcolor{gray!20}$\mathbf{0.36 \pm 0.02}$ \\
\midrule
\multirow{5}{*}{\ToolHang{}}
 & $N=1$ & \multicolumn{6}{c}{$0.32 \pm 0.04$} \\
 & 40K & \cellcolor{gray!20}$\mathbf{0.42 \pm 0.01}$ & $0.23 \pm 0.04$ & $0.17 \pm 0.01$ & $0.25 \pm 0.06$ & $0.16 \pm 0.05$ & $0.03 \pm 0.02$ \\
 & 100K & $0.32 \pm 0.08$ & $0.21 \pm 0.11$ & $0.19 \pm 0.10$ & \cellcolor{gray!20}$\mathbf{0.47 \pm 0.18}$ & $0.21 \pm 0.11$ & $0.12 \pm 0.01$ \\
 & 200K & $0.45 \pm 0.12$ & $0.28 \pm 0.05$ & $0.24 \pm 0.12$ & \cellcolor{gray!20}$\mathbf{0.62 \pm 0.10}$ & $0.36 \pm 0.12$ & $0.21 \pm 0.04$ \\
 & 280K & $0.44 \pm 0.17$ & $0.38 \pm 0.08$ & $0.32 \pm 0.05$ & \cellcolor{gray!20}$\mathbf{0.77 \pm 0.05}$ & $0.53 \pm 0.15$ & $0.40 \pm 0.11$ \\
\bottomrule
\end{tabular}
\end{table}

\subsection{Results}
\Cref{fig:steering-grid} and \Cref{tab:steering} give the full sweep, and we observe three trends. First, steering with the \Vadar{} critic improves with the critic training budget at every $N$ on \arrangetea{} and \ToolHang{}, and at the largest budget on \presoakpan{}. Second, the \sparsecolor{\textmd{Sparse}} critic, trained on identical data with an identical budget, does not improve on the unsteered policy on \arrangetea{} at any budget or $N$, is flat in both budget and $N$ on \presoakpan{}, and on \ToolHang{} improves on the unsteered policy at small budgets but plateaus well below the \Vadar{} critic. The only setting in which the \sparsecolor{\textmd{Sparse}} critic is ahead is \ToolHang{} at the smallest budget, and the \Vadar{} critic overtakes it from $100{,}000$ steps onward. Third, steering degrades with large $N$ for both critics, since the maximum over more candidates increasingly selects out-of-distribution actions, and the degradation is milder for the \Vadar{} critic.

\begin{figure}[!tb]
    \centering
    \includegraphics[width=\linewidth]{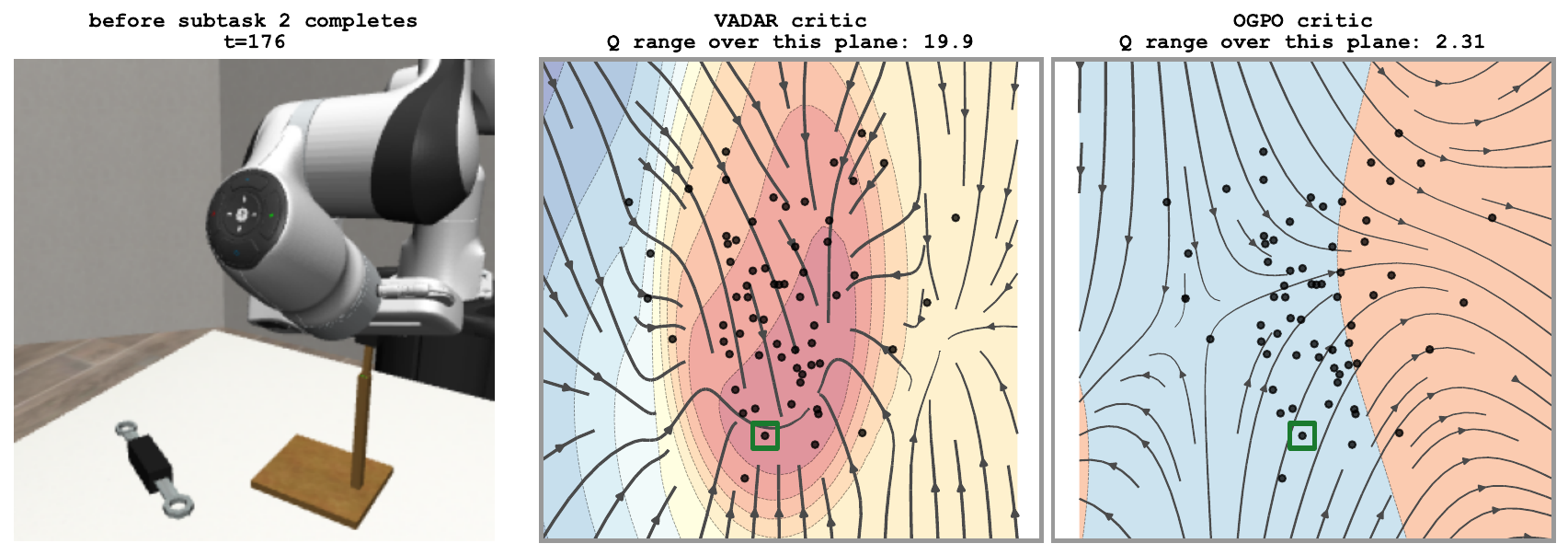}
    \caption{Q-values of the \Vadar{} (middle) and \sparsecolor{\textmd{Sparse}} (right, labelled \OGPO{} in the panel) critics, each trained with the largest critic budget of 280K steps, at a single step of \Robomimic{} \ToolHang{} before the frame is inserted into the stand (left), plotted over the plane of the first two principal components of the BC policy's 64 candidate action chunks. Dots are the candidates and the green square is the executed chunk; streamlines follow the gradient of $Q$ with respect to the action, and the contour spacing is the same in both panels.}
    \label{fig:q-landscape}
\end{figure}

To understand this gap, we roll out the frozen policy and, at each step, score the same 64 candidate action chunks with both critics. \Cref{fig:q-landscape} shows the two critics at one such step on \ToolHang{}, just before subgoal $g^2$ (\texttt{frame\_inserted}, \Cref{app:task_descriptions}) is completed. The insertion is the step of the task that demands the most precision, and hence one of its most important decision steps. We take the plane spanned by the two principal directions of the 64 candidates, evaluate each critic on a grid of actions in this plane, and plot its contours together with the direction of its action-gradient. As both critics predict returns in the same units, we use the same contour spacing for both, so that a critic with little variation over the plane shows few contours. The \Vadar{} critic varies by an order of magnitude more than the \sparsecolor{\textmd{Sparse}} critic over this plane. The executed chunk is the action taken by the successful trajectory, i.e.\ the correct action at this step. The \Vadar{} critic identifies it: it assigns its highest values to the candidates near the executed chunk, and its gradient points towards the executed chunk from across the candidate set. The \sparsecolor{\textmd{Sparse}} critic does not: it assigns nearly the same value to all candidates, and its gradient does not point towards the executed chunk. This suggests that \Vadar{} learns a better critic: one that correctly ranks the actions at the step that decides the task, whilst the \sparsecolor{\textmd{Sparse}} critic trained on the same data cannot tell them apart. We conjecture that this difference in critic quality is what drives \Vadar{}'s advantage in both the online RL experiments, where the critic supervises policy extraction, and the steering experiments, where it ranks candidate actions directly.

\section{Last-$Q$ Extraction: Full Benchmark Results}
\label{app:lastq-full}

\Cref{fig:lastq-full} gives the full comparison accompanying \Cref{sec:exp-lastq}.

\begin{figure}[!htbp]
    \centering
    \includegraphics[width=\linewidth]{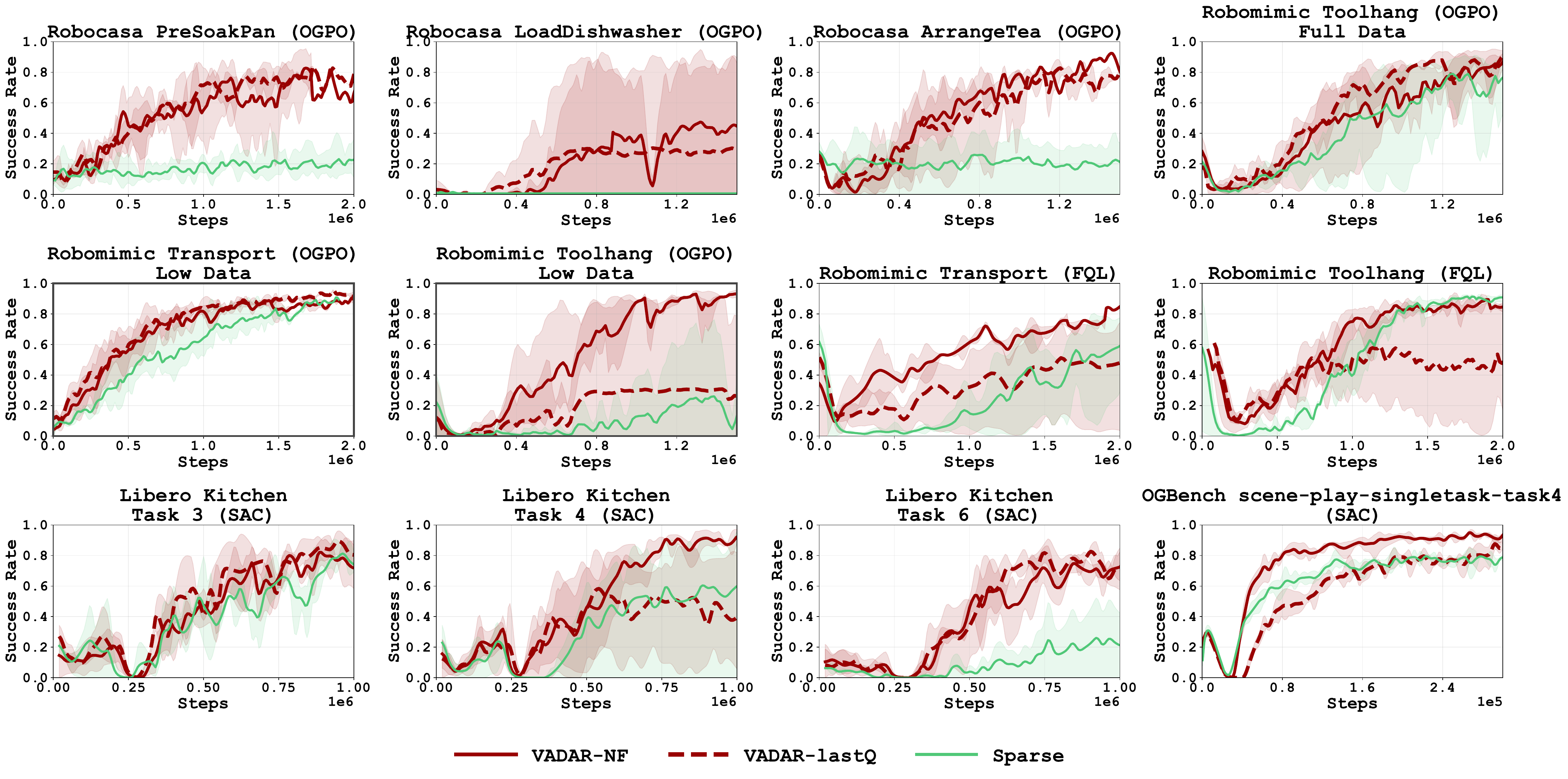}
    \caption{Success rate of \LastQ{} compared with \VadarNF{} and \sparsecolor{\textmd{Sparse}} across all benchmarks and instantiations.}
    \label{fig:lastq-full}
\end{figure}


\section{Connections to Successor Features and Reward Shaping}
\label{app:successor-features}
\Vadar{}'s aggregation step (\Cref{eq:q_agg}) is structurally similar to a success feature decomposition of a Q-function. Classical success features assume reward is linear in a feature map $\phi: \cS \times \cA \to \R^{|G|}$ \citep{dayan1993improving, barreto2017successor}, $r(s,a) = \langle \bw, \bphi(s,a) \rangle$ for some fixed $\bw \in \R^{|G|}$. The Q-function then inherits the same decomposition $Q^\pi(s,a)=\langle \bw, \bpsi^\pi(s,a) \rangle$, where $\bpsi$ is the successor feature, which represents the expected discounted sum of returns under policy $\pi$. Crucially as motivated by \citep{barreto2017successor}, this allows efficient policy extraction from arbitrary reward weightings without retraining policies across tasks. 

\Vadar{} instantiates the reward features $\bphi(s,a)$ as per-subgoal rewards $\phi_g(s,a) = r_g(s,a)$ so the successor features are precisely the per-subgoal Q-values $\bQ_{\phi_g}(s,a) = \bpsi^\pi_g(s,a)$. The weight vector $\bw(s) = \getWeight(\bd(s))$ is a state-dependent current subgoal completion vector. This allows \Vadar{} to recover sufficient statistics for evaluating the entire ancestral tree of weighted objectives in a single multi-step RL training run with $\bw(s)$ as the contraction operator.

\section{Fixed-Weight Value Aggregation and Reward Shaping}
\label{sec:reward-shaping-theory}
For fixed weights and true per-goal Q-functions under a common policy, value aggregation is equivalent to evaluating an aggregated reward.  Consider family of per-goal MDPs $M_g = (\statespace, \actionspace, P_0, P, R_g,\gamma)$ which share common dynamics and discount factor, but differ in their reward $R_g$. For example,  in the goal-conditioned setting described in \Cref{sec:hgc-prelim},   choosing $R_g(s,a) \gets \rsparse(\done_g(s))$ yields that $M_g$ is the MDP where rewards are associated with subgoal $g$. In principle, however, the theory accomodates any choices of rewards. Finally, we let $Q^{\pi,M_g}$ denote the true Q-function   (as described in the preliminaries) associated with $\pi$ under MDP $M_g$.

We now show that the Q-function achieved by aggregating per-goal Q-functions is equivalent to the Q-function of a policy in an MDP whose reward aggregates per-goal rewards:

\begin{proposition} Fix a weighting vector $\bw \in \R^G$. Then, for any goal-unconditioned policy $\pi: S \to \Delta(A)$, and any $s,a$, we have the equality
\begin{align}
    \Qagg^{\pi}(s,a) := \sum_{g} \bw[g] Q^{\pi,M_g}(s,a) = \Exp^{\pi}\left[\sum_{t\ge 0}\gamma^t R_{\bw}(s_t,a_t) \mid (s_0,a_0) = (s,a)\right] = Q^{\pi,M_{\bw}}(s,a),
\end{align}
where above $ R_{\bw} := \sum_{g}\bw[g]R_g$ and $M_{\bw}$ is the MDP defined by $(\statespace, \actionspace, P_0, P, R_\bw,\gamma)$
\end{proposition}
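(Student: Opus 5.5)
The plan is to prove this by linearity of expectation, exploiting that the trajectory distribution depends only on $(P_0,P,\pi)$ and not on the reward. Since every $M_g$ shares the same dynamics, the same discount and the same goal-unconditioned policy, the law of the trajectory $(s_0,a_0,s_1,a_1,\dots)$ conditioned on $(s_0,a_0)=(s,a)$ is one and the same probability measure $\Pr^{\pi}$ for every $g$ and for $M_{\bw}$. This is the crux: it is exactly why the policy must be goal-unconditioned (and why $\bw$ must be fixed rather than state-dependent as in $\getWeight$). I will state it first, noting that the measure is built from $\pi(\cdot\mid s_t)$ and $P(\cdot\mid s_t,a_t)$ alone, so the superscript $M_g$ in $\Exp^{\pi,M_g}$ affects only the integrand.

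With this in hand, I write each $Q^{\pi,M_g}(s,a)=\Exp^{\pi}[\sum_{t\ge0}\gamma^t R_g(s_t,a_t)\mid (s_0,a_0)=(s,a)]$ under this common measure. I then take the finite weighted sum over $g\in G$ and pull it inside the expectation. Next, I exchange the finite sum over $g$ with the infinite sum over $t$, and regroup to obtain $\sum_t\gamma^t\sum_g\bw[g]R_g(s_t,a_t)=\sum_t\gamma^t R_{\bw}(s_t,a_t)$. This gives the middle expression of the proposition, and by definition it equals $Q^{\pi,M_{\bw}}(s,a)$, since $M_{\bw}$ again has the same dynamics.

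The only technical obstacle is justifying the interchange of the expectation with the infinite series. To handle it, I will assume the rewards are bounded, $|R_g|\le \bar R$; this holds here since $\rsparse$ takes values in $\{-1,\rdone\}$ and the clipped/safety rewards are bounded. Then $\sum_t\gamma^t|R_g(s_t,a_t)|\le \bar R/(1-\gamma)$ almost surely, and each $Q^{\pi,M_g}$ is finite. Dominated convergence (or Fubini on $\mathbb{N}\times$ trajectory space) then licenses splitting the expectation of the sum into a sum of expectations. The same bound also covers the weighted combination, because $|R_{\bw}|\le \bar R\sum_g|\bw[g]|$.

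If the episodes are finite-horizon (length $T+1$, as in the preliminaries), the series is a finite sum and the interchange is trivial. In that case, linearity of expectation alone finishes the argument.
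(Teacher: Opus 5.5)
Your proposal is correct and follows the same route as the paper: shared dynamics and a goal-unconditioned policy give one common trajectory law, so linearity of expectation commutes the finite weighted sum over $g$ with the expectation. Your bounded-reward remark adds rigor the paper leaves implicit, though strictly you only need each return $\sum_{t\ge 0}\gamma^t R_g(s_t,a_t)$ to be a well-defined integrable random variable; linearity over the finite set $G$ then suffices, and no interchange of $\Exp$ with the series in $t$ is needed.
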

\begin{proof} Observe that for any MDP $M_{g}$ with reward $R_g$, we have
\begin{align*}Q^{\pi,M_g}(s,a) = \Exp^{\pi}\left[\sum_{t\ge 0}\gamma^t R_{g}(s_t,a_t) \mid (s_0,a_0) = (s,a)\right],
\end{align*}
where $\Exp^{\pi}$ is expectation under the dynamics of the MDP and policy $\pi$. As the dynamics are shared between all per-goal MDPs, we may leverage linearity of expectation to commute the expectation with the weighted summation.
\end{proof}
This identity applies to fixed weights and exact Q-functions. It does not establish equivalence between \Vadar{}'s state-dependent weighting and clipped, approximate critic updates and learning a scalar critic on an aggregated reward. The experiments in \Cref{sec:exp-lastq,sec:exp-reward-shaping} distinguish reward aggregation from the benefits of dense critic supervision.

\begin{AIbox}{Reward shaping does \emph{not} explain Vadar's success}  It might be tempting to attribute \Vadar's success to reward shaping. However, in \Cref{sec:exp-lastq}, we find that even using the \emph{final subgoal critic} can lead to strong performance in many benchmarks. We conclude that \Vadar's benefits come as much from representation learning induced by denser critic supervision as they do from explicit reward shaping  during policy extraction.
\end{AIbox}

\subsection{Subgoal Reward Shaping Does \emph{Not} Reliably Enable Long-Horizon Learning}
\label{sec:exp-reward-shaping}

\begin{figure}[!htb]
    \centering
    \includegraphics[width=\linewidth]{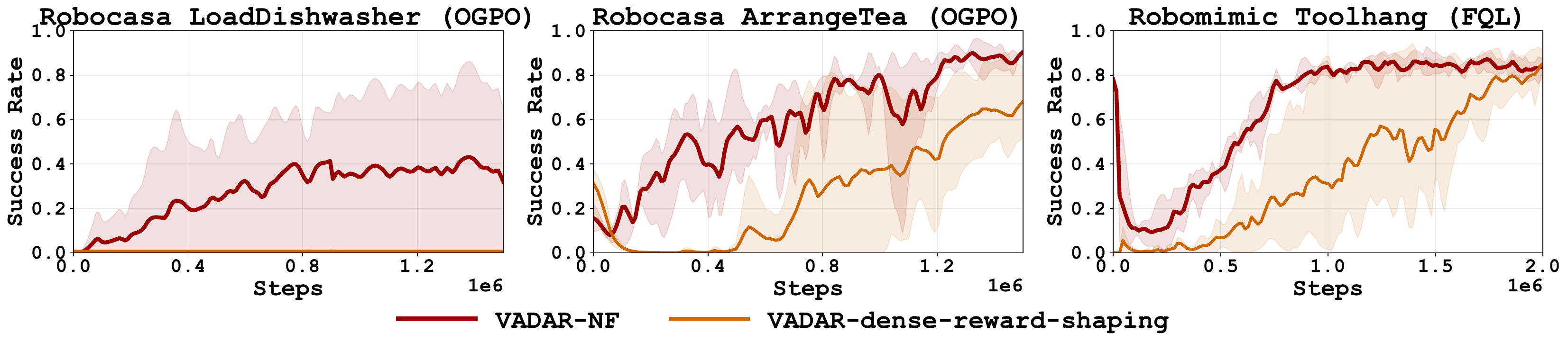}
    \caption{Subgoal reward shaping ablation. \VadarNF{} against \VadarShaped{}, which sums the per-subgoal rewards into a single shaped reward before the Bellman backup and trains a standard scalar critic on it, on \Robocasa{} with \OGPO{} and \Robomimic{} \ToolHang{} with \FQL{}.}
    \label{fig:dense-shaping}
\end{figure}

\Cref{sec:reward-shaping-theory} establishes that, for fixed weights and true Q-functions under a common policy, a weighted sum of subgoal values equals the value of the correspondingly weighted reward. This identity does not guarantee equivalent learning with state-dependent weights or clipped, approximate critics. This suggests a natural alternative to \Vadar{}: aggregate the per-subgoal rewards before the Bellman backup, and train a standard scalar critic and policy on the resulting shaped reward. We call this variant \VadarShaped{}.

\Cref{fig:dense-shaping} shows that this equivalence does not translate to learning in practice. \VadarShaped{} fails to learn on the harder \Robocasa{} tasks and is substantially less sample efficient than \VadarNF{} elsewhere. Since \VadarShaped{} combines the subgoal rewards before the Bellman update, the critic is trained on a single scalar target rather than learning each subgoal value separately. Together with the \LastQ{} results, this suggests that \Vadar{}'s gains come from multi-resolution temporal supervision, rather than from reward shaping alone.

\section{Alternative critic parameterizations yield limited additional benefit}
\label{sec:exp-critic-param}

\begin{figure}[!htbp]
    \centering
    \includegraphics[width=\linewidth]{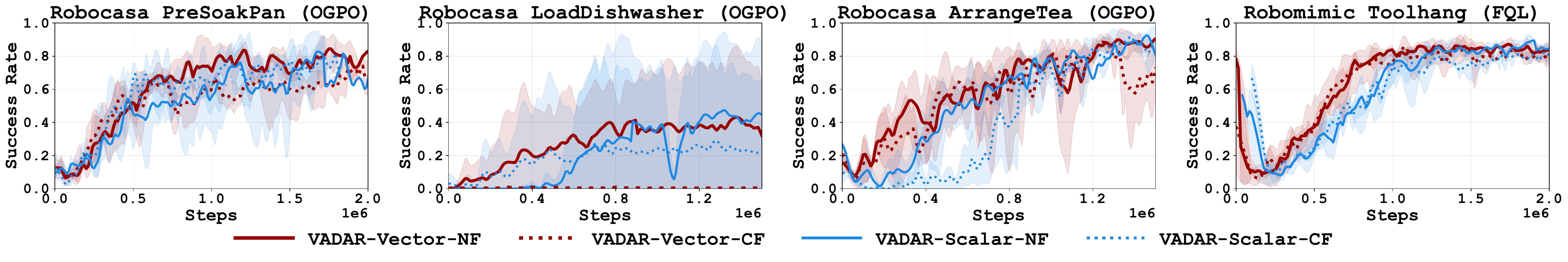}
    \caption{Critic parameterization ablation. \Vadar{} with its goal-conditioned critic $S{\times}A{\times}G \to \R$ (blue) against \VadarVector{}, which uses a vector-valued critic $S{\times}A \to \R^{|G|}$ (red), under next-future (solid) and current-future (dotted) weightings on \Robocasa{} with \OGPO{} and \Robomimic{} \ToolHang{} with \FQL{}.}
    \label{fig:scalar-vs-vector}
\end{figure}

An alternative to \Vadar{}'s goal-conditioned critic $Q_\phi(s,a,g)$, which is evaluated once per subgoal to form the vector of values used for policy extraction, is a vector-valued critic $\bQ_\phi: S{\times}A \to \R^{|G|}$ that predicts all subgoal values in a single forward pass. We call this variant \VadarVector{}.

\Cref{fig:scalar-vs-vector} shows that it performs roughly the same as \Vadar{} across tasks and weightings. This is consistent with the previous subsections: both critics are supervised on every subgoal at every transition, and it is this dense supervision, together with non-myopic policy extraction, that accounts for \Vadar{}'s performance rather than the particular critic parameterization. 

A second alternative is an \emph{iterative} critic, which refines its subgoal values over several passes and can be viewed as simulating a Bellman update inside the critic (\Cref{app:vadar_iterative_critics}). We instantiate this with \MIQ{} (\Cref{app:mip-q}) and again find performance on par with \Vadar{}'s monolithic critic, with the iterative critic training somewhat more stably (\Cref{sec:exp-iterative}).

\FloatBarrier

\Needspace{28\baselineskip}
\section{Combining \Vadar{} with Iterative Critics}
\label{app:vadar_iterative_critics}

\Vadar{} is compatible with critics that estimate values through iterative computation \citep{agrawalla2025floq}. Here we explore how iterative computation and multi-goal prediction could implement an implicit Bellman update during value prediction. Empirically, iterative critics provide limited additional benefit over the monolithic critic (\Cref{sec:exp-critic-param}).

\newcommand{\Niter}{N_{\mathrm{iter}}}
 \vspace{-0.5em}
\subsection{Simulating the Bellman Update via Iterative Critic Parametrizations}
\label{app:iterative-critic-intuition}
 Standard RL critics compute Q-values in a single monolithic forward pass.
 In the context of \Vadar{}, a monolithic critic produces all $|G|$ heads in a single shot, so the subgoal heads never communicate at inference time.
 Here we argue that combining iterative critic parameterizations with \Vadar{} can be viewed as implementing a Bellman update at inference time. 

\par
\Needspace{16\baselineskip}
\begin{wrapfigure}{r}{0.5\textwidth}

  \centering
\includegraphics[width=\linewidth]{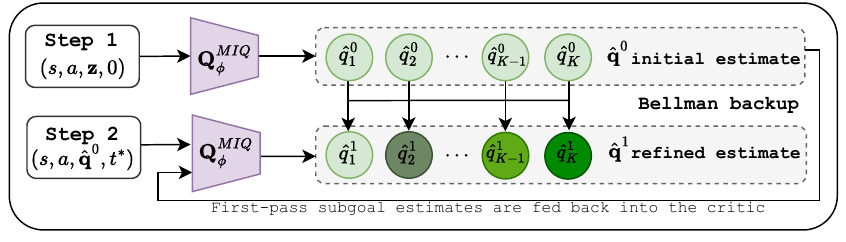}
\caption{Iterative Critic Prediction}
\label{fig:iterative-critic}

\end{wrapfigure}

An iterative  \Vadar{} critic proceeds in steps $j = 0,\dots,\Niter$, producing an estimates $\hat \bq^j \in \R^{G}$ of the critic values per step.  At each step, the critic updates its estimate $\hat \bq^{j+1} \gets \bQ_{\phi}(\bs,\ba,\hat \bq^j,j)$. 
Therefore, each updated $\bq^{k+1}$ can depend on previous estimates of critic values $\bq^k$. Informally, this computation graph can express belief-propogation updates across critic time steps, such as the classical Bellman update \citep{sutton1998reinforcement} which defines the Q-values. We depict this intuition in \Cref{fig:iterative-critic}. For training an iterative critic, we introduce a two-step minimal iterative Q-network (\MIQ) inspired by \citet[MIP]{pan2026adonoisingdispellingmyths} described in \Cref{app:mip-q}, which we anecdotally find more stable to train than flow-based critics ~\citep{agrawalla2025floq}.

\iclrendwrap

\subsection{The \MIQ{} Critic}
\label{app:mip-q}

\Vadar{}'s vector critic synergizes with iterative critic parameterizations: each iteration can propagate information across subgoal heads, so the second iteration's prediction for one subgoal can be conditioned on the first iteration's prediction for the others. We instantiate this with \MIQ{}, a critic-side adaptation of the Minimal Iterative Policy of~\citet{pan2026adonoisingdispellingmyths} that retains the two-step iteration with $t^\star = 0.9$.

\subsubsection{Architecture}
\label{app:mipq_architecture}
A MIQ critic is a single MLP $Q_\phi: S \times A \times \R \times [0,1] \to \R$ that takes a state-action pair $(s,a)$, a scalar $z$ that plays the role of the noise variable in a flow-matching linear interpolant, and a time index $t \in [0,1]$. To recall the flow-matching analogy: a flow model is trained to map the linear interpolant $I_t = t\,x + (1-t)\,z$ between Gaussian noise $z$ and a clean target $x$ at time $t \in [0,1]$ back to $x$, so that at $t = 0$ the input is pure noise and as $t \to 1$ it approaches the target. MIP~\citep{pan2026adonoisingdispellingmyths} retains this interpolant but supervises the network with the same target twice, at $t = 0$ and at $t = t^\star$ (with $t^\star = 0.9$), using a regression loss instead of a velocity-matching loss. MIQ is the same construction with the supervision target $x$ instantiated as the scalar Q-target $\ytarg$: the flow-matching interpolant is replaced by an interpolant in Q-value space, and $z \sim \mathrm{Unif}[-\sigma,\sigma]$ is the noise variable that lives on the noisy end of that interpolant. At inference, MIQ runs two autoregressive steps:
\begin{align}
    \hat q^{0} \;\gets\; Q_\phi(s,a,\, z,\, t=0), \qquad
    \hat q^{1} \;\gets\; Q_\phi(s,a,\, \hat q^{0}, \sigma/10),\, t^\star),
    \label{eq:mipq_inference}
\end{align}
i.e., the second iteration substitutes its own iteration-zero output $\hat q^{0}$ in place of an interpolated input, since $\hat q^{0}$ has been trained to approximate $\ytarg$.

\subsubsection{Training Objective}
\label{app:miq_loss}

Following~\citet[Eq.~(4.4)]{pan2026adonoisingdispellingmyths}, MIQ supervises both iterations against the standard scalar TD target $\ytarg = r + \gamma\, \Qtarg(s', a')$. To train the iteration-one mapping, we cannot feed the network the iteration-zero output (it is a moving target during training); instead, we feed a noise-corrupted version of the supervision signal itself. Concretely, we sample $z \sim \mathrm{Unif}[-\sigma,\sigma]$ and form the linear interpolant
\begin{align}
    z_{t^\star} \;=\; (1-t^\star)\, z \;+\; \ytarg,
\end{align}
which is a noisy copy of $\ytarg$ (smaller noise the larger $t^\star$ is, hence $t^\star$ near $1$). The MIQ critic loss for a single ensemble member is
\begin{align}
    \Lcritic^{\mathrm{MIQ}}(Q_\phi \mid s, a, \ytarg, z) \;=\; \big(Q_\phi(s,a,\, z,\, 0) - \ytarg\big)^2 \;+\; \big(Q_\phi\!\big(s,a,\, z_{t^\star},\, t^\star\big) - \ytarg\big)^2.
    \label{eq:mipq_loss}
\end{align}
The first term trains the iteration-zero pathway to predict $\ytarg$ from a fully random $z$; the second term trains the iteration-one pathway to denoise a near-correct estimate $z_{t^\star}$ back to $\ytarg$. At inference we substitute the iteration-zero output $\hat q^{0}$ for $z_{t^\star}$, which is well-behaved because the iteration-zero pathway has already been trained to produce a good approximation of $\ytarg$.

\subsubsection{Ensembling}
\label{app:mipq_ensembling}

We use an explicit ensemble of $M$ independently parameterized MIQ networks $Q_{\phi_1},\dots,Q_{\phi_M}$ (\texttt{ValueMIPEnsemble} in our codebase). At each update, a single noise draw $z$ is shared across the ensemble; each member produces its own $\hat q^{0,(i)}$ at iteration zero and uses that member's own $\hat q^{0,(i)}$ as the iteration-one input. Aggregation across the ensemble (\Cref{eq:vadar_qtarg}) is unchanged from \Cref{app:critic_ensembling}: we average the iteration-one predictions to form $\Qtarg$.

\subsubsection{\MIQ{} with \Vadar{}}
\label{app:miq_hyperparameters}
We have seen how \MIQ{} can be used as a general critic. Adapting it to our \Vadar{} framework requires minimal modifications. The scalar $Q_\phi : S \times A \times \R \times [0,1] \to \R$ now becomes a vector critic: $\bQ_\phi : S \times A \times \R^{|G|} \times [0,1] \to \R^{|G|}$. The noise variable $\bz \sim \mathrm{Unif}[-\sigma,\sigma]^{|G|}$ also becomes a $|G|$ dimensional vector. And the TD target is the vector $\bytarg \in \R^{|G|}$ from \Cref{eq:vector_td}, and \Cref{eq:mipq_inference} becomes
\begin{align*}
    \hat \bq^{0} \;\gets\; \bQ_\phi(s,a,\, \bz,\, 0), \qquad \hat \bq^{1} \;\gets\; \bQ_\phi(s,a,\, \hat \bq^{0},\, t^\star).
\end{align*}

Because the second pass' input is the full vector $\hat \bq^{0}$, head $g$'s second  estimate depends on every head's previous estimate.

\subsubsection{Policy Extraction via MIQ + \Vadar{}}
\label{sec:exp-iterative}


To validate the hypothesis that iterative critic parameterizations learn implicit Bellman updates over the subgoal DAG, we test \MIQ{}+\Vadar{} on \Libero{} $(|G|=3)$ with \SAC{} and \Robomimic{} \Transport{} $(|G|=7)$ with \OGPO{} as the backbone. We compare the \Vadar{}'s monolithic multi-head critic against \MIQ{} with varying Q-function update-to-data (UTD) ratios. 

\begin{figure}[!htbp]
    \centering
    \includegraphics[width=\linewidth]{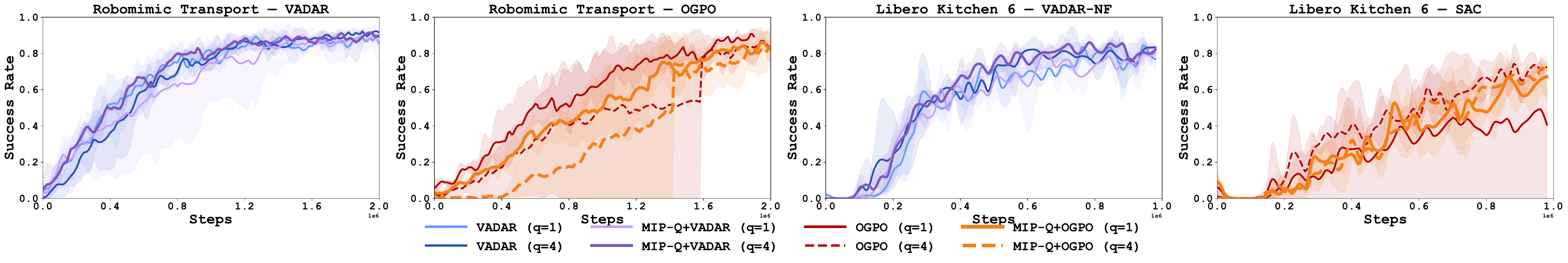}
     \caption{\MIQ{}+\Vadar{} (orange/purple) vs.\ monolithic vector critics (red/blue) at critic UTD $q=1$ (solid) and $q=4$ (dashed); the policy update frequency is held fixed across all runs. The largest \MIQ{} effect is on \OGPO{} \Transport{}, where the monolithic critic degrades sharply at $q=4$ and \MIQ{} restores parity with $q=1$.}
    \label{fig:mipq-grid}
\end{figure}

\colorpar{Results.} Across the four panels in \Cref{fig:mipq-grid}, success rates with \MIQ{} are broadly comparable to the monolithic critic, with a minor improvement in policy extraction for the higher critic UTD ($q=4$) ablation.

\begin{figure}[H]
    \centering
    \includegraphics[width=\linewidth]{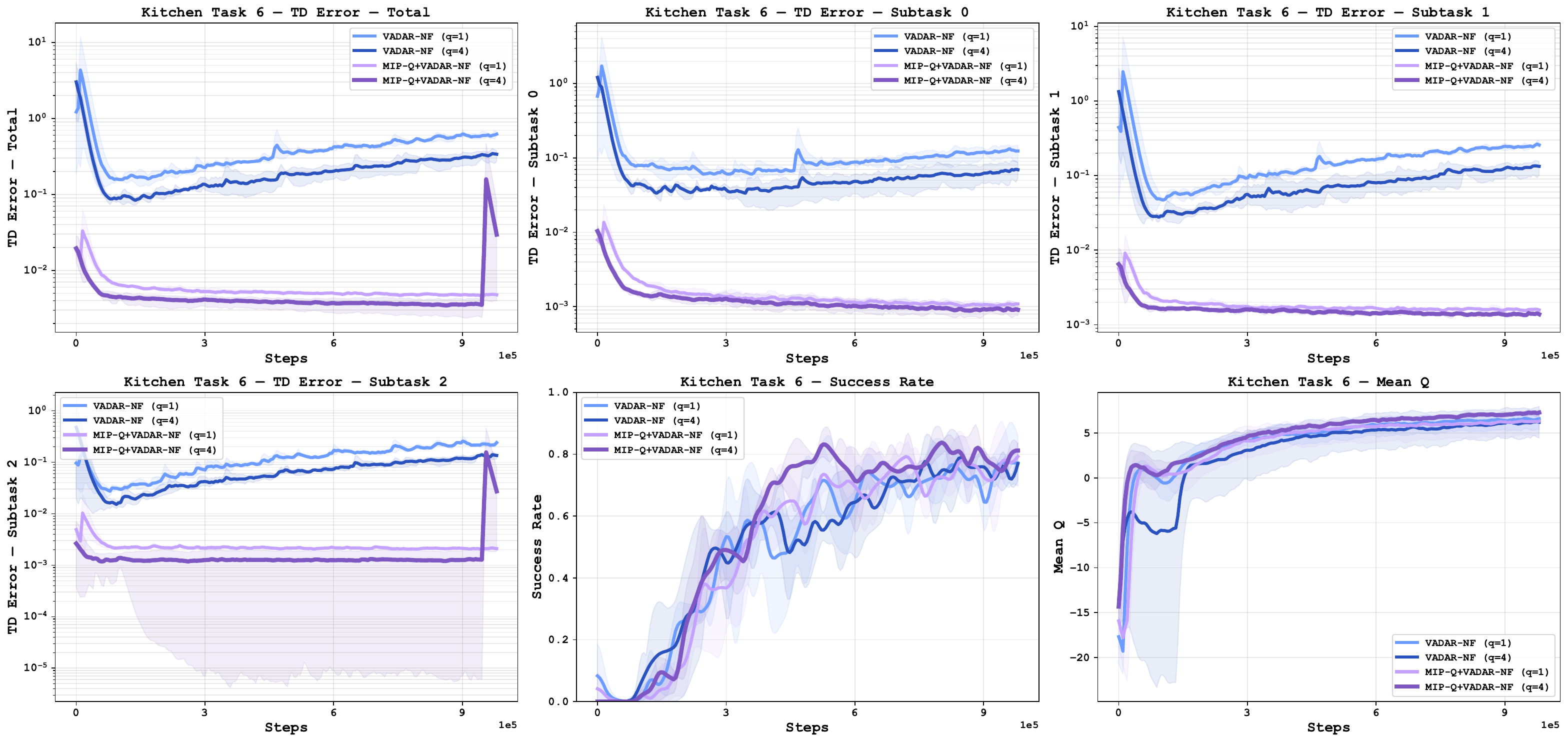}
    \caption{We compare monolithic \Vadar{} critics with \MIQ{}+\Vadar{} at critic UTD $q \in \{1,4\}$, holding the actor update frequency fixed. \MIQ{} substantially reduces total and per-subgoal TD residuals and produces lower-variance mean-$Q$ estimates, while maintaining comparable or slightly improved success rates.}
    \label{fig:td-errors-mipq}
\end{figure}

The more informative differences show up on the critic itself: in \Cref{app:mip-q} (\Cref{fig:td-errors-mipq}), per-subgoal TD residuals on \Libero{} \texttt{KITCHEN\_SCENE6} drop under \MIQ{} across all three subtasks and stay flat throughout training, while the monolithic critic's TD error drifts upward late in training. The mean-Q trajectory under \MIQ{} is more stable as hypothesized due to the implicit message passing down the subgoal DAG. We posit this being more beneficial to use-cases where critic stability even at subgoal level is a large concern.

This view connects to the \LastQ{} ablation of \Cref{sec:exp-lastq}, where policy extraction uses only the final-subgoal value and approaches \VadarNF{} on several tasks. As discussed in \Cref{sec:exp-bellman}, these results are consistent with intermediate targets improving the shared representations used for final-subgoal prediction. \MIQ{} introduces an explicit dependence between successive value estimates: its second pass conditions every subgoal prediction on the full first-pass vector $\hat \bq^{0}$. The comparison therefore tests the additional benefit of iterative refinement beyond shared subgoal supervision; comparable success rates do not establish that the non-iterative critic performs the same internal computation.


\clearpage

\end{document}